\newtheorem{lemma}{Lemma}
\newtheorem{proposition}{Proposition}
\newtheorem{remark}{Remark}
\newtheorem{corollary}{Corollary}
\long\def\symbolfootnote[#1]#2{\begingroup%
\def\thefootnote{\fnsymbol{footnote}}\footnote[#1]{#2}\endgroup}
\newtheorem{theorem}{Theorem}
\newtheorem{definition}{Definition}
\newcommand{\dv}{\mathbf} 
\newcommand{\mc}{\mathcal} 
\newcommand{\mkv}{-\!\!\!\!\minuso\!\!\!\!-}
\newcommand{\mb}{\mathbf} 
\algnewcommand{\Inputs}[1]{%
  \State \textbf{Inputs:}
  \Statex \hspace*{\algorithmicindent}\parbox[t]{.8\linewidth}{\raggedright #1}
}
\algnewcommand{\Initialize}[1]{%
  \State \textbf{initialization}
  \Statex \hspace*{\algorithmicindent}\parbox[t]{.95\linewidth}{\raggedright #1}
}
\algnewcommand{\Outputs}[1]{%
  \State \textbf{output:}
   \parbox[t]{.8\linewidth}{\raggedright #1}
}
\begin{document}
\title{Distributed Variational Representation Learning}
\author{ I\~naki Estella Aguerri  \qquad \quad Abdellatif Zaidi \vspace{-5mm}\\   
\thanks{The results in this paper have been partially presented in \cite{ EZ:IZS:2018}. I. Estella Aguerri is with the Mathematics and Algorithmic Sciences Lab., Paris Research Center, Huawei Technologies France, Boulogne-Billancourt, 92100. A. Zaidi is with Universit\'e Paris-Est, Champs-sur-Marne, 77454, France, and the Mathematics and Algorithmic Sciences Lab., Paris Research Center, Huawei Technologies France, Boulogne-Billancourt, 92100. Emails: inaki.estella@huawei.com, abdellatif.zaidi@u-pem.fr}}

\maketitle

\begin{abstract}
The problem of distributed representation learning is one in which multiple sources of information $X_1,\ldots,X_K$ are processed separately so as to learn  as much information as possible about some ground truth $Y$. We investigate this problem from information-theoretic grounds, through a generalization of Tishby's centralized Information Bottleneck (IB) method to the distributed setting. Specifically, $K$ encoders, $K \geq 2$, compress their observations $X_1,\ldots,X_K$ separately in a manner such that, collectively, the produced representations preserve as much information as possible about $Y$. We study both discrete memoryless (DM) and memoryless vector Gaussian data models. For the discrete model, we establish a single-letter characterization of the  optimal tradeoff between complexity (or rate) and relevance (or information) for a class of memoryless sources (the observations $X_1,\ldots,X_K$ being conditionally independent given $Y$). For the  vector Gaussian model, we provide an explicit characterization of the optimal complexity-relevance tradeoff. Furthermore, we develop a variational bound on the complexity-relevance tradeoff which generalizes the evidence lower bound (ELBO) to the distributed setting. We also provide two algorithms that allow to compute this bound: i) a Blahut-Arimoto type iterative algorithm which enables to compute optimal complexity-relevance encoding  mappings by iterating over a set of self-consistent equations, and ii) a variational inference type algorithm in which the encoding mappings are parametrized by neural networks and the bound approximated by Markov sampling and optimized with stochastic gradient descent. Numerical results on synthetic and real datasets are provided to support the efficiency of the approaches and algorithms  developed in this paper.
\end{abstract}

\IEEEpeerreviewmaketitle


\vspace{-5mm}
\section{Introduction}

The  problem of extracting a \textit{good} representation of data, i.e., one that makes it easier to extract useful information, is at the heart of the design of efficient machine learning algorithms. One important question, which is often controversial in statistical learning theory, is the choice of a ``good'' loss function that measures discrepancies between the true values and their estimated fits. There is however numerical evidence that models that are trained to maximize mutual information, or equivalently minimize the error's entropy, often outperform ones that are trained using other criteria, such as mean-square error (MSE),  higher-order statistics~\cite{E02, PEL00}. 
Although a complete and rigorous justification of the usage of mutual information as cost function in learning is still awaited, recently a partial explanation appeared in~\cite{JCVW15} where the authors show that, under some natural data processing property, Shannon's mutual information uniquely quantifies the reduction of prediction risk due to side information. Along the same line of work, Painsky and Wornell~\cite{PW18} show that, for binary classification problems, by minimizing the logarithmic-loss (log-loss) one actually minimizes an upper bound to any choice of loss function that is smooth, proper (i.e., unbiased and Fisher consistent) and convex. Perhaps, this justifies partially why mutual information (or, equivalently, the corresponding loss function which is the log-loss fidelity measure) is widely used in learning theory and has already been adopted in many algorithms in practice such as the \textit{infomax} criterion~\cite{L88}, the tree-based algorithm of~\cite{Q14} or the well known Chow-Liu algorithm~\cite{CL68} for learning tree graphical models, with  applications in genetics~\cite{AMPB08}, image processing~\cite{PMV03}, computer vision~\cite{VWW97} and others. The log-loss measure also plays a central role in the theory of prediction~\cite[Ch. 09]{C-BL06}, where it is often referred to as the \textit{self-information} loss function, as well as in Bayesian modeling~\cite{LC06} where priors are usually designed so as to maximize the mutual information between the parameter to be estimated and the observations. 

The goal of learning, however, is not merely to learn model parameters accurately for previously seen data. Rather, in essence, it is the ability to successfully apply rules that are extracted from previously seen data to characterize new unseen data. This is often captured through the notion of ``generalization error". The generalization capability of a learning algorithm hinges on how sensitive is the output of the algorithm to modifications of the input dataset, i.e., its \textit{stability}~\cite{BE02, SSSS10}. In the context of deep learning, it can be seen as a measure of how much the algorithm overfits the model parameters to the seen data. In fact, efficient algorithms should strike a good balance among their ability to fit training dataset and that to generalize well to unseen data. In statistical learning theory~\cite{C-BL06}, such a dilemma is reflected through that the minimization of the ``population risk" (or ``test error" in the deep learning literature) amounts to the minimization of the sum of the two terms that are generally difficult to minimize simultaneously, the ``empirical risk" on the training data and the generalization error. In order to prevent over-fitting, regularization methods can be employed, which include parameter penalization, noise injection, and averaging over multiple models trained with distinct sample sets. Although it is not yet very well understood how to optimally control model complexity, recent works~\cite{XR17,RZ15} show that the generalization error can be upper-bounded using the mutual information between the input dataset and the output of the algorithm. This result actually formalizes the intuition that the less information a learning algorithm extracts from the input dataset the less it is likely to overfit; and justifies the use of mutual information also as a regularizer term. 

A growing body of works focuses on developing learning rules and algorithms using information theoretic approaches, e.g., see \cite{Tishby99theinformation, P10, YP18, YJP18, KS18 , UE-AZ17a} and references therein. Most relevant to this paper is the Information Bottleneck (IB) method of~\cite{Tishby99theinformation} which readily and elegantly captures the above mentioned viewpoint of seeking the right balance between data fit and generalization by using the mutual information both as a cost function and as a regularizer term. Specifically, IB formulates the problem of extracting the relevant information that some signal $X \in \mathcal{X}$ provides about another one $Y \in \mathcal{Y}$ that is of interest as that of finding a representation $U$ that is maximally informative about $Y$ (i.e., large mutual information $I(U;Y)$) while being minimally informative about~$X$ (i.e., small mutual information $I(U;X)$). In the IB framework, $I(U;Y)$ is referred to as the \textit{relevance} of $U$ and $I(U;X)$ is referred to as the \textit{complexity} of $U$, where  complexity here is measured by the minimum description length (or rate) at which the observation is compressed. 
 Accordingly, the performance of learning with the IB method and the optimal mapping of the data $X$ to $U$ are found by solving the Lagrangian formulation 
\begin{equation}
\mc L^*_{\mathrm{IB},s}:=\max_{P_{U|X}} I(U;Y)-s I(U;X),\label{eq:IBCriteria}
\end{equation}
where $P_{U|X}$ is a stochastic map that assigns the observation $X$ to a latent variable $U$ from which $Y$ is inferred, and $s$ is the Lagrange multiplier. There is an implicit tradeoff between complexity and relevance: when the description length (complexity) is low, the largest relevant information  captured by $U$ from $Y$ is restricted, and the other way around. The simultaneously achievable relevance-complexity pairs $(I(U;Y),I(U;X))$ define a region characterized in~\cite{Tishby99theinformation} and the resulting optimal mappings $P_{U|X}$ are made to perform at different points of the relevance-complexity region by considering different values of $s$.

Direct optimization of~\eqref{eq:IBCriteria} to obtain the optimal mappings $P_{U|X}$ is generally challenging. Instead, a tight variational bound can be optimized, which results in the optimization of a generalized version of the~\eqref{eq:VIBCriteria}  has the evidence lower bound (ELBO)~\cite{KW2013:AutoEncoding} (and the $\beta$-VAE bound \cite{higgins2016beta}) , used, e.g., for variational inference:
\begin{align}\label{eq:VIBCriteria}
\max_{P_{U|X},Q_{Y|U},Q_U} \mathbb{E}[\log Q_{Y|U}]\!-\!s D_{\mathrm{KL}}(P_{X|U}\|Q_U),
\end{align}
where $Q_{Y|U}$ and $Q_U$ are variational approximations of the optimal decoder and the latent space, and $D_{\mathrm{KL}}(\cdot\|\cdot)$ is the Kullback-Leiber divergence. 

{Different optimization methods have been proposed to optimize~\eqref{eq:VIBCriteria}. 
An iterative Blahut-Arimoto (BA) type algorithm, that converges to a stationary solution of~\eqref{eq:IBCriteria} is proposed in~\cite{Tishby99theinformation}, which results in an algorithm  similar to the expectation maximization (EM) algorithm~\cite{DLR77} for problems with discrete \cite{Slonim:2000, Slonim20122005} and  Gaussian data \cite{GlobersonTishby:Techical:Gaussian, journals/jmlr/ChechikGTW05}.  However, this algorithm generally requires a good estimation of the data distribution or perfect knowledge of it, and becomes too complex for general high-dimensional data.  An alternative strategy, which only requires data samples, has been proposed in~\cite{CMT:16, AFDM:ICLR:2017,AS:PML:18} to overcome this limitation, based on approximating the bound~\eqref{eq:VIBCriteria} by parameterizing the encoder, decoder and prior distributions with deep neural networks (DNNs), using Monte-Carlo sampling and optimizing the DNN to maximize it. 
This approach is essentially that used for variational inference~\cite{RM14} and variational autoencoders (VAE)~\cite{KW2013:AutoEncoding}. }

 The IB approach,
   has found remarkable applications in supervised and unsupervised learning problems such as classification, clustering and prediction. 
 Perhaps key to the analysis, and theoretical development, of the IB method is its elegant connection with information-theoretic rate-distortion problems, for it is now well known that the IB problem is essentially a remote source coding problem~\cite{DT62, Witsenhausen1975} in which the distortion is measured under logarithmic loss. Recent works show that this connection turns out to be useful for a better understanding of deep neural networks (DNN)~\cite{Shwartz-ZivT17}, the emergence of invariance and disentanglement in DNN~\cite{AS17}, the minimization of PAC-bayesian bounds on the test error~\cite{McAllester13, AS17}. Other connections, that are more intriguing, exist also with seemingly unrelated problems such as privacy and hypothesis 
testing~\cite{Caire:ISIT:2016, Tian:IT:2008, SGC2018} or multiterminal networks with oblivious relays~\cite{EZCS:IT:2017}. We close this section by mentioning that the abstract viewpoint of IB seems also instrumental towards a better understanding of the so-called \textit{representation learning} \cite{bengio2013representation}, which is an active research area in machine learning that focuses on identifying and disentangling the underlying explanatory factors that are hidden in the observed data in an attempt to render learning algorithms less dependent on feature engineering. We also point out that there exists an extensive literature on building optimal estimators of information quantities (e.g. entropy, mutual information), as well as their Matlab/Python implementations, including in the high-dimensional regime, 
 e.g.,~\cite{P03, JVYW15,  CMT:16, AFDM:ICLR:2017,AS:PML:18, peng2018variational,dai2018compressing} and references therein.

\begin{figure}[t!]
\centering
\includegraphics[width=0.695\textwidth]{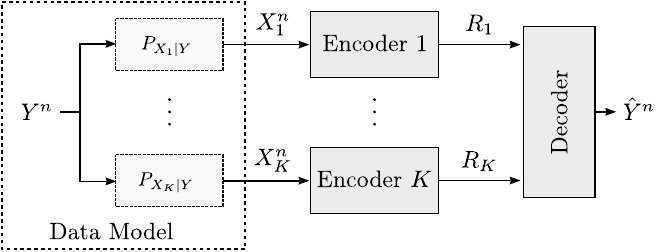}
\caption{A model for distributed learning, e.g., multi-view learning.} 
\label{fig:Schm}
\vspace{-2mm}
\end{figure}

In this paper, we study the distributed learning problem as depicted in Figure~\ref{fig:Schm}. Here, $Y$ is the signal to be predicted and $(X_1,\hdots,X_K)$ are $K$ views of it that could each be relevant to understand one or more aspects of $Y$. The observations (or views) could be distinct or redundant. We make the assumption that $(X_1,\hdots,X_K)$ are  independent given $Y$. This assumption holds in many practical scenarios. For example,  the reader may think of $(X_1,...,X_K)$ as being independent noisy versions of $Y$. The model generalizes the aforementioned centralized learning setting to the distributed setting. By opposition to the centralized case, however, encoders should strike a suitable tradeoff between data fit and generalization \textit{collectively}, not individually. 

\vspace{-2mm}
\subsection{Example: Multi-view Learning}

In many data analytics problems, data is collected from various sources of information or feature extractors; and is intrinsically \textit{heterogenous}.  For example, an image can be identified by its color or texture features; and a document may contain text and images. Conventional machine learning approaches concatenate all available data into one big row vector (or matrix) on which a suitable algorithm is then applied. Treating different observations as a single source might cause overfitting and is not physically meaningful because each group of data may have different statistical properties. Alternatively, one may partition the data into groups according to samples homogeneity, and each group of data be regarded as a separate \textit{view}. This paradigm, termed \textit{multi-view learning}~\cite{XTX13, wang2016deep}, has received growing interest; and various algorithms exist, sometimes under references such as \textit{co-training}~\cite{BM98,DFU11,KD11,GA11}, \textit{multiple kernel learning}~\cite{GA11} and \textit{subspace learning}~\cite{JSD10}. By using distinct encoder mappings to represent distinct groups of data, and jointly optimizing over all mappings to remove redundancy, multiview learning offers a degree of flexibility that is not only desirable in practice but is likely to result in better learning capability. Actually, as Vapnik shows in~\cite{V13}, local learning algorithms produce less errors than global ones. Viewing the problem as that of function approximation, the intuition is that it is usually non-easy to find a unique function that holds good predictability properties in the entire data space. 

Besides, the distributed learning of Figure~\ref{fig:Schm} clearly finds application in all those scenarios in which learning is performed collaboratively but distinct learners either only access subsets of the data (e.g., due to physical constraints) or they access independent noisy versions of the entire dataset. Two such examples are Google Goggles and Siri in which the locally collected data is processed on clouds.

\subsection{Main Contributions}
 
We extend Tishby's centralized IB method to the distributed learning setting shown in  Figure~\ref{fig:Schm}. We study the performance of this model in the \textit{information plane}; and we characterize optimal complexity-relevance tradeoffs for a class of discrete memoryless sources as well as for Vector Gaussian sources. Furthermore,  we develop a variational bound on the optimal information-rate that can be seen as a generalization of IB method, the ELBO and the $\beta$-VAE criteria~\cite{higgins2016beta,alex2017fixing,alex2018therml} to the distributed setting. We also provide two algorithms which are trained by optimizing this bound. Specifically,  the main contributions of this paper are: 
\begin{itemize} 
\item In Section~\ref{sec:OptimalTradeoff}, we establish a single-letter characterization of optimal tradeoffs between complexity and relevance for the distributed learning model of Figure~\ref{fig:Schm} for a class of discrete sources for which the observations are independent conditionally on the target source. In doing so, we exploit the connection with the distributed Chief Executive Officer (CEO) source coding problem under logarithmic-loss distortion measure studied in~\cite{Courtade2014LogLoss}. 

\item In Section~\ref{sec:VariationalBound}, we consider the problem of maximizing relevance under a constraint on the sum complexity. We derive a variational bound which generalizes the IB cost and the ELBO  to the distributed setting and involves a novel regularization term to account for the joint complexity of the model.

\item  In Section~\ref{sec:Gauss}, we study a memoryless vector Gaussian data model. For this model, we find an explicit analytic characterization of optimal tradeoffs between complexity and relevance. The result generalizes the Gaussian information bottleneck projections~\cite{GlobersonTishby:Techical:Gaussian, journals/jmlr/ChechikGTW05} to the distributed learning scenario.

\item In Section~\ref{sec:Blahut-Algorithms} we develop two algorithms that allow to optimize the derived variational bound:
: 
1) a Blahut-Arimoto~\cite{Blahut:IT:1972}  type iterative algorithm which enables to compute optimal relevance-complexity encoding mappings by iterating over a set of self-consistent equations and that is most useful when distributions are known or can be estimated with high accuracy, and 2) a variational inference type algorithm in which the the encoders, the decoder, and the prior distributions are parametrized by DNNs and the bound approximated by Monte-Carlo sampling and optimized with stochastic gradient descent. This algorithm makes usage of Kingma et al.'s reparametrization trick~\cite{KW2013:AutoEncoding} and can be seen as a generalization of the variational information bottleneck algorithm in~\cite{AFDM:ICLR:2017} to the distributed case.  We also particularize the algorithms for the memoryless vector Gaussian data model. 


\item  We provide numerical results on synthetic and real datasets that support the efficiency of the approaches and algorithms that are developed in this paper. 
\end{itemize}
 \vspace{-3mm}
\subsection{Notation}
Throughout, upper case letters  denote random variables, e.g., X;  lower case letters denote realizations of random variables, e.g., $x$; and calligraphic letters denote sets, e.g., $\mathcal{X}$. The cardinality of a set is denoted by $|\mc X|$. For a random variable $X$ with probability mass function (pmf) $P_{X}$, we use $P_{X}(x)=p(x)$, $x\in \mc X$ for short. Boldface upper case letters denote vectors or matrices, e.g., $\dv X$, where context should make the distinction clear.
We abbreviate a sequence $(X_1, X_2, \ldots, X_n)$ of $n$ random variables by $X_n$, and we denote the interval $(X_k, X_{k+1}, \cdots , X_j )$ by $X^j_k$. If the
lower index is equal to $1$, it will be omitted when there is no ambiguity (e.g.,
$X^j\triangleq X^j_1$).
 For random variables $(X_1,X_2,\hdots)$ and a set of integers $\mc K\subseteq \mathds{N}$, $X_{\mc K}$ denotes the set of variables with indices in  $\mc K$, i.e., $X_{\mc K}=\{X_k:k \in \mc K \}$. If $\mc K = \emptyset$, $X_{\mc K}=\emptyset$. For $k\in \mc K$ we let $X_{\mathcal{K}/k} = (X_1,\hdots,X_{k-1},X_{k+1},\hdots,X_K)$, with $X_0=X_{K+1}=\emptyset$. Also, for zero-mean random vectors $\dv X$ and $\dv Y$, the quantities $\mathbf{\Sigma}_{\mathbf{x}}$, $\mathbf{\Sigma}_{\mathbf{x},\mathbf{y}}$ and $\mathbf{\Sigma}_{\mathbf{x}|\mathbf{y}}$  denote respectively the covariance matrix of the vector $\dv X$, the covariance matric of vector $(\dv X,\dv Y)$ and the conditional covariance matrix of $\dv X$, conditionally on $\dv Y$, i.e., $\mathbf{\Sigma}_{\mathbf{x}} = \mathrm{E}[\mathbf{XX}^H]$ $\mathbf{\Sigma}_{\mathbf{x},\mathbf{y}}:= \mathrm{E}[\mathbf{XY}^H]$, and  $\mathbf{\Sigma}_{\mathbf{x}|\mathbf{y}} = \mathbf{\Sigma}_{\mathbf{x}}-\mathbf{\Sigma}_{\mathbf{x},\mathbf{y}}\mathbf{\Sigma}_{\mathbf{y}}^{-1}\mathbf{\Sigma}_{\mathbf{y},\mathbf{x}}$. Finally, for two probability measures $P_X$ and $Q_X$ on  $X \in \mc X$, the relative entropy or Kullback-Leibler divergence is denoted as $D_{\mathrm{KL}}(P_X \| Q_X)$. That is, if $P_X$ is absolutely continuous with respect to $Q_X$, $P_X \ll Q_X$, (i.e., for every $x \in \mc X$ if $P_X(x) >0$ then $Q_X(x) >0 )$, $D_{\mathrm{KL}}(P_X \| Q_X) = \mathbb{E}_{P_X}[\log(P_X(X)/Q_X(X))]$, otherwise $D_{\mathrm{KL}}(P_X \| Q_X)=\infty$.

\section{Problem Formulation}\label{sec:System}

Consider the distributed learning model shown in Figure~\ref{fig:Schm}. Let an integer $K \geq 2$ be given, and denote $\mc K=\{1,\hdots,K\}$. Also, let $(X_1,\hdots,X_K,Y) $ be a tuple of random variables which have joint probability mass function $P_{X_{\mc K},Y}(x_{\mc K},y) := P_{X_1,\hdots,X_K,Y}(x_1,\hdots,x_K,y)$ for $(x_1,\ldots, x_K) \in \mc{X}_1\times \hdots \times \mc{X}_K$ and $y \in \mc Y$, where $\mc X_k$ for $k \in \mc K$ designates the alphabet of $X_k$ and $\mc Y$ designates the alphabet of $Y$, all assumed to be finite\footnote{For simplicity, we assume finite alphabets. The results presented in this paper also extend to random variables with continuous alphabets using standard tools~\cite{elGamal:book}. Results for the vector Gaussian model, which has continuous alphabet are also provided.}. Throughout, it will be assumed that the following Markov chain holds for all $k \in \mc K$,  
\begin{align}
X_{k} \mkv Y \mkv X_{\mathcal{K}/k}, 
\label{eq:MKChain_pmf}
\end{align}
i.e., $p(x_{\mc K},y) = p(y)\prod_{k=1}^{K}p(x_k|y)$ for $x_k\in \mathcal{X}_K$, $y \in \mathcal{Y}$.

 The problem of distributed learning at hand seeks to characterize how accurate one can estimate the target variable $Y$ from the observations $(X_1,\hdots,X_K)$ when those are processed separately, each by a distinct encoder. More specifically, let a training dataset $\{(X_{1,i},\ldots, X_{K,i},Y_i)\}_{i=1}^n$ consisting of $n$ independent, identically distributed (i.i.d.) random samples drawn from the joint distribution $P_{X_{\mc K},Y}$ be given. Encoder $k \in \mc K$ only observes the sequence $X^n_k$; and processes it to generate a description $J_k=\phi_k(X_k^n)$ according to some (possibly stochastic) mapping
\begin{equation}
\phi_k: \mathcal{X}_k^n\rightarrow \mathcal{M}^{(n)}_k,
\label{encoding-mapping-encoder-k}
\end{equation}
where $\mathcal{M}^{(n)}_k$ is an arbitrary set of descriptions. The range of allowable description sets will be specified below. A (possibly stochastic) decoder $\psi(\cdot)$ collects all descriptions $J_{\mathcal{K}} = (J_1,\ldots,J_K)$ and returns an \mbox{estimate $\hat{Y}^n$ of $Y^n$ as}
\begin{equation}
\psi: \mathcal{M}^{(n)}_1\times \hdots \times \mathcal{M}^{(n)}_K \rightarrow \mathcal{\hat{Y}}^n.
\label{decoding-mapping}
\end{equation}
The accuracy of the estimation $\hat{Y}^n$ is measured in terms of the~\textit{relevance}, defined as the information that the descriptions $\phi_1(X_1^n),\hdots,\phi_K(X_K^n)$ \textit{collectively} preserve about $Y^n$, as measured by Shannon mutual information
\begin{align}
\Delta^{(n)}(P_{X_{\mc K},Y}):= \frac{1}{n} I_{P_{X_{\mc K},Y}}(Y^n;\hat{Y}^n)
\label{eq:relevance_def},
\end{align}
where $\hat{Y}^n=\psi(\phi_1(X_1^n),\hdots,\phi_K(X_K^n))$ and the subscript $P_{X_{\mc K},Y}$ indicates that the mutual information is computed under the joint distribution $P_{X_{\mc K},Y}$. 

If the encoding mappings $\{\phi_k\}_{k=1}^K$ are un-constrained, the maximization of the right hand sight (RHS) of~\eqref{eq:relevance_def} based on the given training dataset usually results in overfitting. A better generalization capability is generally obtained by constraining the \textit{complexity} of the encoders. In what follows, we do so by requiring that the range of the encoding functions at all encoders, i.e., the cardinality of the set of descriptions $\mathcal{M}^{(n)}_k$, are restricted in size. This is the so-called \textit{minimum description length} complexity measure, {often used in the learning literature to limit the  description lenght of the weights of neural networks~\cite{Hinton:1993:KNN}. A connection between the use of the minimum description complexity for limiting the description length of the input encoding as described and that of the weights is given in \cite{AS17}. }  Specifically, the encoding function $\phi_k(\cdot)$ at Encoder $k \in \mc K$ needs to satisfy  
\begin{equation}
R_k \geq \frac{1}{n} \log |\phi_k(X_k^n)| \quad \text{for all}\:\:\: X^n_k \in \mc X^n_k.
\label{eq:Complexity_definition}
\end{equation}

The characterization of the optimal performance for the distributed learning  problem that we study in this paper can be cast as that of finding the region of all simultaneously achievable relevance-complexity tuples. 
\begin{definition}
A tuple $(\Delta,R_1,\ldots, R_K)$ is said to be achievable if 
there exists a training set length~$n$, encoders $\phi_k$ for $k=1,\ldots, K$ and a decoder $\psi$, 
such that
\begin{align}
\Delta & \leq \frac{1}{n} I_{P_{X_{\mc K},Y}}\Big(Y^n; \psi(\phi_1(X_1^n),\hdots,\phi_K(X_K^n))\Big) \label{eq:DeltaConstr}\\
R_k &\geq \frac{1}{n}\log |\phi_k(X_k^n)| \quad \text{for all}\:\:\: k \in \mc K.\label{eq:RConstr}
\end{align}
The relevance-complexity region $\mathcal{RI}_{\mathrm{DIB}}$ is given by the closure of all achievable tuples $(\Delta,R_1,\ldots, R_K)$. \qed
\end{definition}

Equivalently, one may seek to characterize the mappings $\{\phi_k\}_{k=1}^{K}$ and $\psi$ that maximize the relevance, as given in the RHS of~\eqref{eq:relevance_def}, under the complexity constraint~\eqref{eq:Complexity_definition}. The function 
\begin{align}
\Delta(R_{\mc K},&P_{X_{\mc K},Y})= \max_{\{\phi_k\}_{k=1}^{K}, \psi} \Delta^{(n)}(P_{X_{\mc K},Y})\label{eq:Learning_Problem}\\
&\text{such that } n R_k\geq \log |\phi_k(X_k^n)| \quad \text{for all}\:\: k \in \mc K,
\end{align}
 gives the largest \textit{relevance} for prescribed complexity levels as measured by the rate tuple $R_{\mc K}=(R_1,\hdots,R_K)$, and will be referred to hereafter as the \textit{relevance-complexity function}.

The main the goal of this paper is to learn the encoders and decoders that achieve any relevance-complexity tuple that lies on the boundary on the region $\mc {RI}_{\mathrm{DIB}}$ of optimal relevance-complexity tradeoffs. To that end, another goal of this work is to 
characterize the region $\mc {RI}_{\mathrm{DIB}}$, for both discrete and Gaussian data models. In some cases, for the ease of the exposition, we will be content with the characterization of the relevance-complexity function $\Delta(R_{\mc K},P_{X_{\mc K},Y})$.

\section{Relevance-Complexity Tradeoffs}
\subsection{Relevance-Complexity Region}\label{sec:OptimalTradeoff}

In this section we first characterize the optimal relevance-complexity region for the distributed learning model of Figure~\ref{fig:Schm}. It is now well known that the IB problem in~\eqref{eq:IBCriteria} is essentially a remote point-to-point source coding problem~\cite{DT62} in which distortion is measured under the logarithm loss (log-loss) fidelity criterion~\cite{HT07}. That is, rather than just assigning a deterministic value to each sample of the source, the decoder gives an assessment of the degree of confidence or reliability on each estimate. Specifically, given the output description $j=\phi(x^n)$ of the encoder, the decoder generates a soft-estimate $\hat{y}^n$ of $y^n$ in the form of a probability distribution over $\mc Y^n$, i.e., $\hat{y}^n=\hat{P}_{Y^n|J}(\cdot)$.  The incurred discrepancy between $y^n$ and the estimation $\hat{y}^n$ under log-loss for the observation $x^n$ is then given by     
\begin{equation}
\ell_{\mathrm{log}}(y^n,\hat{y}^n) = \frac{1}{n}\log\frac{1}{\hat{P}_{Y^n|J}(y^n|\phi(x^n))},
\label{eq:logLoss}
\end{equation}
where $\hat{P}_{Y^n|J}(y^n|\phi(x^n))$ is the value of this distribution for $y^n\in \mc Y^n$ evaluated for  $j=\phi(x^n)$, $x^n\in \mc X^n$. { The optimal tradeoff between complexity and relevance for the IB problem are characterized by the region given by the union of all relevance-complexity pairs $(\Delta, R)\in \mathds{R}_{+}^{2}$ satisfying}
\begin{align}
\Delta\leq I(Y;U)\quad\text{and}\quad R\geq I(X;U),
\end{align}
{for some pmf $P_{U|X}$ with joint distribution of the form $p_Y(y)p_{X|Y}(x|y)p_{U|X}(u|x)$, where $U$ is a latent representation of $X$. The boundary of this region is equivalent to the one described by the IB principle in \eqref{eq:IBCriteria} if solved for all $s$.}

Likewise, the distributed learning model in Figure~\ref{fig:Schm} is essentially a $K$-encoder CEO source coding problem under log-loss distortion, in which the decoder is interested in a soft estimate of $Y$ from the descriptions $J_{\mc K}$ generated from the observations $X_1,\ldots, X_K$.  A shown in Section~\ref{app:Comp_Rel_region_DM}, the equivalence of the two problems allows us to characterize the complexity-relevance region $\mc{RI}_{\mathrm{DIB}}$ through the rate-distortion region of the $K$-encoder CEO problem, which has been recently characterized in~\cite[Theorem 10]{Courtade2014LogLoss}.

\begin{theorem}\label{th:Comp_Rel_region_DM}
The relevance-complexity region $\mc {RI}_{\mathrm{DIB}}$ of the distributed learning problem with  $P_{X_{\mc K},Y}$ for which the Markov chain \eqref{eq:MKChain_pmf} holds, is given by the union of all tuples $(\Delta, R_1,\ldots, R_K)\in \mathds{R}_{+}^{K+1}$ satisfying for all $\mc S\subseteq \mc K$, 
\begin{align}\label{eq:ComplexityRelevanceFunction}
\Delta\leq \sum_{k\in \mathcal{S}} [R_k\!-\!I(X_{k};U_{k}|Y,T)]  + I(Y;U_{\mathcal{S}^c}|T),
\end{align}
for some set of pmfs $\dv P:=\{ P_{U_1|X_1,T},\ldots, P_{U_K|X_K,T}, P_T\}$
with joint distribution of the form
\begin{align}
p_T(t) p_Y(y)\prod_{k=1}^K p_{X_k|Y}(x_{k}|y)\prod_{k=1}^{K}p_{U_k|X_k,T}(u_k|x_k,t).\label{eq:distributionFactortization}
\end{align}
\end{theorem}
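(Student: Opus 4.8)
The plan is to exploit the equivalence, announced just before the statement, between the distributed learning problem of Figure~\ref{fig:Schm} and the $K$-encoder CEO source coding problem under logarithmic loss, whose rate--distortion region is characterized in~\cite[Theorem 10]{Courtade2014LogLoss}. Concretely, I would show that $\mathcal{RI}_{\mathrm{DIB}}$ is the image of the CEO rate--distortion region $\mathcal{RD}_{\mathrm{CEO}}$ under the affine change of variable $\Delta \leftrightarrow H(Y)-D$, and then recast the resulting constraints into the form~\eqref{eq:ComplexityRelevanceFunction} using the conditional-independence structure~\eqref{eq:MKChain_pmf}. The complexity constraints~\eqref{eq:RConstr} on the cardinalities $|\phi_k(X_k^n)|$ coincide verbatim with the CEO rate constraints, so no work is needed there; all the content lies in relating relevance to log-loss distortion.

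The key step is a single lemma: for any encoders $\{\phi_k\}$ and any decoder producing a soft estimate $\hat{Y}^n=\hat{P}_{Y^n|J_{\mathcal K}}(\cdot)$, the relevance and the expected log-loss distortion $D=\mathbb{E}[\ell_{\mathrm{log}}(Y^n,\hat{Y}^n)]$ satisfy $\tfrac1n I(Y^n;\hat{Y}^n)\le H(Y)-D$, with equality when the decoder outputs the true posterior $P_{Y^n|J_{\mathcal K}}$. I would prove this by writing $D=\tfrac1n\mathbb{E}[\log 1/\hat{P}_{Y^n|J_{\mathcal K}}(Y^n|J_{\mathcal K})]$ and invoking Gibbs' inequality, $D\ge \tfrac1n H(Y^n|J_{\mathcal K})$, together with the identity $\tfrac1n I(Y^n;J_{\mathcal K})=H(Y)-\tfrac1n H(Y^n|J_{\mathcal K})$ (using that $Y^n$ is i.i.d.) and the data-processing bound $I(Y^n;\hat{Y}^n)\le I(Y^n;J_{\mathcal K})$. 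This establishes that maximizing relevance over decoders is precisely the problem of minimizing log-loss distortion, with optimal values tied by $\Delta^\star=H(Y)-D^\star$; hence a tuple $(\Delta,R_{\mathcal K})$ is achievable for the distributed learning problem if and only if $(H(Y)-\Delta,R_{\mathcal K})\in\mathcal{RD}_{\mathrm{CEO}}$.

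It then remains to substitute $D=H(Y)-\Delta$ into the CEO region of~\cite[Theorem 10]{Courtade2014LogLoss}, which for the present source reads, for all $\mathcal S\subseteq\mathcal K$, $\sum_{k\in\mathcal S}R_k \ge I(X_{\mathcal S};U_{\mathcal S}\,|\,U_{\mathcal S^c},Y,T)+H(Y\,|\,U_{\mathcal S^c},T)-D$, for auxiliaries with the factorization~\eqref{eq:distributionFactortization}. Two simplifications finish the argument: first, the Markov chain~\eqref{eq:MKChain_pmf} renders the $X_k$, and hence the $U_k$, conditionally independent given $(Y,T)$, so that $I(X_{\mathcal S};U_{\mathcal S}\,|\,U_{\mathcal S^c},Y,T)=\sum_{k\in\mathcal S}I(X_k;U_k|Y,T)$; second, since $T$ is independent of $Y$ we have $H(Y)=H(Y|T)$, whence $H(Y)-H(Y\,|\,U_{\mathcal S^c},T)=I(Y;U_{\mathcal S^c}|T)$. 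Substituting and rearranging yields exactly $\Delta\le\sum_{k\in\mathcal S}[R_k-I(X_k;U_k|Y,T)]+I(Y;U_{\mathcal S^c}|T)$, which is~\eqref{eq:ComplexityRelevanceFunction}, under the claimed joint law~\eqref{eq:distributionFactortization}.

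I expect the main obstacle to be the rigorous, two-directional establishment of the equivalence rather than the algebra. On the converse side one must argue that no decoder can exceed $H(Y)-D$ in relevance and that the single-letterized CEO auxiliaries $U_k$, extracted in~\cite{Courtade2014LogLoss} via a standard identification of auxiliary random variables, remain compatible with both the relevance objective and the factorization~\eqref{eq:distributionFactortization}; on the achievability side one must verify that the soft reconstruction of the CEO scheme actually realizes the relevance $H(Y)-D$ and not merely the distortion $D$. Care is also needed to confirm that the time-sharing variable $T$ is independent of $(X_{\mathcal K},Y)$, so that the step $H(Y)=H(Y|T)$ is legitimate, and that the cardinality/rate bookkeeping transfers cleanly between the two problems.
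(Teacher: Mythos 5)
Your proposal takes essentially the same route as the paper's own proof: your key lemma (expected log-loss lower-bounded by conditional entropy via Gibbs' inequality, with equality for the posterior decoder, plus data processing) is the paper's Lemma~\ref{lem:LL_Relevance}, your equivalence ``$(\Delta,R_1,\ldots,R_K)$ achievable iff $(H(Y)-\Delta,R_1,\ldots,R_K)\in\mathcal{RD}_{\mathrm{CEO}}$'' is Proposition~\ref{prop:Eqreg}, and the conclusion by substituting $D=H(Y)-\Delta$ into \cite[Theorem 10]{Courtade2014LogLoss} and simplifying with the Markov chain~\eqref{eq:MKChain_pmf} is exactly how the paper finishes. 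The proposal is correct; you merely make explicit the data-processing step and the final algebra (splitting $I(X_{\mathcal S};U_{\mathcal S}|U_{\mathcal S^c},Y,T)$ and using $H(Y)=H(Y|T)$) that the paper leaves implicit.
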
 
\begin{proof}
The proof appears in Section~\ref{app:Comp_Rel_region_DM}.
\end{proof}

\begin{remark}
For a given joint data distribution $P_{X_{\mc K},Y}$, Theorem~\ref{th:Comp_Rel_region_DM} extends the single encoder IB principle  of Tishby~\cite{Tishby99theinformation} to the distributed learning model with $K$ encoders, which we denote by Distributed Information Bottleneck (DIB) problem. The result characterizes the optimal relevance-complexity tradeoff as a region of achievable tuples $(\Delta,R_1,\ldots, R_K)$ in terms of a distributed representation learning problem involving the optimization over $K$ conditional pmfs $P_{U_k|X_k,T}$ and a pmf $P_{T}$. The pmfs $P_{U_k|X_k,T}$ correspond to stochastic encodings of the observation $X_k$ to a latent variable, or representation, $U_k$ which captures the relevant information of $Y$ in observation $X_k$. Variable $T$ corresponds to a time-sharing among different encoding mappings, see, e.g.,~\cite{elGamal:book}. For such encoders, the optimal decoder is implicitly given by the conditional pmf of $Y$ from  $U_1,\ldots, U_K$, i.e., $P_{Y|U_{\mc K},T}$.
\end{remark}

\begin{remark}
In the proof of Theorem~\ref{th:Comp_Rel_region_DM} it is shown that maximizing relevance, as given in~\eqref{eq:relevance_def}, is equivalent to minimizing the average log-loss. Also, it shows that the maximum relevance (and minimum log-loss) is achieved by employing the decoder $\psi$ that generates a soft estimation of $Y$ from the descriptions $J_{\mc K}$, given by the conditional distribution  $\psi(J_{\mc K}) = \{P_{Y^n|J_{\mc K}}(y^n|J_{\mc K})\}_{y^n\in \mc Y^n}$.
\end{remark}


\begin{remark}
By Proposition~\ref{prop:Eqreg}, it follows  that maximizing the relevance in the distributed learning model is also equivalent to maximizing the average log-likelihood of $Y$ from the descriptions $U_{\mc K}$. 
Thus, $\mc {RI}_{\mathrm{DIB}}$ also characterizes the optimal tradeoff between likelihood and description complexity in a distributed scenario with $K$ encoders in Figure~\ref{fig:Schm}.
In addition, if the destination is interested in reconstructing the observations, i.e., $Y=(X_1,\ldots, X_K)$, e.g., as done with Variational Autoencoders (VAEs) \cite{KW2013:AutoEncoding, higgins2016beta}, the region $\mc {RI}_{\mathrm{DIB}}$
 also characterizes the optimal tradeoff achievable between maximum log-likelihood and complexity for this case.
\end{remark}

\begin{remark}
{The characterization of the optimal relevance-complexity region for data models satisfying the Markov chain~\eqref{eq:MKChain_pmf} is connected to the CEO problem under logarithmic loss distortion measure, fully characterized in~\cite[Theorem 10]{Courtade2014LogLoss}. For general IB data models, i.e., without the  Markov chain~\eqref{eq:MKChain_pmf} the model connects with the distributed source coding problem under logarithmic loss, whose solution is a longstanding open problem in source coding \cite{Courtade2014LogLoss}. Note that while in some practical cases~\eqref{eq:MKChain_pmf}  might is not be satisfied, e.g., when $X_k$ are several pictures of a particular object from different angles, the proposed methods in this paper are still applicable, without the  interpretation of relevance and complexity.}
\end{remark}

\subsection{A Variational Bound}\label{sec:VariationalBound}

In this section, we consider the problem of learning the encoders and decoders that maximize relevance given a complexity constraint, i.e., that perform on the boundary of $\mc {RI}_{\mathrm{DIB}}$ for given $(R_1,\ldots,R_K)$. However, directly learning such encoders  is challenging. In what follows, we find a parameterization of the relevance-complexity region characterized in Theorem~\ref{th:Comp_Rel_region_DM} and derive a variational bound which expresses the optimal encoding and decoding mappings as the solution to an optimization of the average logarithmic loss with a novel regularization term. In  Section~\ref{sec:Blahut-Algorithms} and Section~\ref{sec:VariationalDIB} we provide algorithms to solve this optimization problem.  
For simplicity, we focus on maximizing relevance under sum-complexity constraint, i.e., $R_{\mathrm{sum}} :=\sum_{k=1}^K R_k$. 
The region of achievable relevance-complexity tuples under sum-complexity constraint is defined by:
\begin{align}
\mc{RI}_{\mathrm{DIB}}^{\mathrm{sum}}: = &\Big\{(\Delta, R_{\mathrm{sum}})\in \mathds{R}_+^2:\exists (R_1,\ldots, R_K)\in \mathds{R}_+^K\text{ s.t. }\nonumber
 (\Delta, R_1,\ldots, R_K)\in \mc{RI}_{\mathrm{DIB}}\text{ and }\sum_{k=1}^KR_k = R_{\mathrm{sum}}\Big\}. \nonumber
\end{align}

The region $\mc{RI}_{\mathrm{DIB}}^{\mathrm{sum}}$ can be characterized as given next.
\begin{proposition}\label{prop:Sum-RateRegion}
The relevance-complexity region  $\mc {RI}^{\mathrm{sum}}_{\mathrm{DIB}}$ is given by the convex-hull of all tuples $(\Delta, R_{\mathrm{sum}})\in \mathds{R}_{+}^{2}$ satisfying $\Delta\leq\Delta(R_{\mathrm{sum}},P_{X_{\mc K},Y})$ where
\begin{align}
&\Delta(R_{\mathrm{sum}},P_{X_{\mc K},Y})= \max_{\dv P}  \min\left\{I(Y; U_{\mc K}),R_{\mathrm{sum}}- \sum_{k=1}^KI(X_k;U_k|Y)\right\}, \label{eq:RelevanceSumComplexityFunction}
\end{align} 
and where the maximization is over the set of pmfs  $\dv P := \{P_{U_1|X_1},\hdots, P_{U_K|X_K}\}$ such that the joint pmf factorizes as $p_Y(y)\prod_{k=1}^K p_{X_k|Y}(x_{k}|y)\prod_{k=1}^{K}p_{U_k|X_k}(u_k|x_k)
$. 
 \end{proposition}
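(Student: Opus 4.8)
The plan is to deduce Proposition~\ref{prop:Sum-RateRegion} directly from the full region $\mathcal{RI}_{\mathrm{DIB}}$ of Theorem~\ref{th:Comp_Rel_region_DM} in two moves. First, for a \emph{fixed} collection of conditionals (time-sharing variable $T$ included), I would project the polytope of achievable $(\Delta,R_1,\ldots,R_K)$ described by \eqref{eq:ComplexityRelevanceFunction} onto the hyperplane $\sum_k R_k=R_{\mathrm{sum}}$, keeping only the $(\Delta,R_{\mathrm{sum}})$ coordinates. Second, I would eliminate $T$ by showing that introducing a time-sharing variable amounts exactly to convexifying the region obtained with $T$ held trivial. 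The identity \eqref{eq:RelevanceSumComplexityFunction} would then follow by double inclusion.

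For the inclusion $\mathcal{RI}^{\mathrm{sum}}_{\mathrm{DIB}}\subseteq\operatorname{conv}\{(\Delta,R_{\mathrm{sum}}):\Delta\le\Delta(R_{\mathrm{sum}},\mathbf{P})\}$ (the converse part), I would start from an achievable $(\Delta,R_{\mathrm{sum}})$, arising from some $(\Delta,R_1,\ldots,R_K)\in\mathcal{RI}_{\mathrm{DIB}}$ with $\sum_kR_k=R_{\mathrm{sum}}$, and simply evaluate \eqref{eq:ComplexityRelevanceFunction} at the two extreme cuts: $\mathcal{S}=\emptyset$ gives $\Delta\le I(Y;U_{\mathcal{K}}|T)$, and $\mathcal{S}=\mathcal{K}$ gives $\Delta\le R_{\mathrm{sum}}-\sum_{k=1}^K I(X_k;U_k|Y,T)$, hence $\Delta\le\min\{\cdot,\cdot\}$ evaluated \emph{with} $T$. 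I would then invoke the convexification step to re-express this with-$T$ bound as a point of the convex hull of the $T$-free hypographs. This direction uses only the two extreme subsets and therefore needs no rate-splitting argument.

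The reverse inclusion is where the work lies, and I expect it to be the main obstacle. Since $\mathcal{RI}^{\mathrm{sum}}_{\mathrm{DIB}}$ inherits convexity from the time-sharing built into $\mathcal{RI}_{\mathrm{DIB}}$, it suffices to show that every $T$-free point $(\Delta,R_{\mathrm{sum}})$ with $\Delta\le\Delta(R_{\mathrm{sum}},\mathbf{P})=\min\{I(Y;U_{\mathcal{K}}),\,R_{\mathrm{sum}}-\sum_k I(X_k;U_k|Y)\}$ is achievable, i.e.\ admits a nonnegative split $(R_1,\ldots,R_K)$ with $\sum_k R_k=R_{\mathrm{sum}}$ satisfying \eqref{eq:ComplexityRelevanceFunction} for \emph{all} $\mathcal{S}$. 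Writing $R_k=I(X_k;U_k|Y)+\delta_k$ with $\delta_k\ge 0$, the constraints collapse to $\sum_{k\in\mathcal{S}}\delta_k\ge \Delta-I(Y;U_{\mathcal{S}^c})$ for all $\mathcal{S}$, together with the budget $\sum_k\delta_k=R_{\mathrm{sum}}-\sum_k I(X_k;U_k|Y)\ge\Delta$. The crux is to certify that this system of subset-sum lower bounds is feasible. I would do this by exploiting the conditional independence of $U_1,\ldots,U_K$ given $(Y,T)$ implied by the factorization \eqref{eq:distributionFactortization}, which yields $I(Y;U_{\mathcal{K}})-I(Y;U_{\mathcal{S}^c})=I(Y;U_{\mathcal{S}}|U_{\mathcal{S}^c})$ and, more importantly, the \emph{supermodularity} of the set function $\mathcal{S}\mapsto I(Y;U_{\mathcal{S}}|U_{\mathcal{S}^c})$. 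Supermodularity (whose pairwise consequence $I(Y;U_{\mathcal{S}}|U_{\mathcal{S}^c})+I(Y;U_{\mathcal{S}^c}|U_{\mathcal{S}})\le I(Y;U_{\mathcal{K}})$ is the key complement inequality) makes the feasible set a contra-polymatroid base, whose nonemptiness is guaranteed precisely by the $\mathcal{S}=\mathcal{K}$ budget above; this produces the required split. Equivalently, one can run Fourier--Motzkin elimination over $(R_1,\ldots,R_K)$ and verify that only the two extreme cuts survive.

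Finally, for the convexification step I would establish the standard equivalence between a time-sharing variable and a convex hull. Conditioning on $\{T=t\}$ decomposes $I(Y;U_{\mathcal{K}}|T)=\sum_t p_t\,I(Y;U_{\mathcal{K}}|T=t)$ and $\sum_k I(X_k;U_k|Y,T)=\sum_t p_t\sum_k I(X_k;U_k|Y,T=t)$, so a with-$T$ point can be written as a $p_t$-convex combination of $T$-free points by scaling the relevance coordinate by a common factor and absorbing any residual sum-rate slack into the complexity coordinates; this gives [with-$T$]$\subseteq\operatorname{conv}$([$T$-free]). The opposite inclusion is immediate, since $\mathcal{RI}^{\mathrm{sum}}_{\mathrm{DIB}}$ is convex and contains all $T$-free points. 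Combining the two inclusions yields \eqref{eq:RelevanceSumComplexityFunction}. I regard the supermodularity/polymatroid feasibility of the rate split as the only genuinely nontrivial ingredient; the projection onto the two extreme cuts and the convexification are routine once it is in hand, with degenerate regimes ($R_{\mathrm{sum}}<\sum_k I(X_k;U_k|Y)$) handled separately by noting that $\Delta(R_{\mathrm{sum}},\mathbf{P})$ is then non-positive.
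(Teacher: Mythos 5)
Your proposal is correct in substance but follows a genuinely different route from the paper's proof. The paper works with a fixed $T$-free pmf $\dv P$ (the convex hull in the statement silently absorbs time-sharing), writes the inequalities \eqref{eq:ComplexityRelevanceFunction} for all $\mc S$ together with $R_1+R_2\le R_{\mathrm{sum}}$, and for $K=2$ runs Fourier--Motzkin elimination on $(R_1,R_2)$; the single surviving non-extreme inequality, $2\Delta\le R_{\mathrm{sum}}-I(X_1;U_1|Y)-I(X_2;U_2|Y)+I(Y;U_1)+I(Y;U_2)$, is then discarded as redundant because $I(Y;U_1,U_2)\le I(Y;U_1)+I(Y;U_2)$ under the chain $U_1\mkv X_1\mkv Y\mkv X_2\mkv U_2$, and the case $K>2$ is asserted to follow similarly. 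You instead argue by double inclusion, certifying achievability through the explicit rate split $R_k=I(X_k;U_k|Y)+\delta_k$ whose feasibility is guaranteed by supermodularity of $\mc S\mapsto \Delta-I(Y;U_{\mc S^c})$, i.e., submodularity of $\mc A\mapsto I(Y;U_{\mc A})$ under conditional independence of the $U_k$ given $Y$. Note that your ``key complement inequality'' is, for $K=2$, exactly the paper's redundancy inequality, so both proofs turn on the same information-theoretic fact; what your contra-polymatroid packaging buys is a uniform treatment of all $K$ and an explicit handling of the time-sharing-equals-convexification step that the paper leaves implicit, while the paper's FME argument is shorter and more elementary. One local step of your sketch needs repair: in the convexification argument, ``scaling the relevance coordinate by a common factor'' does not suffice, because with $a_t:=I(Y;U_{\mc K}|T=t)$ and $b_t:=R_{\mathrm{sum}}-\sum_k I(X_k;U_k|Y,T=t)$ one can have $\min\bigl\{\sum_t p_t a_t,\sum_t p_t b_t\bigr\}$ strictly larger than $\sum_t p_t\min\{a_t,b_t\}$ (min of averages versus average of mins), so the naive per-$t$ points average to too small a relevance. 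To place a with-$T$ point in the convex hull of $T$-free hypographs you must also reallocate the sum-complexity budget across realizations of $T$, e.g., by choosing $R_t=\sum_k I(X_k;U_k|Y,T=t)+\min\{a_t,\lambda\}$ with $\lambda$ fixed so that $\sum_t p_t R_t=R_{\mathrm{sum}}$; with this water-filling the inclusion holds, so the gap is fixable and does not affect your overall strategy.
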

 \begin{proof}
 The proof 
 is given in Section~\ref{app:Sum-RateRegion}.
 \end{proof}

The following proposition provides a characterization of the pairs $(\Delta,R_{\mathrm{sum}})$ on the boundary of $\mc {RI}^{\mathrm{sum}}_{\mathrm{DIB}}$ in terms of a parameter $s\geq 0$, as $(\Delta_{s}, R_{s})$, which is more suitable for the derivation of the variational bound. 
\begin{proposition}\label{prop:param}
For each tuple $(\Delta,R_{\mathrm{sum}})\in \mathds{R}^2_+$ on the boundary of the relevance-complexity region $\mc {RI}^{\mathrm{sum}}_{\mathrm{DIB}}$ there exist $s \geq 0$ such that $(\Delta,R_{\mathrm{sum}}) = (\Delta_{s}, R_{s})$, where
\begin{align}
&\Delta_{s} =\frac{1}{(1+s)} \left[(1+sK)H(Y)  +  s R_{s}+ \max_{\dv P}\mc L_{s}(\dv P)\right],\label{eq:Dparam}\\
&R_{s} = I(Y;U_{\mc K}^*) + \sum_{k=1}^K [I(X_k;U_k^*) - I(Y;U_k^*)],\label{eq:R1param}
\end{align}
and $\dv P^*$ is the set of pmfs $\dv P$ that maximize the cost function
\begin{align}
\mc L_{s}(\dv P) :=\!- H(Y|U_{\mc K})\! - s \sum_{k=1}^K [H(Y|U_k) +I(X_k;U_k) ].\label{eq:CostF}
\end{align}
\end{proposition}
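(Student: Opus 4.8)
The plan is to exploit convexity and a supporting-hyperplane (Lagrangian) parametrization of the Pareto frontier. By Proposition~\ref{prop:Sum-RateRegion}, $\mc{RI}^{\mathrm{sum}}_{\mathrm{DIB}}$ is a closed convex subset of $\mathds{R}^2_+$, so its upper boundary is the graph of the concave, non-decreasing map $R_{\mathrm{sum}}\mapsto \Delta(R_{\mathrm{sum}},P_{X_{\mc K},Y})$. Every point on this boundary is therefore the maximizer of some supporting linear functional $\Delta-\tilde s R_{\mathrm{sum}}$ with slope $\tilde s\in[0,1)$: non-negativity holds because $\Delta$ is non-decreasing in $R_{\mathrm{sum}}$, and $\tilde s<1$ because along the frontier each extra nat of sum-rate buys at most one nat of relevance (the bound $\Delta\leq R_{\mathrm{sum}}-\sum_k I(X_k;U_k|Y)$ has unit slope) while the overhead $\sum_k I(X_k;U_k|Y)$ is strictly increasing. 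Reparametrizing the slope by $s\geq0$ via $\tilde s=s/(1+s)$, which amounts to maximizing the scaled functional $(1+s)\Delta-sR_{\mathrm{sum}}$, I would assign to each boundary point the value of $s$ for which it solves
\[
\max_{(\Delta,R_{\mathrm{sum}})\in\mc{RI}^{\mathrm{sum}}_{\mathrm{DIB}}}\big[(1+s)\Delta-sR_{\mathrm{sum}}\big].
\]

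Next I would evaluate this program explicitly. As a linear functional, its maximum over the convex hull equals its maximum over the generating set; since the coefficient $1+s>0$ of $\Delta$ is positive, the optimum has $\Delta=\Delta(R_{\mathrm{sum}},P_{X_{\mc K},Y})$, and Proposition~\ref{prop:Sum-RateRegion} turns the program into
\[
\max_{\dv P}\ \max_{R_{\mathrm{sum}}\geq 0}\Big[(1+s)\min\big\{I(Y;U_{\mc K}),\,R_{\mathrm{sum}}-\textstyle\sum_{k}I(X_k;U_k|Y)\big\}-sR_{\mathrm{sum}}\Big].
\]
For fixed $\dv P$, set $A=I(Y;U_{\mc K})$ and $c=\sum_k I(X_k;U_k|Y)$. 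The inner objective equals $R_{\mathrm{sum}}-(1+s)c$ (strictly increasing) for $R_{\mathrm{sum}}\leq A+c$ and $(1+s)A-sR_{\mathrm{sum}}$ (non-increasing) for $R_{\mathrm{sum}}\geq A+c$, so it is maximized at the crossover $R_{\mathrm{sum}}^\star=A+c=I(Y;U_{\mc K})+\sum_k I(X_k;U_k|Y)$, with value $A-sc=I(Y;U_{\mc K})-s\sum_k I(X_k;U_k|Y)$.

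It then remains to recast the objective and read off the formulas. Since the encoding obeys the Markov chain $U_k\mkv X_k\mkv Y$, we have $I(Y;U_k|X_k)=0$, hence $I(X_k;U_k)=I(X_k,Y;U_k)=I(Y;U_k)+I(X_k;U_k|Y)$, i.e.\ $I(X_k;U_k|Y)=I(X_k;U_k)-I(Y;U_k)$. Substituting this and expanding the mutual informations into entropies, a short rearrangement (isolating $H(Y)$ and $sKH(Y)=s\sum_k H(Y)$) yields $I(Y;U_{\mc K})-s\sum_k I(X_k;U_k|Y)=(1+sK)H(Y)+\mc L_s(\dv P)$ with $\mc L_s$ as in~\eqref{eq:CostF}; thus the maximizing $\dv P^\star$ is exactly the maximizer of $\mc L_s$. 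Evaluating at $\dv P^\star$ gives $\Delta_s=I(Y;U^\star_{\mc K})$ and $R_s=R_{\mathrm{sum}}^\star=I(Y;U^\star_{\mc K})+\sum_k[I(X_k;U^\star_k)-I(Y;U^\star_k)]$, which is~\eqref{eq:R1param}; finally combining $(1+s)\Delta_s-sR_s=(1+sK)H(Y)+\max_{\dv P}\mc L_s(\dv P)$ with this $R_s$ and solving for $\Delta_s$ produces~\eqref{eq:Dparam}.

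I expect the main obstacle to be the first step: rigorously justifying that the single-parameter family $s\in[0,\infty)$ sweeps the \emph{entire} Pareto boundary. This requires the convexity supplied by the convex-hull operation in Proposition~\ref{prop:Sum-RateRegion}, an argument that the supporting slopes of the concave curve $\Delta(R_{\mathrm{sum}})$ indeed fall in $[0,1)$ and that $s\mapsto\tilde s=s/(1+s)$ is onto $[0,1)$, and careful handling of any flat (linear) portions of the boundary, where a whole segment shares one supporting slope and one must verify that the recovered $(\Delta_s,R_s)$ matches the prescribed point. The remaining manipulations—interchanging the maximizations over $\dv P$ and $R_{\mathrm{sum}}$, replacing the $\min\{\cdot,\cdot\}$ by its crossover value, and the entropy bookkeeping—are routine once this is in place.
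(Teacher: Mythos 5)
Your proposal is correct and rests on the same two pillars as the paper's own proof: the Lagrangian (supporting-hyperplane) parametrization of the convex region of Proposition~\ref{prop:Sum-RateRegion}, and the Markov-chain identity $I(X_k;U_k|Y)=I(X_k;U_k)-I(Y;U_k)$ that turns $I(Y;U_{\mc K})-s\sum_{k}I(X_k;U_k|Y)$ into $(1+sK)H(Y)+\mc L_s(\dv P)$. The difference is in the direction of the argument. You solve the scalarized problem $\max\,[(1+s)\Delta-sR_{\mathrm{sum}}]$ \emph{forward}: the inner maximization over $R_{\mathrm{sum}}$ is settled at the crossover of the $\min\{\cdot,\cdot\}$ in \eqref{eq:RelevanceSumComplexityFunction}, which simultaneously produces the formula \eqref{eq:R1param} for $R_s$ and reduces the outer problem to $\max_{\dv P}\mc L_s(\dv P)$. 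The paper instead argues \emph{backward}: it takes \eqref{eq:Dparam}--\eqref{eq:R1param} as definitions, shows by an algebraic identity that $(1+s)\Delta_s=(1+s)I(Y;U^*_{\mc K})\leq (1+s)\Delta(R_s,P_{X_{\mc K},Y})$, and then establishes the weak-duality inequality $\Delta(R_{\mathrm{sum}},P_{X_{\mc K},Y})\leq \Delta_s+s(R_{\mathrm{sum}}-R_s)$, which upon setting $R_{\mathrm{sum}}=R_s$ gives the reverse direction. Your route explains \emph{why} the formulas take the form they do and makes explicit the geometric step (every boundary point has a supporting slope, reparametrized by $\tilde s=s/(1+s)$) that the paper leaves entirely implicit; the paper's route avoids discussing the boundary geometry but, as a verification, gives less insight into where \eqref{eq:R1param} comes from. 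Finally, the obstacle you flag is genuine but shared: neither proof handles flat portions of the boundary. Indeed, in degenerate models where $\sum_k I(X_k;U_k|Y)=0$ is compatible with $I(Y;U_{\mc K})>0$ (e.g., noiseless observations $X_k=Y$), the boundary contains a unit-slope segment whose interior points are not equal to $(\Delta_s,R_s)$ for any finite $s$, so on this point your proof is no weaker than the paper's --- and more candid about the gap.
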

\begin{proof}
The proof appears in Section~\ref{app:param}.
\end{proof}
\begin{remark}
The optimization of the DIB cost in~\eqref{eq:CostF} generalizes the centralized Tishby's information bottleneck formulation in~\eqref{eq:IBCriteria} to the distibuted learning problem with $K$-encoders in Figure~\ref{fig:Schm}. Note that for $K = 1$ the optimization in~\eqref{eq:Dparam} reduces to the single encoder cost in~\eqref{eq:IBCriteria} with a multiplier  $s/(1+s)$.
\end{remark}

Using Proposition~\ref{prop:param} it is clear that the encoders $\{P_{U_k|X_k}\}_{k\in \mc K}$ that achieve the relevance-complexity pair $(\Delta_{s}, R_{s})$ can be computed by maximizing the regularized cost~\eqref{eq:CostF} for the corresponding value of $s\geq 0$. The corresponding optimal decoder $P_{Y|U_{\mc K}}$ for these encoders can be found as $P_{Y|U_{\mc K}}$. Different relevance-complexity pairs $(\Delta_{s}, R_{s})$ on the boundary of~$\mc {RI}_{\mathrm{DIB}}^{\mathrm{sum}}$ and the corresponding encoders and decoder achieving it can be obtained by solving~\eqref{eq:CostF} for different values of $s\geq 0$ and evaluating~\eqref{eq:Dparam} and~\eqref{eq:R1param} for the resulting solution.

The optimization of~\eqref{eq:CostF} generally requires to compute marginal distributions involving the descriptions $U_1,\hdots, U_K$, which might be costly to calculate. To overcome this limitation, in the following we derive a tight variational bound on $\mc L_s(\dv P)$ which lower bounds the DIB cost function with respect to some arbitrary distributions. Let us consider the arbitrary decoder $Q_{Y|U_{1},\dots, U_K}(y|u_1,\ldots, u_K)$ for $y\in \mc Y$, $u_1\in \mc U_1, \hdots , u_K\in \mc U_K$, the $K$ decoders $Q_{Y|U_k}(y|u_k)$ for $k\in \mc K$ for $y\in \mc Y$, $u_k\in \mc U_k$, and latent variable priors $Q_{U_k}(u_k)$,  $k\in \mc K$, $u_k\in \mc U_k$. For short, we denote
\begin{align}
\dv Q := \{Q_{Y|U_1,\ldots, U_K},Q_{Y|U_1},\ldots, Q_{Y|U_K},Q_{U_1},\ldots, Q_{U_K}\}.\nonumber
\end{align}

We define the \textit{variational DIB cost function} $\mc L^{\mathrm{VB}}_s(\dv P, \dv Q)$ as 
\begin{align}\label{eq:FunctionPQ}
&\mc L^{\mathrm{VB}}_s(\dv P, \dv Q) :=   \underbrace{\mathds{E}[\log Q_{Y|U_{\mc K}}(Y| U_{\mc K})]}_{\text{av. logarithmic-loss}}\,+\, s \underbrace{\sum_{k=1}^K\Big( \mathds{E}[\log  Q_{Y|U_k}(Y|U_k)]- D_{\mathrm{KL}}(P_{U_k|X_k}\| Q_{U_k}) \Big)}_{\text{regularizer}}.\nonumber
\end{align}
Next lemma states that $\mc L^{\mathrm{VB}}_s(\dv P, \dv Q)$ is a lower bound to $\mc L_s(\dv P)$ for all distributions $\dv Q$.
\begin{lemma}\label{lemma:QUpdate}
For fixed pmfs $\dv P$, we have
\begin{align}
\mc L_s(\dv P) \geq \mc L^{\mathrm{VB}}_s(\dv P, \dv Q), \qquad \text{for all pmfs } \dv Q.
\end{align}
In addition, there exists a unique $\dv Q$ that achieves the maximum $\max_{\dv Q}\mc L^{\mathrm{VB}}_s(\dv P, \dv Q) = \mc L_s(\dv P)$, and is given by
\begin{align}
Q^*_{U_k} = P_{U_k}, \qquad &Q^*_{Y|U_k} = P_{Y|U_k},\quad k = 1,\ldots, K, \label{eq:Qstark}\\
Q^*_{Y|U_1,\ldots,U_k} &= P_{Y|U_1,\ldots, U_K}, \label{eq:Qstarall}
\end{align}
where $P_{U_k}$, $P_{Y|U_k}$ and $P_{Y|U_1,\ldots, U_K}$ are computed from $\dv P$.
\end{lemma}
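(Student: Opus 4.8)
The plan is to establish the inequality term-by-term, matching each summand of $\mc L_s(\dv P)$ with the corresponding summand of $\mc L^{\mathrm{VB}}_s(\dv P,\dv Q)$ and exhibiting the gap between them as a non-negative Kullback--Leibler divergence. The single tool needed throughout is Gibbs' inequality: for pmfs $P$ and $Q$ on a common alphabet, $\mathbb{E}_P[\log P]\geq \mathbb{E}_P[\log Q]$ with equality iff $P=Q$, equivalently $D_{\mathrm{KL}}(P\|Q)\geq 0$. First I would rewrite the entropy and mutual-information terms of $\mc L_s(\dv P)$ as expectations of log-probabilities under the joint law induced by $\dv P$: the joint term as $-H(Y|U_{\mc K})=\mathbb{E}[\log P_{Y|U_{\mc K}}(Y|U_{\mc K})]$, each marginal term as $-H(Y|U_k)=\mathbb{E}[\log P_{Y|U_k}(Y|U_k)]$, and each complexity term as $I(X_k;U_k)=\mathbb{E}[\log(P_{U_k|X_k}(U_k|X_k)/P_{U_k}(U_k))]$, where $P_{U_k}$, $P_{Y|U_k}$ and $P_{Y|U_{\mc K}}$ are the marginals and conditionals computed from $\dv P$.

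Next I would bound each of the three types of terms. For the decoder terms, subtracting the variational counterpart gives $\mathbb{E}[\log(P_{Y|U_{\mc K}}/Q_{Y|U_{\mc K}})]=\mathbb{E}_{U_{\mc K}}[D_{\mathrm{KL}}(P_{Y|U_{\mc K}}\|Q_{Y|U_{\mc K}})]\geq 0$, and similarly $\mathbb{E}_{U_k}[D_{\mathrm{KL}}(P_{Y|U_k}\|Q_{Y|U_k})]\geq 0$ for each $k$; hence $-H(Y|U_{\mc K})\geq \mathbb{E}[\log Q_{Y|U_{\mc K}}(Y|U_{\mc K})]$ and $-H(Y|U_k)\geq \mathbb{E}[\log Q_{Y|U_k}(Y|U_k)]$. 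For the complexity terms, the key algebraic observation is that replacing the true prior $P_{U_k}$ by an arbitrary $Q_{U_k}$ costs exactly a non-negative slack: $D_{\mathrm{KL}}(P_{U_k|X_k}\|Q_{U_k})-I(X_k;U_k)=\mathbb{E}[\log(P_{U_k}(U_k)/Q_{U_k}(U_k))]=D_{\mathrm{KL}}(P_{U_k}\|Q_{U_k})\geq 0$, so $-I(X_k;U_k)\geq -D_{\mathrm{KL}}(P_{U_k|X_k}\|Q_{U_k})$. Adding these bounds with the weights $1$ and $s\geq 0$ prescribed by the cost function yields $\mc L_s(\dv P)\geq \mc L^{\mathrm{VB}}_s(\dv P,\dv Q)$ for every $\dv Q$.

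Finally, the uniqueness claim follows from the equality conditions. Each gap above is a separate KL divergence involving a distinct variational factor, and these factors ($Q_{Y|U_{\mc K}}$, the $Q_{Y|U_k}$, and the $Q_{U_k}$) can be chosen independently; therefore the total gap $\mc L_s(\dv P)-\mc L^{\mathrm{VB}}_s(\dv P,\dv Q)$ vanishes iff every individual KL divergence vanishes, i.e. iff $Q^*_{Y|U_{\mc K}}=P_{Y|U_{\mc K}}$, $Q^*_{Y|U_k}=P_{Y|U_k}$ and $Q^*_{U_k}=P_{U_k}$, which is precisely~\eqref{eq:Qstark}--\eqref{eq:Qstarall}. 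Uniqueness (on the support of the induced marginals) is exactly the strict equality case of Gibbs' inequality. I do not anticipate a genuine obstacle here; the only step demanding a little care is the complexity term, where one must recognize that swapping $P_{U_k}$ for $Q_{U_k}$ turns $I(X_k;U_k)$ into $D_{\mathrm{KL}}(P_{U_k|X_k}\|Q_{U_k})$ up to the non-negative term $D_{\mathrm{KL}}(P_{U_k}\|Q_{U_k})$.
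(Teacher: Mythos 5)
Your proof is correct and is essentially the paper's own argument: both express the gap $\mc L_s(\dv P)-\mc L^{\mathrm{VB}}_s(\dv P,\dv Q)$ as a sum of non-negative Kullback--Leibler divergences (via the cross-entropy decomposition of $H(Y|U_{\mc K})$, $H(Y|U_k)$ and the identity $I(X_k;U_k)=D_{\mathrm{KL}}(P_{U_k|X_k}\|Q_{U_k})-D_{\mathrm{KL}}(P_{U_k}\|Q_{U_k})$), and conclude equality holds iff each divergence vanishes, which forces \eqref{eq:Qstark}--\eqref{eq:Qstarall}. The only cosmetic difference is that you bound term-by-term whereas the paper substitutes the two identities into \eqref{eq:CostF} in one step.
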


\begin{proof}
The proof appears in Section~\ref{app:QUpdate}.
\end{proof}

Using the above, the optimization in~\eqref{eq:Dparam} can be written in terms of the variational DIB cost function as follows:
\begin{align}
\max_{\dv P}\mc L_{s}(\dv P) = \max_{\dv P}\max_{\dv Q}\mc L^{\mathrm{VB}}_{s}(\dv P,\dv Q).\label{eq:VarEq}
\end{align}

\begin{remark}
The variational DIB cost function in \eqref{eq:FunctionPQ} consist of the average logarithmic loss (or cross-entropy) of estimating $Y$ form all the latent space representations $U_1,\ldots, U_K$ using the decoder $Q_{Y|U_1,\ldots, U_K}$, and a regularization term that is composed of two types of terms: i) the Kullback-Leiber divergence between encoders $P_{U_k|X_k}$ and the priors $Q_{U_k}$, which also appears in the single encoder version of the bound,  and ii)  the logarithmic loss of estimating the target variable $Y$ from each individual latent space representation $U_k$ using a decoder $Q_{Y|U_k}$, which does not appear in the single encoder case. An example of learning architecture which can be trained to minimize this cost function using neural networks is shown in Figure~\ref{fig:Latent_Variables} for $K=2$ observations.
\end{remark}

\begin{remark}
The  variational DIB  cost in \eqref{eq:FunctionPQ} is a generalization to distributed learning with $K$-encoders of the evidence lower bound (ELBO) of the target variable $Y$ given the representations $U_1,\hdots, U_K$~\cite{RM14, KW13}. If $Y =(X_1,\ldots, X_K)$, the bound generalizes the ELBO used for VAEs to the setting of $K\geq 2$ encoders. Also note that~\eqref{eq:FunctionPQ} provides an operational meaning to the $\beta$-VAE cost~\cite{higgins2016beta} with $\beta = s/(1+s)$, as a criteria to design estimators on the relevance-complexity plane, for different $\beta$ values.
\end{remark}

\subsection{Relevance-Complexity for Vector Gaussian Model}\label{sec:Gauss}

In this section we consider the memoryless vector Gaussian data model  and show that the encoders and decoders maximizing the relevance-complexity tradeoff correspond to  Gaussian distributions. In this model,  $(\mathbf{X}_1,\ldots,\mathbf{X}_K,\mathbf{Y})$ are jointly Gaussian and satisfy the Markov chain~\eqref{eq:MKChain_pmf}. Without loss in generality,
let the target variable be a zero-mean multivariate Gaussian $\mathbf{Y}\in \mathds{C}^{n_y}$, with  covariance matrix
$\mathbf{\Sigma}_{\mb y}$, i.e., $\mathbf{Y}\sim \mc{CN}(\dv y; \dv 0, \mathbf{\Sigma}_{\mb y})$.
 Encoder $k$, $k= 1,\ldots, K$, observes a noisy observation $\mb X_{k}\in \mathds{C}^{n_k}$, that is given by
\begin{equation}
\mathbf{X}_k = \mathbf{H}_{k}\mathbf{Y}+\mathbf{N}_k,
\label{mimo-gaussian-model}
\end{equation}
where $\mathbf{H}_{k}\in \mathds{C}^{n_k\times n_y}$ models the linear  model connecting $\dv Y$ to the observation at encoder $k$, and $\mathbf{N}_k\in\mathds{C}^{n_k}$, $k = 1,\ldots,K$, is the noise vector at encoder $k$, assumed to be Gaussian with zero-mean and covariance matrix $\mathbf{\Sigma}_{k}$, and independent from all other noises and  $\dv Y$. Note that it can be shown that for every jointly Gaussian random vector $(\mathbf{X}_1,\ldots,\mathbf{ X}_K,\mathbf{ Y})$ satisfying~\eqref{eq:MKChain_pmf}, there exist matrices $(
\mathbf{H}_1,\ldots,\mathbf{H}_K)$  and noises $\mathbf{N}_1,\ldots, \mathbf{N}_K$ as defined.

The vector Gaussian model satisfies the Markov chain~\eqref{eq:MKChain_pmf}; and thus, 
the result of Theorem~\ref{th:Comp_Rel_region_DM}, which can be extended  to continuous sources using standard techniques~\cite{elGamal:book}, characterizes the relevance-complexity region of this model. Let us denote by $\mc {RI}_{\mathrm{DIB}}^{\mathrm{G}}$ the relevance-complexity region of the Gaussian model~\eqref{mimo-gaussian-model}. Next theorem characterizes  $\mc {RI}_{\mathrm{DIB}}^{\mathrm{G}}$, and shows that there is not need for time-sharing, i.e., $T=\emptyset$ and that the optimal stochastic mappings $P_{U_k|X_k}$, $k\in \mc K$, can be restricted without loss in optimality to be multivariate Gaussian distributions as, 
\begin{align}\label{eq:GaussMap}
\dv U_k = \dv A_k \dv X_k + \dv Z_k\sim \mc {CN}(\dv u_k;\dv A_k \dv X_k, \dv \Sigma_{z,k}),
\end{align}
where $\dv A_k \in \mathds{C}^{n_k\times n_k}$ projects the observation $\dv X_k$ and $\dv Z_k$ is a zero-mean Gaussian noise with covariance $\dv \Sigma_{z,k}$. 
\begin{theorem}
\label{th:GaussSumCap}
The relevance-complexity region $\mc {RI}_{\mathrm{DIB}}^{\mathrm{G}}$ for the vector Gaussian model is given by the union of all tuples $(\Delta, R_1,\ldots,R_L)$ satisfying for all $\mathcal{S} \subseteq \mathcal{K}$
\begin{align}
\Delta&\leq \sum_{k\in\mathcal{S}}\left(R_k+\log\left|\dv I-\mathbf{\Sigma}_{k}^{1/2}\mathbf{\Omega}_{k}\mathbf{\Sigma}_{k}^{1/2}\right|\right)  + \log\left| \mathbf{I} +\sum_{k\in\mathcal{S}^{c}}\mathbf{\Sigma}_{\mb y}^{1/2}\mathbf{H}_{k}^{\dagger}
\mathbf{\Omega}_{k}
\mathbf{H}_{k}\mathbf{\Sigma}_{\mb y}^{1/2}\right|
,\nonumber
\end{align}
for some  $\dv 0\preceq\dv \Omega_k\preceq\dv \Sigma_k^{-1}$. In addition, the relevance-complexity region is achievable with $T= \emptyset$ and pmfs
\begin{align}
P^{*}_{U_k|X_k,T}(\dv u_k|\dv x_k,t) = \mc{CN}(\dv u_k; \dv x_k,  \mathbf{\Sigma}_{k}^{1/2}(\dv \Omega_k-\dv I)\mathbf{\Sigma}_{k}^{1/2}  ).\nonumber
\end{align}
\end{theorem}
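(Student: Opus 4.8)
The plan is to prove the two inclusions separately: achievability, obtained by specializing the single-letter region of Theorem~\ref{th:Comp_Rel_region_DM} to Gaussian test channels with $T=\emptyset$, and a converse showing that Gaussian test channels (again with $T=\emptyset$) exhaust the region. For achievability I would substitute the Gaussian mappings $\dv U_k = \dv X_k + \dv Z_k$ of the theorem statement into the region of Theorem~\ref{th:Comp_Rel_region_DM}. Since $(\dv X_{\mc K},\dv Y)$ is jointly Gaussian and each $\dv U_k$ is a Gaussian perturbation of $\dv X_k$, every mutual-information term reduces to a log-determinant of a conditional covariance, computed from $\dv X_k = \dv H_k \dv Y + \dv N_k$ and the standard MMSE/conditional-covariance identities; in particular $\dv X_k\mid \dv Y \sim \mc{CN}(\dv H_k\dv Y, \dv\Sigma_k)$. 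Writing $I(\dv X_k;\dv U_k\mid \dv Y)$ and $I(\dv Y;\dv U_{\mc S^c})$ explicitly and performing the change of variable from the test-channel noise covariance $\dv\Sigma_{z,k}$ to the matrix $\dv\Omega_k$ (which plays the role of the effective inverse-noise seen at encoder $k$) reproduces the claimed log-determinant expressions; the admissibility range $\dv 0\preceq \dv\Omega_k \preceq \dv\Sigma_k^{-1}$ corresponds exactly to $\dv\Sigma_{z,k}$ ranging over all positive semidefinite matrices. This direction is a routine, if lengthy, determinant computation.

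The converse is the crux. I must show that for every admissible collection $\{P_{U_k|X_k,T}\}$ satisfying the factorization~\eqref{eq:distributionFactortization}, not necessarily Gaussian, the resulting tuple lies inside the Gaussian region. For each $k$ and each realization $T=t$ I would introduce a matrix $\dv\Omega_{k,t}$ defined through the conditional MMSE (equivalently, the conditional Fisher information) of $\dv X_k$ given $(\dv U_k,\dv Y,T=t)$; the matrix Cram\'er--Rao bound then forces $\dv 0\preceq \dv\Omega_{k,t}\preceq \dv\Sigma_k^{-1}$. Two estimates must be established for each subset $\mc S$. The rate-type terms: a lower bound on $I(\dv X_k;\dv U_k\mid \dv Y,T)$ is obtained by upper bounding $h(\dv X_k\mid \dv U_k,\dv Y,T)$ via the maximum-entropy inequality, matching $\log|\dv I - \dv\Sigma_k^{1/2}\dv\Omega_k\dv\Sigma_k^{1/2}|$. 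The relevance-type term: an upper bound on $I(\dv Y;\dv U_{\mc S^c}\mid T)$ is obtained by lower bounding $h(\dv Y\mid \dv U_{\mc S^c},T)$. The latter is delicate because it couples several encoders simultaneously; here I would use that, conditioned on $\dv Y$ (and $T$), the descriptions $\dv U_k$ are independent, and invoke the de~Bruijn identity together with the matrix Fisher-information / entropy-power inequality in the vector-Gaussian CEO form of Ekrem--Ulukus to pass from the individual $\dv\Omega_{k,t}$ to a bound on the joint entropy involving $\sum_{k\in\mc S^c}\dv H_k^{\dagger}\dv\Omega_{k,t}\dv H_k$.

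Finally, to discharge the time-sharing variable I would average the per-$t$ bounds over $T$ and apply Jensen's inequality using concavity of $\log|\cdot|$ on the positive semidefinite cone, setting $\dv\Omega_k := \mathds{E}_T[\dv\Omega_{k,T}]$ (suitably weighted). This shows that $T=\emptyset$ incurs no loss and yields a single $\dv\Omega_k$ per encoder with $\dv 0\preceq \dv\Omega_k\preceq \dv\Sigma_k^{-1}$, completing the converse. The main obstacle, as anticipated, is the relevance term: proving the correct matrix entropy-power/Fisher-information inequality that handles all subsets $\mc S$ at once and aggregates the multi-encoder contributions into the single log-determinant $\log|\dv I + \sum_{k\in\mc S^c}\dv\Sigma_{\mb y}^{1/2}\dv H_k^{\dagger}\dv\Omega_k\dv H_k\dv\Sigma_{\mb y}^{1/2}|$, since a naive scalar entropy-power argument does not deliver the tight matrix form required here.
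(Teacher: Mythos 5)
Your proposal follows essentially the same route as the paper's proof: achievability by evaluating the region of Theorem~\ref{th:Comp_Rel_region_DM} with the stated Gaussian test channels, and a converse that defines $\mathbf{\Omega}_{k,t}$ through the conditional MMSE of $\dv X_k$ given $(\dv U_{k,t},\dv Y,T=t)$, bounds the rate terms via the MMSE (maximum-entropy) side and the relevance term via the Fisher-information side of the de Bruijn-type inequalities from Ekrem--Ulukus, and then removes time-sharing by Jensen's inequality on $\log|\cdot|$ with $\bar{\dv\Omega}_k=\mathds{E}_T[\dv\Omega_{k,T}]$. The only notable difference is that the step you flag as the crux is handled in the paper not by an entropy-power-type inequality but by an exact identity, $\mathbf{J}(\dv Y|\dv U_{\mc S^c,t},t)=\mathbf{\Sigma}_{\mb y}^{-1}+\sum_{k\in\mc S^c}\mathbf{H}_k^{\dagger}\mathbf{\Omega}_{k,t}\mathbf{H}_k$, obtained from the de Bruijn identity and the vanishing of MMSE cross terms under the Markov chain, after which the Fisher-information lower bound on conditional entropy gives the desired log-determinant.
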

\begin{proof}
The proof is given in Section~\ref{app:GaussSumCap}.
\end{proof}

\begin{remark}
Theorem~\ref{th:GaussSumCap} extends the result of~\cite{GlobersonTishby:Techical:Gaussian}, \cite{journals/jmlr/ChechikGTW05} and \cite{winkelbauer2014rate} on the relevance-complexity tradeoff characterization of the single-encoder IB problem for jointly Gaussian sources~\eqref{eq:MKChain_pmf} to $K$ encoders.
\end{remark}
{Note that, from~\eqref{eq:GaussMap} and the proof of Thereom~\ref{th:GaussSumCap}, $ \mathbf{\Omega}_k$, $k=1,\ldots, K$ can be understood as parameterizations of the  covariance matrixs of the noises $\dv\Sigma_{z,k}$ in \eqref{eq:GaussMap}.}
By Theorem~\ref{th:GaussSumCap}, the optimal encoders $P_{U_k|X_k}$ are multivariate Gaussian distributions as in~\eqref{eq:GaussMap}. Thus, the decoder $P_{Y|U_1,\ldots, U_K}$ is also a multivariate Gaussian distribution, since any conditional distribution of jointly Gaussian variables is also a Gaussian distribution. Similarly, the optimal distributions  $\dv Q$ in Lemma~\ref{lemma:QUpdate} for given multivariate Gaussian distributions $\dv P$ are multivariate Gaussian distributions.
\begin{corollary}\label{coro:GaussPmf}
If $(\dv X_1,\ldots, \dv X_K,\dv Y)$ are jointly Gaussian as in \eqref{mimo-gaussian-model}, for given encoders $\dv P$ with multivariate Gaussian distribution as in~\eqref{eq:GaussMap}, the optimal distributions $\dv Q$ in \eqref{eq:Qstark} and \eqref{eq:Qstarall} are multivariate Gaussian distributions.
\end{corollary}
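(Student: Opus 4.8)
The plan is to reduce the claim to two elementary facts about jointly Gaussian vectors: that an affine map corrupted by independent Gaussian noise preserves joint Gaussianity, and that every marginal and every conditional of a jointly Gaussian vector is again Gaussian. First I would invoke Lemma~\ref{lemma:QUpdate}, which identifies the optimizers in~\eqref{eq:Qstark}--\eqref{eq:Qstarall} with the marginals $P_{U_k}$ and the posteriors $P_{Y|U_k}$ and $P_{Y|U_1,\ldots,U_K}$ induced by the encoders $\dv P$. Hence it suffices to show that these three families are Gaussian whenever the data model is~\eqref{mimo-gaussian-model} and the encoders are the Gaussian maps~\eqref{eq:GaussMap}; no separate optimization over $\dv Q$ is needed.

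The key step is to establish that the augmented vector $(\dv Y,\dv X_1,\ldots,\dv X_K,\dv U_1,\ldots,\dv U_K)$ is jointly (circularly-symmetric complex) Gaussian. By hypothesis $(\dv Y,\dv X_1,\ldots,\dv X_K)$ is jointly Gaussian, and by~\eqref{eq:GaussMap} each $\dv U_k=\dv A_k\dv X_k+\dv Z_k$ is an affine image of $\dv X_k$ plus an independent zero-mean Gaussian noise $\dv Z_k$, where the $\dv Z_k$ are mutually independent and independent of $(\dv Y,\dv X_{\mc K})$. Stacking the independent noises $(\dv Z_1,\ldots,\dv Z_K)$ onto the Gaussian vector $(\dv Y,\dv X_{\mc K})$ yields a jointly Gaussian vector, and $(\dv Y,\dv X_{\mc K},\dv U_{\mc K})$ is obtained from it by a fixed linear map. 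Since linear transformations of Gaussian vectors are Gaussian, joint Gaussianity of the augmented vector follows.

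Granted this joint Gaussianity, I would conclude by reading the three required distributions off as marginals and conditionals. The marginal $P_{U_k}$ of a jointly Gaussian vector is Gaussian; the conditionals $P_{Y|U_k}$ and $P_{Y|U_1,\ldots,U_K}$ of jointly Gaussian vectors are Gaussian, with the standard affine conditional mean and a constant (observation-independent) conditional covariance. By Lemma~\ref{lemma:QUpdate} these coincide with $Q^*_{U_k}$, $Q^*_{Y|U_k}$ and $Q^*_{Y|U_1,\ldots,U_K}$, which establishes the corollary.

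I do not anticipate a genuine obstacle: once joint Gaussianity is in hand, the argument is bookkeeping, and the qualitative statement requires no explicit covariance computation. The only point requiring mild care is the circularly-symmetric complex setting, where one should note that both the preservation of Gaussianity under affine-plus-noise maps and the Gaussian form of conditionals carry over verbatim from the real case; this is what lets me bypass writing out the covariance blocks.
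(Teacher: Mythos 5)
Your proposal is correct and follows essentially the same route as the paper: the paper also reduces the claim, via Lemma~\ref{lemma:QUpdate}, to the observation that the optimal $\dv Q$ are the marginals $P_{U_k}$ and conditionals $P_{Y|U_k}$, $P_{Y|U_{\mc K}}$ induced by $\dv P$, and then notes that these are Gaussian because any marginal or conditional of jointly Gaussian variables is Gaussian (the same fact is used again in Section~\ref{app:BADIVGauss} when deriving Algorithm~\ref{algo:BA_Gauss}). Your only addition is to spell out explicitly the joint Gaussianity of $(\dv Y,\dv X_{\mc K},\dv U_{\mc K})$ via stacking the independent noises $\dv Z_k$ and applying a linear map, a step the paper leaves implicit.
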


For the vector Gaussian model, the optimal encoding and decoding mappings can be computed explicitly. However, in general  closed form expressions are unfeasible. In the following sections we provide algorithms to find mappings that approximate these optimal distributions.

\section{Blahut-Arimoto DIB Algorithm}\label{sec:Blahut-Algorithms}

In this section, we derive an iterative method to optimize the variational DIB cost function in~\eqref{eq:VarEq} when the data model is discrete and the joint distribution $P_{X_{\mc K},Y}$ is either known, or a good estimation of it can be obtained from the training samples. In these cases, the maximizing distributions $\dv P,\dv Q$ of the variational DIB cost in~\eqref{eq:VarEq} can be efficiently found by an alternating optimization procedure over $\dv P$ and 
$\dv Q$ similar to the expectation-maximization (EM) algorithm~\cite{DLR77} and the standard Blahut-Arimoto (BA) method\cite{Blahut:IT:1972}. An extension to the vector Gaussian data model, which involves random variable with continuous alphabets, is also provided. The main idea of the algorithm is that at iteration $t$, the optimal distributions $\dv P^{(t)}$ that maximize the variational D-IB bound $\mc L^{\mathrm{VB}}_s(\dv P,\dv Q^{(t)})$ for fixed $\dv Q^{(t)}$ can be optimized in closed form and, next, the maximizing pmfs $\dv Q^{(t)}$ for given $\dv P^{(t)}$ can be also found analytically. So, starting from an initialization $\dv P^{(0)} $ and $\dv Q^{(0)}$ the algorithms performs the following computations successively, until convergence,
\begin{align}
\dv P^{(0)}\rightarrow \dv Q^{(0)}\rightarrow \dv P^{(1)}\rightarrow \hdots \rightarrow \dv P^{(t)} \rightarrow \dv Q^{(t)}\rightarrow \hdots
\end{align}

We refer to such algorithm as ``Blahut-Arimoto Distributed Information Bottleneck Algorithm (BA-DIB)''.
Algorithm~\ref{algo:BA_DMC} describes the steps taken by BA-DIB to successively maximize $\mc L^{\mathrm{VB}}_{s}(\dv P,\dv Q)$ by solving a concave optimization problem over $\dv P$ and over $\dv Q$ at each iteration. 

\begin{lemma}\label{lem:convex}
The function $\mc L^{\mathrm{VB}}_s(\dv P, \dv Q)$ is concave in $\dv P$ and in $\dv Q$.
 \end{lemma}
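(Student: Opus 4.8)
The plan is to prove the two concavity assertions separately, exploiting the additive structure of $\mc L^{\mathrm{VB}}_s(\dv P,\dv Q)$: since a sum of concave functions is concave, it suffices to treat each summand and show it is concave in the block of interest while the other block is held fixed. The only ingredients I would need are the concavity of $t\mapsto \log t$ on $\mathds{R}_+$, the convexity of the relative entropy $D_{\mathrm{KL}}(\cdot\|\cdot)$ in its first argument, and the observation that every marginal or joint distribution appearing inside a logarithm is an affine (indeed linear) function of the relevant pmf, owing to the fixed factorization $p(y)\prod_k p(x_k|y)\prod_k P_{U_k|X_k}(u_k|x_k)$.

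\textbf{Concavity in $\dv Q$ (for fixed $\dv P$).} When $\dv P$ is frozen, the joint law $p(y,u_{\mc K})$ and its marginals $p(y,u_k)$, $p(u_k)$, $p(x_k)$ are fixed nonnegative constants. The term $\mathds{E}[\log Q_{Y|U_{\mc K}}]=\sum_{y,u_{\mc K}}p(y,u_{\mc K})\log Q_{Y|U_{\mc K}}(y|u_{\mc K})$ is then a nonnegatively weighted sum of logarithms of the entries of $Q_{Y|U_{\mc K}}$, hence concave in $Q_{Y|U_{\mc K}}$; the same argument gives concavity of each $\mathds{E}[\log Q_{Y|U_k}]$ in $Q_{Y|U_k}$. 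For the divergence term I would expand $-D_{\mathrm{KL}}(P_{U_k|X_k}\|Q_{U_k})=c_k(\dv P)+\sum_{u_k}P_{U_k}(u_k)\log Q_{U_k}(u_k)$, where $c_k(\dv P)$ collects the $Q$-independent part and $P_{U_k}(u_k)=\sum_{x_k}p(x_k)P_{U_k|X_k}(u_k|x_k)$; the remaining piece is again a nonnegatively weighted sum of $\log Q_{U_k}(u_k)$, hence concave in $Q_{U_k}$. Since the components of $\dv Q$ are free, independent pmfs and each summand is concave in its own component, $\mc L^{\mathrm{VB}}_s$ is jointly concave in $\dv Q$.

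\textbf{Concavity in $\dv P$ (for fixed $\dv Q$).} Here I would fix $\dv Q$ and all encoders except one, say $P_{U_k|X_k}$, and check concavity in that block. Since $p(y,u_k)$ and $p(u_k)$ depend only on $P_{U_k|X_k}$ and are linear in it, and $p(y,u_{\mc K})$ is linear in $P_{U_k|X_k}$ once the other encoders are fixed, the two log-loss terms $\mathds{E}[\log Q_{Y|U_{\mc K}}]$ and $\mathds{E}[\log Q_{Y|U_k}]$ are affine, hence concave, in $P_{U_k|X_k}$. The remaining term $-s\,D_{\mathrm{KL}}(P_{U_k|X_k}\|Q_{U_k})$ is concave because relative entropy is convex in its first argument for a fixed second argument. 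Summing these contributions yields concavity of $\mc L^{\mathrm{VB}}_s$ in each $P_{U_k|X_k}$, which is precisely the property exploited by the coordinate-wise updates of BA-DIB.

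I expect the only genuine subtlety to be the status of the joint-decoder term $\mathds{E}[\log Q_{Y|U_{\mc K}}]$ as a function of $\dv P$. Through $p(y,u_{\mc K})=\sum_{x_{\mc K}}p(y)\prod_k p(x_k|y)\prod_k P_{U_k|X_k}(u_k|x_k)$ this term is \emph{multilinear} in the collection of encoders, so it is concave in each $P_{U_k|X_k}$ individually (others held fixed) but not jointly concave in all encoders simultaneously: a bilinear perturbation $\delta_j\delta_k$ with $j\neq k$, weighted by $\log Q_{Y|U_{\mc K}}\le 0$, produces an indefinite second variation. The concavity in $\dv P$ must therefore be understood blockwise, which is exactly what the alternating structure of the algorithm (updating one $P_{U_k|X_k}$ at a time, with the remaining encoders and $\dv Q$ fixed) requires. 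All other terms are either manifestly affine in the relevant pmf or relative entropies whose convexity is classical, so once this point is settled the argument reduces to careful bookkeeping.
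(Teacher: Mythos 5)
Your argument is correct, and it is essentially the proof the paper leaves out: the paper disposes of Lemma~\ref{lem:convex} in one line, citing the log-sum inequality and the convexity of $x\mapsto x\log x$, and these are precisely the classical facts behind your two ingredients (concavity of the logarithm for the $\dv Q$-blocks, convexity of $D_{\mathrm{KL}}(\cdot\|\cdot)$ in its first argument for the $\dv P$-blocks), combined with the observation that every law appearing inside a logarithm is affine in the pmf being varied. So in substance you and the paper follow the same elementary route; yours is simply written out term by term.

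Where you genuinely depart from the paper is the caveat about the joint-decoder term, and you are right on this point. Because $p(y,u_{\mc K})$ is multilinear in $(P_{U_1|X_1},\ldots,P_{U_K|X_K})$, the term $\mathds{E}[\log Q_{Y|U_{\mc K}}]$ is affine in each encoder separately but not jointly concave in all of them once $K\geq 2$: one can exhibit admissible directions $(\delta_j,\delta_k)$, $j\neq k$, whose bilinear cross term is strictly positive. One small tightening is needed to conclude that the \emph{whole} cost fails to be jointly concave, since a non-concave term plus concave terms could in principle still be concave: note that the positive cross term carries no factor of $s$, while the compensating curvature contributed by the KL regularizers scales linearly with $s$, so for $s$ small enough the total second variation along such a direction is positive. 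With that, your conclusion stands: ``concave in $\dv P$'' is true only blockwise, i.e., in each $P_{U_k|X_k}$ with the other encoders fixed. This refines the paper's literal claim, and also its description of Algorithm~\ref{algo:BA_DMC} as solving a concave problem over all of $\dv P$ at each iteration. Nothing downstream breaks: the update \eqref{eq:P_update} is a KKT/fixed-point condition whose necessity requires only the linear constraint structure, and the convergence statement rests on the tightness of the variational bound (Lemma~\ref{lemma:QUpdate}) and the successive-bound-maximization framework of \cite{Razaviyayn:SIAM:UnifiedConvergence}, not on joint concavity in $\dv P$ --- but the blockwise reading is the one under which Lemma~\ref{lem:convex} is actually true.
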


\begin{proof}
The proof follows by using the log-sum inequality~\cite{Cover:book} and the convexity of the mapping $x\mapsto x\log x$; and it is omitted for brevity.
\end{proof}

For fixed $\dv P^{(t)}$, the optimal $\dv Q^{(t)}$ maximizing the variational D-IB bound in~\eqref{eq:FunctionPQ} follows from Lemma~\ref{lemma:QUpdate} as given by \eqref{eq:Qstark}-\eqref{eq:Qstarall}. For fixed $\dv Q^{(t)}$, the optimal $\dv P^{(t)}$ can be found using the following lemma.

\begin{lemma}
For fixed $\dv Q$, there exists a  $\dv P$ that achieves the maximum $\max
_{\dv P}\mc L^{\mathrm{VB}}_s(\dv P, \dv Q)$, where $P_{U_k|X_k}$ is given by
\begin{align}
p^*(u_k|x_k) = q(u_k)\frac{\exp\left(-\psi_s(u_k,x_k)\right)}{\sum_{u_k\in \mc U_k}q(u_k) \exp(-\psi_s(u_k,x_k))},\label{eq:P_update}
\end{align}
for $u_k\in \mc U_k$ and $x_k\in \mc X_k$, $k\in \mc K$,  and where we define 
\begin{align}\label{eq:P_update_psi}
\psi_s(u_k,x_k)&:=  D_{\mathrm{KL}}(P_{Y|x_k}||Q_{Y|u_k})+\frac{1}{s}
\mathds{E}_{U_{\mc K\setminus k}|x_k}[D_{\mathrm{KL}}(P_{Y|U_{\mc K\setminus k},x_k}||Q_{Y|U_{\mc K\setminus k},u_k}))].
\end{align}
\end{lemma}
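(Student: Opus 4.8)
The plan is to invoke the concavity of $\mc L^{\mathrm{VB}}_s(\dv P,\dv Q)$ in $\dv P$ established in Lemma~\ref{lem:convex}. I would optimize a single encoder $P_{U_k|X_k}$ at a time, holding $\{P_{U_j|X_j}\}_{j\neq k}$ and $\dv Q$ fixed. The feasible set for $P_{U_k|X_k}$ is a product over $x_k$ of probability simplices $\{p(\cdot|x_k):\sum_{u_k}p(u_k|x_k)=1\}$ and is therefore convex; since the restricted objective is concave, any point satisfying the first-order Karush--Kuhn--Tucker (KKT) conditions is a global maximizer. Accordingly, I would attach a Lagrange multiplier $\lambda_{x_k}$ to each normalization constraint and seek a stationary point.

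First I would expand $\mc L^{\mathrm{VB}}_s(\dv P,\dv Q)$ and collect the terms that depend on $p(u_k|x_k)$. The conditional independence of the representations given $Y$ --- a consequence of the Markov chain~\eqref{eq:MKChain_pmf} and the factorization~\eqref{eq:distributionFactortization} --- gives $p(u_{\mc K\setminus k}|y)=\prod_{j\neq k}p(u_j|y)$, so the quantities $\mathds{E}[\log Q_{Y|U_j}(Y|U_j)]$ and $D_{\mathrm{KL}}(P_{U_j|X_j}\|Q_{U_j})$ for $j\neq k$ do not involve $p(u_k|x_k)$ and can be dropped. For the surviving terms I would use $p(y,u_{\mc K})=\sum_{x_k}p(u_k|x_k)\,p(x_k)p(y|x_k)p(u_{\mc K\setminus k}|y)$ and $p(y,u_k)=\sum_{x_k}p(u_k|x_k)\,p(x_k)p(y|x_k)$, which exhibit both log-loss terms as \emph{linear} functions of $p(u_k|x_k)$, while the regularizer contributes the strictly concave term $-s\sum_{x_k}p(x_k)\sum_{u_k}p(u_k|x_k)\log\frac{p(u_k|x_k)}{q(u_k)}$.

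Next I would differentiate the Lagrangian with respect to $p(u_k|x_k)$ and set the result to zero. After dividing through by $s\,p(x_k)>0$, the stationarity condition becomes
\[
\log\frac{p(u_k|x_k)}{q(u_k)}=\frac{1}{s}\,\mathds{E}_{Y,U_{\mc K\setminus k}|x_k}\!\left[\log Q_{Y|U_{\mc K}}(Y|u_k,U_{\mc K\setminus k})\right]+\mathds{E}_{Y|x_k}\!\left[\log Q_{Y|U_k}(Y|u_k)\right]+c_{x_k},
\]
where $c_{x_k}$ gathers the multiplier and the $-1$ produced by differentiating $p\log p$. Exponentiating and choosing $c_{x_k}$ so that $\sum_{u_k}p(u_k|x_k)=1$ absorbs $e^{c_{x_k}}$ into the normalization, which yields the Gibbs form in~\eqref{eq:P_update} with denominator $\sum_{u_k}q(u_k)\exp(\cdot)$.

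It then remains to identify the exponent with $-\psi_s(u_k,x_k)$. I would expand each divergence in~\eqref{eq:P_update_psi} as a cross-entropy minus an entropy: $D_{\mathrm{KL}}(P_{Y|x_k}\|Q_{Y|u_k})=-H(Y|X_k{=}x_k)-\mathds{E}_{Y|x_k}[\log Q_{Y|U_k}(Y|u_k)]$ and, similarly, $\mathds{E}_{U_{\mc K\setminus k}|x_k}[D_{\mathrm{KL}}(P_{Y|U_{\mc K\setminus k},x_k}\|Q_{Y|U_{\mc K\setminus k},u_k})]=-H(Y|U_{\mc K\setminus k},X_k{=}x_k)-\mathds{E}_{Y,U_{\mc K\setminus k}|x_k}[\log Q_{Y|U_{\mc K}}(Y|u_k,U_{\mc K\setminus k})]$. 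The two conditional entropies are independent of $u_k$, hence are swept into the normalizing constant, while the surviving cross-entropy terms coincide exactly with the stationarity exponent, completing the identification. The main obstacle I anticipate is precisely this bookkeeping: writing the contribution of the joint decoder term $\mathds{E}[\log Q_{Y|U_{\mc K}}]$ as an expectation over $U_{\mc K\setminus k}$ conditioned on $x_k$ (which rests on factoring $p(y,u_{\mc K\setminus k}|x_k)=p(y|x_k)p(u_{\mc K\setminus k}|y)$ via conditional independence), and carefully tracking which terms are $u_k$-independent and therefore harmless under normalization.
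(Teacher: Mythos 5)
Your proposal is correct and takes essentially the same route as the paper's proof: invoke concavity of $\mc L^{\mathrm{VB}}_s(\dv P,\dv Q)$ in $\dv P$, attach Lagrange multipliers $\lambda_{x_k}$ to the per-$x_k$ normalization constraints, and solve the resulting KKT stationarity conditions to obtain the Gibbs form in~\eqref{eq:P_update}. The paper's proof stops at writing down the KKT equation, while you additionally carry out the differentiation, the conditional-independence bookkeeping, and the cross-entropy-minus-entropy identification of the exponent with $-\psi_s(u_k,x_k)$ --- details the paper leaves implicit, and which you execute correctly.
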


\begin{proof}
Due to its concavity,  to maximize  $\mc L^{\mathrm{VB}}_s(\dv P, \dv Q)$  with respect to $\dv P$ for given $\dv Q$,  we add the Lagrange multipliers $\lambda_{x_k}\geq 0$  for each constraint $\sum_{u_k\in \mc U_k}p(u_k|x_k) = 1$ with $x_k\in \mc X_k$.  For each $s$, $\lambda_{x_k}\geq 0$ and $p(u_k|x_k)$ can be explicitly found by solving the KKT conditions, e.g., 
{\small
\begin{align}
\frac{\partial}{\partial p(u_k|x_k)}\left[\mc L^{\mathrm{VB}}_s(\dv P,\dv Q) \!+\!\!\sum_{x_k\in \mc X_k}\lambda_{x_k}\!\!\left(\sum_{u_k\in \mc U_k}p(u_k|x_k) - 1\right)\right] = 0.\nonumber
\end{align}}
This completes the proof.
\end{proof}

\begin{algorithm}[t!]
\caption{BA-DIB training algorithm for discrete data}\label{algo:BA_DMC}
\begin{algorithmic}[1]
\smallskip
\Inputs{discrete pmf $P_{X_1,\ldots,X_k,Y}$, parameter $s\geq 0$.}
\Outputs{optimal $P^*_{U_k|X_k}$, pair $(\Delta_s,R_s)$.}
\Initialize{Set $t=0$ and set $\dv P^{(0)}$ with $p(u_k|x_k)= \frac{1}{|\mc U_k|}$ \\ for $u_k\in \mc U_k$, $x_k\in \mc X_k$, $k=1,\ldots, K$.}
\Repeat 
\State Compute $\dv Q^{(t+1)}$ using \eqref{eq:Qstark} and \eqref{eq:Qstarall}. 
\State Compute $\dv P^{(t+1)}$ using \eqref{eq:P_update}. 
\State $t \leftarrow t+1$.
\Until{convergence.}
\end{algorithmic}
\end{algorithm}

After convergence of the encoders $\dv P^{*}$ and decoders $\dv Q^*$, the target variable $Y$ can be inferred for a new observation using $P^*_{U_K|X_k}(U_k|X_k)$ and the soft estimate provided by $Q_{Y|U_1,\ldots, U_K}^*(Y|U_1,\ldots, U_K)$.

\subsection{Convergence of the BA-DIB Algorithm}
 Algorithm \ref{algo:BA_DMC} essentially falls into the class of  the Successive Upper-Bound Minimization (SUM) algorithms \cite{Razaviyayn:SIAM:UnifiedConvergence}  in which $\mc L^{\mathrm{VB}}_s(\dv P, \dv Q)$ acts as a globally tight lower bound on $\mc L_s(\dv P)$. Algorithm \ref{algo:BA_DMC} provides a sequence $\dv P^{(t)}$ for each iteration $t$, which converges to a stationary point of the problem in~\eqref{eq:VarEq}. 
\vspace{-1mm}
\begin{proposition}
Every limit point of the sequence $\dv P^{(t)}$ generated by Algorithm~\ref{algo:BA_DMC} converges to a stationary point of~\eqref{eq:VarEq}.
\end{proposition}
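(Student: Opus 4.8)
The plan is to cast Algorithm~\ref{algo:BA_DMC} explicitly as an instance of the Successive Upper-Bound Minimization (SUM) framework of~\cite{Razaviyayn:SIAM:UnifiedConvergence} and then invoke that paper's convergence guarantee. Concretely, since we maximize rather than minimize, I would work with the objective $f(\dv P):=\mc L_s(\dv P)$ to be maximized over the (compact, convex) product simplex $\mc P$ of conditional pmfs $\{P_{U_k|X_k}\}$. By Lemma~\ref{lemma:QUpdate}, for every fixed $\dv P'$ the surrogate $g(\dv P\,;\dv P'):=\mc L^{\mathrm{VB}}_s(\dv P,\dv Q^*(\dv P'))$, where $\dv Q^*(\dv P')$ is the unique maximizer from~\eqref{eq:Qstark}--\eqref{eq:Qstarall}, satisfies the two defining SUM properties: it is a global \emph{lower} bound, $g(\dv P\,;\dv P')\leq f(\dv P)$ for all $\dv P$, and it is \emph{tight} at the anchor point, $g(\dv P'\,;\dv P')=f(\dv P')$. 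The first is exactly the inequality of Lemma~\ref{lemma:QUpdate}; the second holds because $\dv Q^*(\dv P')$ attains the maximum $\max_{\dv Q}\mc L^{\mathrm{VB}}_s(\dv P',\dv Q)=\mc L_s(\dv P')$ at $\dv P=\dv P'$. The BA update $\dv P^{(t+1)}=\arg\max_{\dv P}g(\dv P\,;\dv P^{(t)})$ is then precisely the surrogate-maximization step, realized in closed form by~\eqref{eq:P_update}.

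Next I would verify the remaining regularity hypotheses of~\cite{Razaviyayn:SIAM:UnifiedConvergence}. The feasible set $\mc P$ is a Cartesian product of probability simplices, hence nonempty, compact and convex. The objective $\mc L_s(\dv P)$ is continuous on $\mc P$ (it is built from entropies and mutual informations that are continuous functionals of the pmfs). The surrogate $g(\cdot\,;\cdot)$ is jointly continuous in both arguments, and by Lemma~\ref{lem:convex} it is concave in its first argument for fixed second argument, so the inner maximization defining~\eqref{eq:P_update} has a well-defined solution. The final ingredient is a first-order consistency condition: the directional derivative of the surrogate matches that of the true objective at the anchor, $g'(\dv P'\,;\dv P')\big|_{\dv P=\dv P'}=f'(\dv P')$ along every feasible direction. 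This follows from tightness together with the global-lower-bound property: since $g(\cdot\,;\dv P')$ touches $f$ from below at $\dv P'$ and both are differentiable there, their gradients must agree in every feasible direction (otherwise the bound would be violated locally).

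With these properties in place, I would apply the SUM convergence theorem of~\cite{Razaviyayn:SIAM:UnifiedConvergence} directly: it asserts that every limit point of the iterates $\dv P^{(t)}$ produced by successively maximizing a tight, gradient-consistent lower bound is a stationary point of the original problem $\max_{\dv P}\mc L_s(\dv P)$, which by~\eqref{eq:VarEq} is the same as $\max_{\dv P}\max_{\dv Q}\mc L^{\mathrm{VB}}_s(\dv P,\dv Q)$. Monotonicity of the objective along the iterates, $f(\dv P^{(t+1)})\geq g(\dv P^{(t+1)}\,;\dv P^{(t)})\geq g(\dv P^{(t)}\,;\dv P^{(t)})=f(\dv P^{(t)})$, combined with the compactness of $\mc P$ guarantees existence of limit points and convergence of the objective values, completing the argument.

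I expect the main obstacle to be the careful verification of the first-order consistency (gradient-matching) condition rather than the structural identification of the algorithm as SUM. The lower-bound and tightness properties are immediate from Lemma~\ref{lemma:QUpdate}, but arguing that the surrogate and the true objective share directional derivatives at the anchor requires attention to differentiability at the boundary of the simplex, where some coordinates $p(u_k|x_k)$ may vanish; one must ensure the argument uses feasible directions and that the entropy terms remain differentiable (or handle boundary points via the standard convention that the stationarity condition is expressed through the KKT system already solved in~\eqref{eq:P_update}). A secondary subtlety worth a sentence is confirming that the composite surrogate $g(\dv P\,;\dv P')=\mc L^{\mathrm{VB}}_s(\dv P,\dv Q^*(\dv P'))$ inherits the required joint continuity given that $\dv Q^*$ depends on $\dv P'$ through the marginals $P_{U_k}$, $P_{Y|U_k}$ and $P_{Y|U_{\mc K}}$, which are themselves continuous images of $\dv P'$.
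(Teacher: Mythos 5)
Your proposal is correct and follows essentially the same route as the paper: both cast Algorithm~\ref{algo:BA_DMC} as a SUM-type scheme in which $\mc L^{\mathrm{VB}}_s(\dv P,\dv Q^*(\dv P'))$ is, by Lemma~\ref{lemma:QUpdate}, a globally tight lower bound on $\mc L_s(\dv P)$, and then invoke the convergence theorem of~\cite{Razaviyayn:SIAM:UnifiedConvergence}. The regularity and first-order consistency conditions you verify by hand (your gradient-matching argument at the anchor) are exactly what the paper delegates to \cite[Proposition 1]{Razaviyayn:SIAM:UnifiedConvergence}, so your write-up is simply a more explicit version of the paper's proof.
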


\begin{proof}
Let $\dv Q^*(\dv P):= \arg\max_{\dv Q}\mc L^{\mathrm{VB}}_s(\dv P, \dv Q)$. From Lemma~\ref{lemma:QUpdate},  $\mc L^{\mathrm{VB}}_s(\dv P, \dv Q^*(\dv P'))\leq \mc L^{\mathrm{VB}}_s(\dv P, \dv Q^*(\dv P)) = \mc L_s(\dv P) $ for $\dv P'\neq \dv P$.  Since $\mc L_s(\dv P)$ and $\mc L^{\mathrm{VB}}_s(\dv P, \dv Q^*(\dv P'))$ satisfy \cite[Proposition 1]{Razaviyayn:SIAM:UnifiedConvergence},  then $\mc L^{\mathrm{VB}}_s(\dv P, \dv Q^*(\dv P'))$  satisfies A1-A4 in \cite{Razaviyayn:SIAM:UnifiedConvergence}. Convergence to a stationary point of~\eqref{eq:VarEq} is due to \cite[Theorem 1]{Razaviyayn:SIAM:UnifiedConvergence}.
\end{proof}

\begin{remark}
The resulting set of self consistent equations  \eqref{eq:Qstark}, \eqref{eq:Qstarall} and \eqref{eq:P_update_psi} satisfied by any stationary point of the D-IB problem extend those of the standard point-to-point IB problem~\cite{GlobersonTishby:Techical:Gaussian} to the distributed IB problem with $K\geq 2$ encoders. The resulting equations are reminiscent of those for the point-to-point IB problem with an additional divergence term in  \eqref{eq:P_update_psi} for encoder $k$ averaged over the descriptions at the other $\mc K\setminus k$ encoders.
\end{remark}

\subsection{BA-DIB Algorithm for the Vector Gaussian Model }\label{ssec:GaussBA}
Computing the relevance-complexity for the vector Gaussian model from Theorem~\ref{th:GaussSumCap} is a convex optimization problem on $\dv \Omega_k$, which can be efficiently solved with generic tools. 
In the following, we extend Algorithm~\ref{algo:BA_DMC} as an alternative method to maximize relevance under sum-complexity constraint for vector Gaussian models.

For finite alphabet sources the updating rules of  $\dv Q^{(t+1)}$ and $\dv P^{(t+1)}$ in Algorithm~\ref{algo:BA_DMC} are relatively easy, but they become unfeasible for continuous alphabet sources. We leverage on the optimality of Gaussian encoders, shown in  Theorem~\ref{th:GaussSumCap}, to restrict the optimization of $\dv P$ to Gaussian distributions, which are  represented by a finite set of parameters, namely its mean and covariance. We show that if $\dv P^{(t)}$ are Gaussian distributions, then $\dv{P}^{(t+1)}$ are also Gaussian distributions, and can be computed with an efficient update algorithm of its representing parameters. In particular, if at time $t$ the $k$-th distributions $P_{\dv U_k|\dv X_k}^{(t)}$ is given by
\begin{align}
\dv U_k^{t} = \dv A_{k}^{t}\dv X_k +\dv Z_{k}^{t},\label{eq:testChan}
\end{align}
where $\dv Z_{k}^{t}\sim\mc{CN}(\dv 0,\dv \Sigma_{\dv z_{k}^{t}})$, we show that at $t+1$, for $\dv P^{(t+1)}$  updated as in \eqref{eq:P_update}, the encoder $P_{\dv U_{k}|\dv X_k}^{(t+1)}$ corresponds to $\dv U_{k}^{t+1} = \dv A_{k}^{t+1}\dv X_{k}+\dv Z_{k}^{t+1}$,
where $\dv Z_{k}^{t+1}\!\sim\mc{CN}(\dv 0, \dv\Sigma_{\dv z_{k}^{t+1}})$  and  $\dv\Sigma_{\dv z_{k}^{t+1}}, \dv A_{k}^{t+1}$ are updated as
\begin{align}
\dv \Sigma_{\dv z_k^{t+1}} &=\left(\left(1+\frac{1}{s}\right)\dv \Sigma_{\dv u_k^t|\dv y}^{-1}  - \frac{1}{s} \dv \Sigma_{\dv u_k^t|\dv u_{\mc K\setminus k}^t}^{-1}\right)^{-1},\label{eq:SigmaUpdate}\\
\dv A_{k}^{t+1} &=\dv \Sigma_{\dv z_k^{t+1}} \left(\left(1+\frac{1}{s}\right)\dv \Sigma_{\dv u_k^t|\dv y}^{-1}\dv  A_{k}^t(\dv I - \dv \Sigma_{\dv x_k|\dv y}\dv \Sigma_{\dv x_k}^{-1})
\right.\left.
-\frac{1}{s}\dv \Sigma_{\dv u_k^t|\dv u_{\mc K\setminus k}^t}^{-1}\dv  A_{k}^t(\dv I - \dv \Sigma_{\dv x_k|\dv u_{\mc K\setminus k}^t}\dv \Sigma_{\dv x_k}^{-1})\right).\label{eq:AUpdate}
\end{align}
The detailed update procedure is given in Algorithm~\ref{algo:BA_Gauss}, and the details of the derivation  are relegated to Section~\ref{app:BADIVGauss}.

\begin{remark}
Algorithm~\ref{algo:BA_Gauss} generalizes the iterative  algorithm  for single encoder Gaussian IB in~\cite{journals/jmlr/ChechikGTW05}  to the Gaussian D-IB  with $K$ encoders and sum-complexity constraint. Similarly to the solution in~\cite{journals/jmlr/ChechikGTW05}, the optimal description at each encoder is given by a noisy linear projection of the observation whose dimensionality is determined by the regularization parameter $s$ and the second order moments between the observed data and the target variable, as well as a term depending on the  observed data with 
\end{remark}

\begin{algorithm}[!t]
\caption{BA-DIB algorithm for the Gaussin Vector D-IB}\label{algo:BA_Gauss}
\begin{algorithmic}[1]
\smallskip
\Inputs{covariance ${\dv \Sigma}_{\dv y, \dv x_1,\ldots,\dv x_k}$, parameter $s\geq 0$.}
\Outputs{optimal pairs $(\dv A_k^{*},\dv \Sigma_{\dv z_k^{*}} )$, $k=1,\ldots, K$.}
\Initialize{Set randomly $\dv A_k^{0}$ and $\dv \Sigma_{\dv z_k^{0}} \succeq 0$, $k\in \mc K$.}
\Repeat 
\State Compute $\dv \Sigma_{\dv x_k|\dv u_{\mc K\setminus k}^t}$ and update for $k\in \mc K$
\begin{align}
\dv \Sigma_{\dv u_k^t|\dv y} &= \dv A^{t}_k \dv \Sigma_{\dv x_k|\dv y} \dv A^{t,\dagger}_k + \dv \Sigma_{\dv z_k^t}\label{eq:Cov_ux}\\
\dv \Sigma_{\dv u_k^t|\dv u_{\mc K\setminus k}^t} &= \dv A^{t}_k \dv \Sigma_{\dv x_k|\dv u_{\mc K\setminus k}^t} \dv A^{t,\dagger}_k + \dv \Sigma_{\dv z_k^t},\label{eq:Cov_uus}
\end{align}

\State Compute $\dv \Sigma_{\dv z_k^{t+1}}$ as in \eqref{eq:SigmaUpdate} for $k\in \mc K$.
\State Compute $\dv A_k^{t+1}$ as \eqref{eq:AUpdate}, $k\in \mc K$.
\State $t \leftarrow t+1$.
\Until{convergence.}
\end{algorithmic}
\end{algorithm}
\vspace{-2mm}

 \section{Variational Distributed  IB Algorithm}\label{sec:VariationalDIB}

In cases in which the joint distribution of the data is either known perfectly or can be estimated to high accuracy, the maximizing distributions $\dv P,\dv Q$ of the variational DIB cost in \eqref{eq:VarEq} can be found through BA-DIB algorithm as shown in Section~\ref{sec:Blahut-Algorithms}. However, in general only a set of training samples $\{(X_{1,i},\ldots, X_{K,i},Y_i)\}_{i=1}^n$ are available. In this section, we provide a method to optimize~\eqref{eq:VarEq} in this situation by parameterizing the encoding and decoding distributions that are to optimize using a family of distributions whose parameters are determined by DNNs. This allows us to formulate~\eqref{eq:VarEq} in terms of the DNN parameters and optimize it by using the reparameterization trick~\cite{KW13}, Monte Carlo sampling, as well as stochastic gradient descent (SGD) type algorithms. {The proposed method generalizes the variational framework in~\cite{CMT:16, AFDM:ICLR:2017,AS:PML:18, peng2018variational,dai2018compressing}  to the scenario with $K$ encoders in Figure~\ref{fig:Schm}.}

{Let $P_{\theta_k}(u_k|x_k)$ denote the family of encoding probability distributions $P_{U_k|X_k}$ over $\mc U_k$ for each element on $\mc X_k$, parameterized by the output of a DNN $f_{\theta_k}$ with parameters $\theta_k$.
A common example  is the family of multivariate Gaussian distributions, \cite{KW13,AFDM:ICLR:2017}, which are parameterized by the mean $\boldsymbol{\mu}_k^{\theta}$ and covariance matrix $\dv \Sigma_k^{\theta}$, i.e., $\boldsymbol\gamma_k := (\boldsymbol{\mu}_k^{\theta}, \dv \Sigma_k^{\theta})$.  Given an observation $X_k$, the values of $(\boldsymbol{\mu}_k^{\theta}, \dv \Sigma_k^{\theta})$ are determined by the output of the DNN $f_{\theta_k}$ and the corresponding family member is given by $P_{\theta_k}(u_k|x_k) = \mc {N} (u_k; \boldsymbol{\mu}_k^{\theta}, \dv \Sigma_k^{\theta})$.  For discrete distributions, a common example are concrete variables \cite{JGP17} (or Gumbel-Softmax \cite{MMT16}). More expressive paremeterizations can also be considered, e.g.,~\cite{KSW16, PPM17}.}

{Similarly, for decoders $Q_{Y|U_k}$ over $\mc Y$ for each element on $\mc U_k$, and the decoding distribution $Q_{Y|U_{\mc K}}$ over $\mc Y$ for each element in $ \mc U_1\times \cdots \times \mc U_K$, let $Q_{\psi_k}(y|u_k)$, $Q_{\psi_{\mc K}}(y|u_{\mc K})$ denote the families of distributions parameterized by the output of the DNNs $f_{\psi_k}, f_{\psi_{\mc K}}$, respectively. Finally, for the prior distributions $Q_{U_k}(u_k)$ over $\mc U_k$ we define the family of distributions $Q_{\varphi_k}(u_k)$, which do not depend on a DNN.}

By restricting the optimization of the variational DIB cost in~\eqref{eq:VarEq} to the encoder, decoder and priors within the families of distributions  $P_{\theta_k}(u_k|x_k)$, $Q_{\psi_k}(y|u_k)$, $Q_{\psi_{\mc K}}(y|u_{\mc K})$ and $Q_{\varphi_k}(u_k)$ we get
\begin{align}
\max_{\dv P}\max_{\dv Q}\mc L^{\mathrm{VB}}_{s}(\dv P,\dv Q)\geq \max_{\boldsymbol{\theta},\boldsymbol{\phi}, \boldsymbol\varphi}\mc L^{\mathrm{NN}}_s({\boldsymbol\theta, \boldsymbol\phi, \boldsymbol \varphi} ),\label{eq:NNCost}
\end{align}
where we use the notation $\boldsymbol{\theta}:=[\theta_1,\ldots,\theta_K]$, $\boldsymbol{\phi}:=[\phi_1,\ldots,\phi_K, \phi_{\mc K}]$ and $\boldsymbol \varphi:=[\varphi_1,\ldots, \varphi_K]$ to denote the DNN and prior parameters and, the cost in~\eqref{eq:NNCost} is given by
\begin{align}
\mc L^{\mathrm{NN}}_s({\boldsymbol\theta, \boldsymbol\phi, \boldsymbol \varphi} ):=
&\mathds{E}_{P_{Y,X}}\mathds{E}_{\{P_{\theta_k}(U_k|X_k)\}}\Big[\log Q_{\phi_{\mc K}}(Y| U_{\mc K})
\nonumber\\
&+s\sum_{k=1}^K\Big(\log Q_{\phi_{k}}(Y| U_{k})- D_{\mathrm{KL}}(P_{\theta_k}(U_{k}|X_{k})\|Q_{\varphi_k}(U_k)) \Big)\Big]\nonumber.
\end{align}

\begin{algorithm}[!t]
\caption{D-VIB training algorithm for the D-IB problem}\label{algo:VDIB}
\begin{algorithmic}[1]
\smallskip
\Inputs{Training Dataset $\mc D$, parameter $s\geq 0$, DNN parameters $\boldsymbol{\theta},\boldsymbol{\phi}, \boldsymbol\varphi$.}
\Outputs{optimized $\boldsymbol{\theta},\boldsymbol{\phi}, \boldsymbol\varphi$ and pair $(\Delta_s,R_s)$.}
\Initialize{Initialize $\boldsymbol{\theta},\boldsymbol{\phi}, \boldsymbol\varphi$ and set $t=0$.}
\Repeat 
\State Randomly select $b$ minibatch samples $X_{\mc K}^b= (X_1^b,\ldots,  X_K^b)$ and the corresponding $Y^b$ form $\mc D$.
\State Draw $m$ random samples $Z_k^m\sim P_{Z_k}$, $k=1,\ldots, K$.
\State Compute $m$ samples $U_{k,l}^b = g_{\phi_k}(X^b,Z_{k,l}^b)$.
\State Compute gradients of the empirical DIB cost in~\eqref{eq:Lhatoptimization},  $\nabla_{\boldsymbol\theta, \boldsymbol\phi, \boldsymbol\varphi} \sum_{i=1}^b\mathcal{L}^{\mathrm{emp}}_{s,i}({\boldsymbol\theta, \boldsymbol\phi, \boldsymbol \varphi} )$  for $(X_{\mc K}^b,Y^b)$.
\State Update $(\boldsymbol\theta, \boldsymbol\phi, \boldsymbol \varphi)$ using the estimated gradient (e.g. with SGD or ADAM).
\Until{convergence of $(\boldsymbol\theta, \boldsymbol\phi, \boldsymbol \varphi)$.}
\end{algorithmic}
\end{algorithm}

Next, we train the DNNs to maximize a Monte Carlo approximation of~\eqref{eq:NNCost} over ${\boldsymbol\theta, \boldsymbol\phi, \boldsymbol \varphi}$ using optimization methods such as SGD or ADAM~\cite{KDB14} with backpropagation.  We use the reparameterization trick~\cite{KW13}, to sample from $P_{\theta_k}(U_k|X_k)$. In particular, we consider $P_{\theta_k}(U_k|X_k)$. to belong to a parametric family of distributions that can be sampled by first sampling a random variable $Z_k$ with distribution $P_{Z_k}(z_k)$, $z_k\in \mc Z_k$ and then transforming the samples using some function $g_{\theta_k}:\mc X_k\times \mc Z_k\rightarrow \mc U_k$ parameterized by $\theta_k$, such that $U_k = g_{\theta_k}(x_k,Z_k)\sim P_{\theta_k}(U_k|x_k)$.  Various parametric families of distributions fall within this class both for discrete and continuous latent spaces, e.g., the Gumbel-Softmax distributions and the Gaussian distributions mentioned above.
The reparametrization trick reduces the original optimization to estimating $\theta_k$ of the deterministic function $g_{\theta_k}$ and allows to compute estimates of the gradient  using backpropagation~\cite{KW13}. The variational DIB cost in~\eqref{eq:NNCost} can be approximated, by sampling  $m$ independent samples $\{u_{k,i,j}\}_{j=1}^m \sim P_{\theta_k}(u_{k}|x_{k,i})$ for each training sample $(x_{1,i},\ldots, x_{K,i},y_i)$, $i=1, \hdots, n$. Sampling is performed by using $u_{k,i,j} = g_{\phi_k}(x_{k,i},z_{k,j})$ with $\{z_{k,j}\}_{j=1}^m$ i.i.d. sampled from $P_{Z_k}$. Altogether,  we have the empirical DIB cost for the $i$-th sample in the training dataset:
\begin{align}
\mathcal{L}^{\mathrm{emp}}_{s,i}({\boldsymbol\theta, \boldsymbol\phi, \boldsymbol \varphi} )
&\!:=\!\frac{1}{m}\sum_{j=1}^m\Big[\!\log Q_{\phi_{\mc K}}(y_i| u_{1,i,j},\ldots, u_{K,i,j} )\label{eq:EmpiricalL}
\\
&\!\!+ s\!\sum_{k=1}^K \!\Big(\! \log Q_{\phi_{k}}(y_i|u_{k,i,j})\!-\!D_{\mathrm{KL}}(P_{\theta_k}(U_{k,i}|x_{k,i})\|Q_{\varphi_k}(U_{k,i}))
\Big)\Big].
\nonumber
\end{align}

Finally, we maximize the empirical DIB cost over the DNN parameters  $\boldsymbol{\theta},\boldsymbol{\phi},\boldsymbol\varphi$ over the training data as,
\begin{align}
 \max_{\boldsymbol{\theta},\boldsymbol{\phi}, \boldsymbol\varphi}\frac{1}{n}\sum_{i=1}^n\mathcal{L}^{\mathrm{emp}}_{s,i}({\boldsymbol\theta, \boldsymbol\phi, \boldsymbol \varphi} ).\label{eq:Lhatoptimization}
\end{align}

By the law of large numbers, for large $n,m$, we have $1/n\sum_{i=1}^n\mathcal{L}^{\mathrm{emp}}_{s,i}({\boldsymbol\theta, \boldsymbol\phi, \boldsymbol \varphi} ))\rightarrow \mc L^{\mathrm{NN}}_s({\boldsymbol\theta, \boldsymbol\phi, \boldsymbol \varphi} )$ almost surely.

The training of the D-VIB algorithm is detailed in Algorithm~\ref{algo:VDIB}. After convergence of the DNN parameters  to $\boldsymbol{\theta}^*,\boldsymbol{\phi}^*, \boldsymbol\varphi^*$,  for a new observation, the target variable $Y$ can be inferred by sampling from the encoders $P_{\theta_k^*}(U_k|X_k)$ and the soft estimate provided by decoder $Q_{\phi_{\mc K}^*}(Y|U_1,\ldots, U_K)$.

\begin{figure}[t!]
\vspace{-2mm}
\centering
\includegraphics[width=0.75\textwidth]{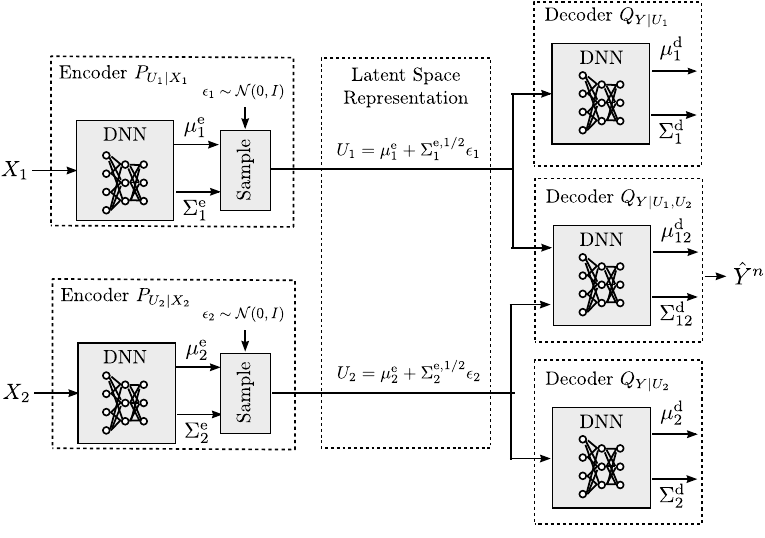}
\caption{ 
Distributed learning architecture for the minimization of the variational DIB cost in~\eqref{eq:VarEq} for $K=2$ using multivariate Gaussian distributions to parameterize the encoders, decoders and prior distributions. The decoders $Q_{Y|U_1}$ and $Q_{Y|U_2}$ influence in the regularization.}
\label{fig:Latent_Variables}
\vspace{-3mm}
\end{figure}

\subsection{D-VIB for Regression and Classification }

The choice of parametric distributions  $P_{\theta_k}(u_k|x_k)$, $Q_{\psi_k}(y|u_k)$, $Q_{\psi_{\mc K}}(y|u_{\mc K})$ and $Q_{\varphi_k}(u_k)$ depends on the application. Nevertheless, the parametric families of distributions should be chosen to be expressive enough to approximate the optimal encoders minimizing~\eqref{eq:CostF} and the optimal conditional distributions~\eqref{eq:Qstark} and~\eqref{eq:Qstarall}, to minimize the gap between the lower bound given by the variational DIB cost~\eqref{eq:FunctionPQ} and the original DIB cost function~\eqref{eq:CostF}. 

For example, in classification problems, i.e., $\mc Y$ is finite, the decoder $Q_{\phi_{\mc K}}(Y|U_{\mc K})$ and the decoders $Q_{\phi_k}(Y|U_k)$, used for regularization can be general categorical distributions parameterized by a DNN with a softmax operation in the last layer, which outputs a vector of dimension $|\mc Y|$. If the target variable $Y$ is continuous, i.e., in regression problems, the decoders can be chosen to lie within the family of multivariate Gaussian distributions. The priors $Q_{\varphi_k}$ and the encoders $P_{\theta_k}(U_k|X_k)$, can be chosen as multivariate Gaussian, such that the divergence in the cost function is easy to compute \cite{KW13,AFDM:ICLR:2017}, or using more expressive parameterization  \cite{KSW16, PPM17}.  See Section~\ref{sec:ExperimentalResults} for some choices for different regression and classification.

%
%


\subsection{D-VIB algorithm for Vector Gaussian Model }\label{ssec:GaussVDIB}
In the D-VIB in Algorithm~\ref{algo:VDIB}, the performance depends on the choice of the parametric family of distributions for the encoders, decoder and prior distributions. By Corollary~\ref{coro:GaussPmf}, when the underlying data model is multivariate vector Gaussian, the optimal distributions $\dv P$ and $\dv Q$ in~\eqref{eq:VarEq} lie within the family of multivariate Gaussian distributions. Motivated by this observation, we consider the following parameterization for $k\in \mc K$:
\begin{align}
P_{\theta_k}( \dv u_k|\dv x_k) &= \mc {N}(\dv u_k;{\boldsymbol{\mu}}_k^{\mathrm{e}}, {\dv \Sigma}_k^{\mathrm{e}})\label{eq:Gauss1}\\
Q_{\phi_{\mc K}}( \hat{\dv y}|\dv u_{\mc K}) &= \mc {N}(\hat{\dv y};{\boldsymbol{\mu}}_{\mc K}^{\mathrm{d}}, {\dv \Sigma}_{\mc K}^{\mathrm{d}})\\
Q_{\phi_k}( \hat{\dv y}|\dv u_{k}) &= \mc {N}(\hat{\dv y};{\boldsymbol{\mu}}_k^{\mathrm{d}}, {\dv \Sigma}_k^{\mathrm{d}})\quad k=1,2,\\
Q_{\varphi_k}(\dv u_k) &= \mathcal{N}(\dv 0, \dv I).\label{eq:Gauss5}
\end{align}
where ${\boldsymbol{\mu}}_k^{\mathrm{e}}, {\dv \Sigma}_k^{\mathrm{e}}$ are the output of a DNN $f_{\theta_k}$ with input $\dv X_k$ that encodes the input into a $n_{u_k}$-dimensional Gaussian distribution, ${\boldsymbol{\mu}}_{\mc K}^{\mathrm{d}}, {\dv \Sigma}_{\mc K}^{\mathrm{d}}$ are the outputs of a DNN $f_{\phi_{\mc K}}$ with inputs $\dv U_1,\ldots, \dv U_K$, sampled from $P_{\theta_k}( \dv u_k|\dv x_k)$  and ${\boldsymbol{\mu}}_k^{\mathrm{d}}, {\dv \Sigma}_k^{\mathrm{d}}$ are the output of a DNN $f_{\phi_{k}}$ with input $\dv U_k$, $k=1,\hdots, K$.

\begin{remark}
The above parametrization might result in some performance loss,  e.g., observe that $Q_{\varphi_k}(\dv u_k)$ may not be expressive enough to approximate the optimal distribution in \eqref{eq:Qstark}. Also, note that the distributions that optimize the variational DIB cost~\eqref{eq:VarEq} do not necessarily minimize the empirical DIB cost~\eqref{eq:Lhatoptimization}.  
\end{remark}

\section{Experimental Results}\label{sec:ExperimentalResults}
In this section, we evaluate the relevance-complexity tradeoffs achieved by the proposed algorithms BA-DIB and D-VIB in experiments with synthetic and real data. 
We also compare the resulting relevance-complexity pairs to the optimal relevance-complexity tradeoff and to an upper bound on the relevance-complexity, which we denote by 
 \textit{Centralized IB upper bound (C-IB)}. The C-IB bound is given by the  pairs $(\Delta,R_{\mathrm{sum}})$ achievable if $(X_1,\ldots, X_K)$ are encoded jointly at a single encoder with complexity  $R_{\mathrm{sum}}= R_1+\cdots+R_{K}$ and are given by the centralized IB problem~\cite{Tishby99theinformation}:
\begin{align}
\Delta_{\mathrm{cIB}}(R_{\mathrm{sum}}) = &\max_{P_{U|X_1,\ldots, X_K}}I(Y;U)\nonumber\\
&\text{s.t. }R_{\mathrm{sum}}\geq I(X_1,\ldots, X_K;U).\label{eq:IB_all}
\end{align}



Our implementation of the D-VIB in Algorithm \ref{algo:VDIB}  uses Tensorflow and  its implementation of Adam optimizer \cite{KDB14} over 150 epochs and minibatch size of 64. The learning rate  is computed as $0.001\cdot(0.5)^{\lfloor n_{\mathrm{epoch}}/30 \rfloor}$ at epoch $n_\text{epoch}$.



\begin{figure}[!t]
\begin{minipage}[t]{0.465\linewidth}
\centering
\includegraphics[width=0.975\textwidth,trim=7 9 7 10,clip]{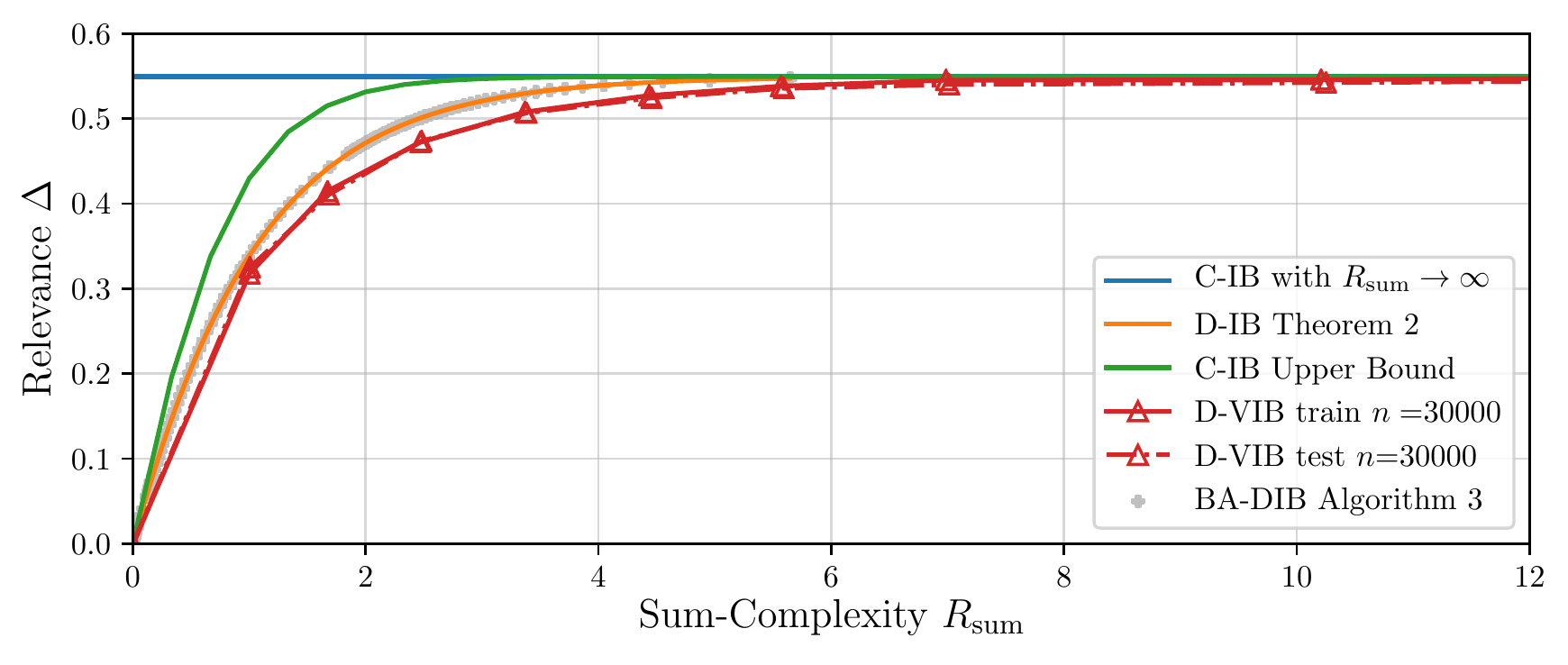}
\caption{Relevance vs. sum-complexity tradeoff for vector Gaussian data model with $K=2$ encoders, $n_y=1$, $n_1 = n_2 = 3$, and achievable pairs with the BA-DIB and D-VIB algorithms for $n=30.000$.
} 
\label{fig:theory}
 \end{minipage}
 \hspace{0.1cm}
 \begin{minipage}[t]{0.465\linewidth}
\centering
\includegraphics[width=0.975\textwidth,trim=7 9 7 10,clip]{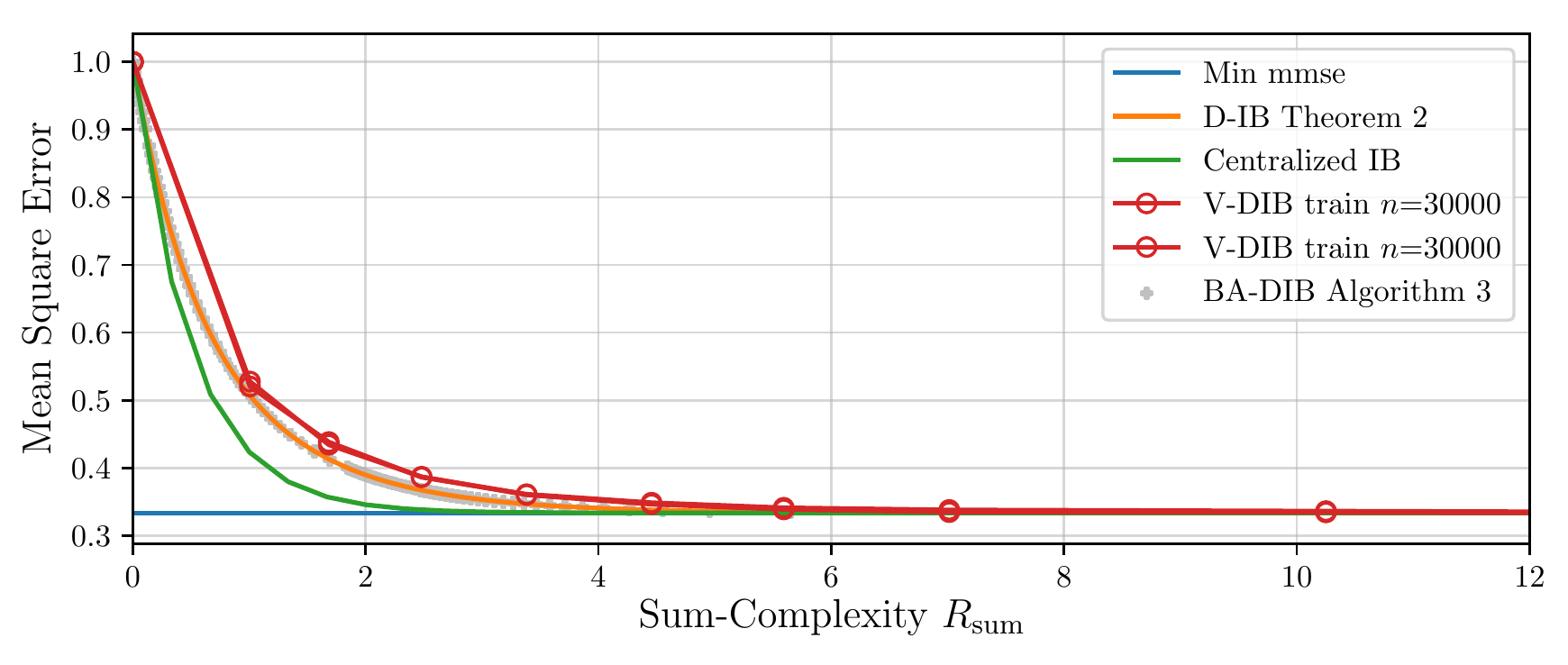}
\caption{Mean square error vs. sum-complexity tradeoff for vector Gaussian data model with $K=2$ encoders, $n_y=1$, $n_1 = n_2 = 3$, and achievable pairs with the BA-DIB and D-VIB algorithms for $n=30.000$.
} 
\label{fig:theory_mmse}
\end{minipage}
\end{figure}

\subsection{Regression for Vector Gaussian Data Model}
In this section, we consider a real valued vector Gaussian data model as in Section~\ref{sec:Gauss} with $K=2$ encoders observing a noisy version of an $n_y$-dimensional Gaussian vector $\dv Y\sim \mc{N}(\dv y; \dv 0, \mathbf{I})$, as $\mathbf{X}_k = \mathbf{H}_{k}\mathbf{Y}+\mathbf{N}_k$, where  $\mathbf{H}_{k}\in \mathds{R}^{n_k\times n_y}$ and the noise is distributed as $\mathbf{N}_k\sim \mc{N}(\dv 0,\mathbf{I})$, $k  = 1,2$. 

The optimal complexity-relevance tradeoff for this model is characterized as in Theorem~\ref{th:GaussSumCap} and can be computed by solving the convex problem~\eqref{th:GaussSumCap}. The C-IB upper bound in~\eqref{eq:IB_all} is an instance the IB problem for vector Gaussian sources in~\cite{journals/jmlr/ChechikGTW05} and it can be computed analytically. 

Next, we consider the numerical evaluation of the proposed algorithms BA-DIB and D-VIB for regression of the Gaussian target variable $\dv Y$ trained using a dataset of $n$ i.i.d. samples $ \{( \dv X_{1,i}, \dv X_{2,i}, \dv Y_i)\}_{i=1}^n$ form the described vector Gaussian data model. For BA-DIB, we assume a jointly Gaussian distribution for the data and empirically estimate its joint mean and covariance. Then, we apply Algorithm~\ref{algo:BA_Gauss} to compute the relevance-complexity pairs and the corresponding estimators for different values of $s$. For D-VIB, we do not make any assumption on the data model and we apply Algorithm~\ref{algo:VDIB} to train the DNNs determining the encoders and decoders for different values of $s$. We use the multivariate Gaussian parameterization in \eqref{eq:Gauss1}-\eqref{eq:Gauss5} for the DNNs architecture shown in Table~\ref{table:Architecture_Gauss}. Specifically, Encoder $k$, $k=1,2$, consists of three dense layers of 512 neurons each followed by rectified linear unit (ReLu) activations. The output of encoder $k$ is processed by a dense layer without nonlinear activation to generate ${\boldsymbol \mu}_k^{\mathrm{e}}$ and ${\dv\Sigma}_k^{\mathrm{e}}$ of size $512$ and $512\times 512$, respectively. Each decoder consists of two dense layers of 512 neurons with ReLu activations. The output of decoder $1$, $2$ and $12$ is processed, each, by a fully connected layer without activation to generate ${\boldsymbol \mu}_k^{\mathrm{d}}$ and ${\dv\Sigma}_k^{\mathrm{d}}$ and ${\boldsymbol \mu}_{12}^{\mathrm{d}}$ and ${\dv\Sigma}_{12}^{\mathrm{d}}$, of size $2$ and $2\times 2$.

Figure \ref{fig:theory} shows the optimal relevance-complexity region of tuples $(\Delta,R_{\mathrm{sum}})$ obtained from Theorem~\ref{th:GaussSumCap} for a vector Gaussian  model with $K=2$ encoders, target variable dimension $n_y=1$, and observations dimension $n_1 = n_2 = 3$, as well as the C-IB bounds $\Delta_{\mathrm{cIB}}(R_{\mathrm{sum}})$ and $\Delta_{\mathrm{cIB}}(\infty)$. A dataset of $40.000$ i.i.d. samples is available which is split into a training set of $n=30.000$ samples and a test set of $10.000$ samples used to evaluate the results from the trained estimators. We show the tuples $(\Delta,R_{\mathrm{sum}})$  resulting from the application of the BA-DIB in Algorithm~\ref{algo:BA_Gauss} for different values of $s\in (0,10]$, and we observe that they lie on the optimal relevance-complexity curve obtained from Theorem~\ref{th:GaussSumCap}. The relevance-complexity pairs resulting form the application of the D-VIB algorithm for different values of $s$ in the range $(0,10]$ calculated as in Proposition~\ref{prop:param} are also shown. Figure \ref{fig:theory_mmse} depicts the mean squared error (MSE) between the original vector $\dv Y$ and the estimation given by the application of the BA-DIB and the D-VIB algorithm as in Figure~\ref{fig:theory} .

\begin{table}[t!]
\begin{minipage}[t]{0.5\linewidth}
\centering 
\includegraphics[width=0.975\textwidth,trim=7 9 7 10,clip]{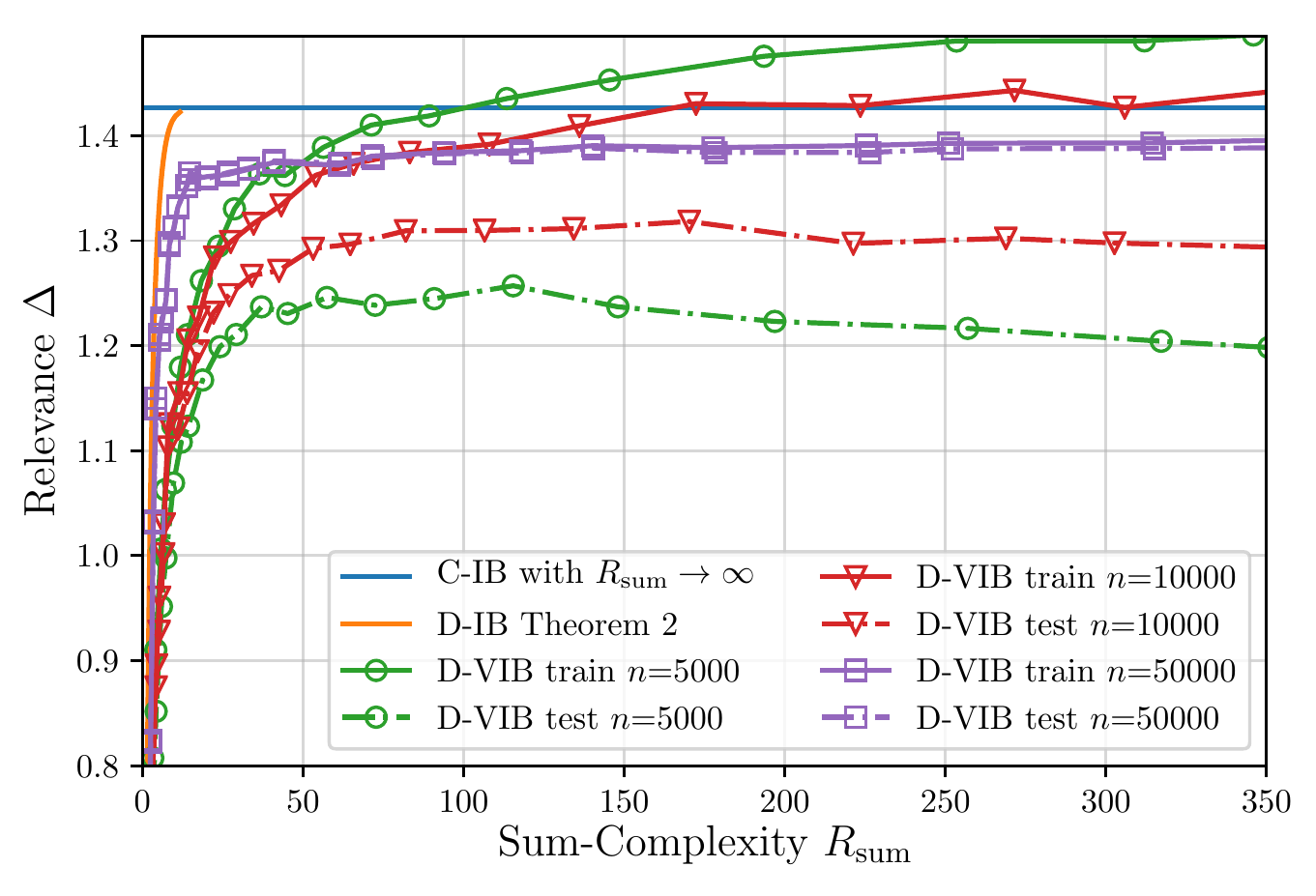}
\captionof{figure}{
Relevance vs. sum-complexity tradeoff for vector Gaussian data model with $K=2$ encoders, $n_y=2$, $n_1 = n_2 = 3$, and achievable pairs with D-VIB for training dataset $n=\{5.000, 10.000, 50.000\}$.} 
\vspace{-5mm}
\label{fig:GaussDIB_TrainTest}
 \end{minipage}
{\footnotesize
\begin{minipage}[t]{0.475\linewidth} 
\vspace{-45mm}
 \caption{DNN architecture for Figure~\ref{fig:theory} and Figure~\ref{fig:GaussDIB_TrainTest}}
\centering
\begin{tabular}{@{}lc@{}}\toprule
&DNN Layers\\
 \midrule
Encoder $k$ & dense [512]-ReLu\\
& dense [512]-ReLu\\
& dense [512]-ReLu\\
Lat. space $k$  & dense [256]-ReLu\\
Decoder $12$ & dense [256]-ReLu\\
Decoder $k$ & dense [256]-ReLu \\
\bottomrule
\label{table:Architecture_Gauss}
\end{tabular}
\end{minipage}}
\end{table}

Figure~\ref{fig:GaussDIB_TrainTest} shows the effect of the size of the training set on generalization of the DNNs trained with the D-VIB in Algorithm~\ref{algo:VDIB} for  a vector Gaussian  model with $K=2$ encoders, target variable dimension $n_y=2$, and observations dimension $n_1 = n_2 = 3$, and the same DNN architecture in Table~\ref{table:Architecture_Gauss}. The achievable relevance-complexity pairs are shown after training the DNNs with training set lengths of $n=\{5.000, 10.000, 50.000\}$ as well as the corresponding achievable pairs for the test data, and the optimal pairs from Theorem~\ref{th:GaussSumCap}.
It is observed that for the training data, the larger the complexity, the higher the achievable relevance. Note that for small and intermediate datasets, e.g., with $n=\{5.000,10.000\}$, there is overfitting of the DNN to the dataset, and the resulting relevance is higher than that allowed by the optimal tradeoff. However, overfitting results in performance loss for test data, which is more pronounced for larger complexity values. Note that the complexity constraint acts as a regularizer which improves generalization of the estimators and that the largest relevance during test is achieved for small complexity values. As the dataset size increases, the difference between resulting relevance-complexity pairs for the test and train datasets is reduced and the better the DNN generalizes.  Note the gap to the optimal tradeoff, which could be reduced with more expressive parameterizations than that in~\eqref{eq:Gauss1}-\eqref{eq:Gauss5}.


\vspace{-2mm}
\subsection{Classification on the multi-view MNIST Dataset}\label{secc:MultiviewMnist}

xIn this section, we consider the evaluation of the proposed algorithms for classification on a two-view version of the the MNIST dataset, consisting of $70.000$ labeled images of handwritten digits between $\{0,\ldots, 9\}$ observed under two different views, from which the corresponding label $Y\in \{0,\ldots, 9\}$ has to be inferred. Each MNIST image consists of $28\times 28$ grayscale pixels.  In this experiment the two views are generated as follows.
View~1 is generated by occluding the image with square of $25 \times 25$ pixels, randomly rotated for each image with angles uniformly selected in the range $[-\pi/4,\pi/4]$. To generate view~2, for each image in view~1, we randomly select an image of the same digit $(0-9)$ from the original MNIST dataset and add independent random noise uniformly sampled from $[0,3]$ to each pixel, and the pixel values are truncated to $[0,1]$.  Encoder $1$ observes view $1$ and Encoder $2$ observes view $2$, and treat each view as normalized vectors $\dv X_k\in [0,1]^{784}$, $k=1,2$. We randomly split the $70.000$ two-view samples $\{\dv X_{1,i}, \dv X_{2,i},  Y_i \}$ into training and test sets of length $n$ and $70.000-n$, respectively. A subset of the two-view MNIST dataset is shown in Figure~\ref{fig:Multi_view_Mnist}. To asses the difficulty of estimating the digit from each of the views independently, we consider a standard convolutional neural network (CNN) architecture with dropout which achieves an accuracy of $99.8\%$ in the noiseless MNIST dataset. Then, we independently retrain the same CNN architecture for each encoder input, achieving an accuracy of $92.3\%$ for view~1 and $79.68\%$ for view 2, as shown in Table~\ref{table:Accuracy}. That is, view 1 is less noisy.


\begin{table}[t!]
\begin{minipage}[t]{0.5\linewidth}
\centering
\includegraphics[width=0.95\textwidth]{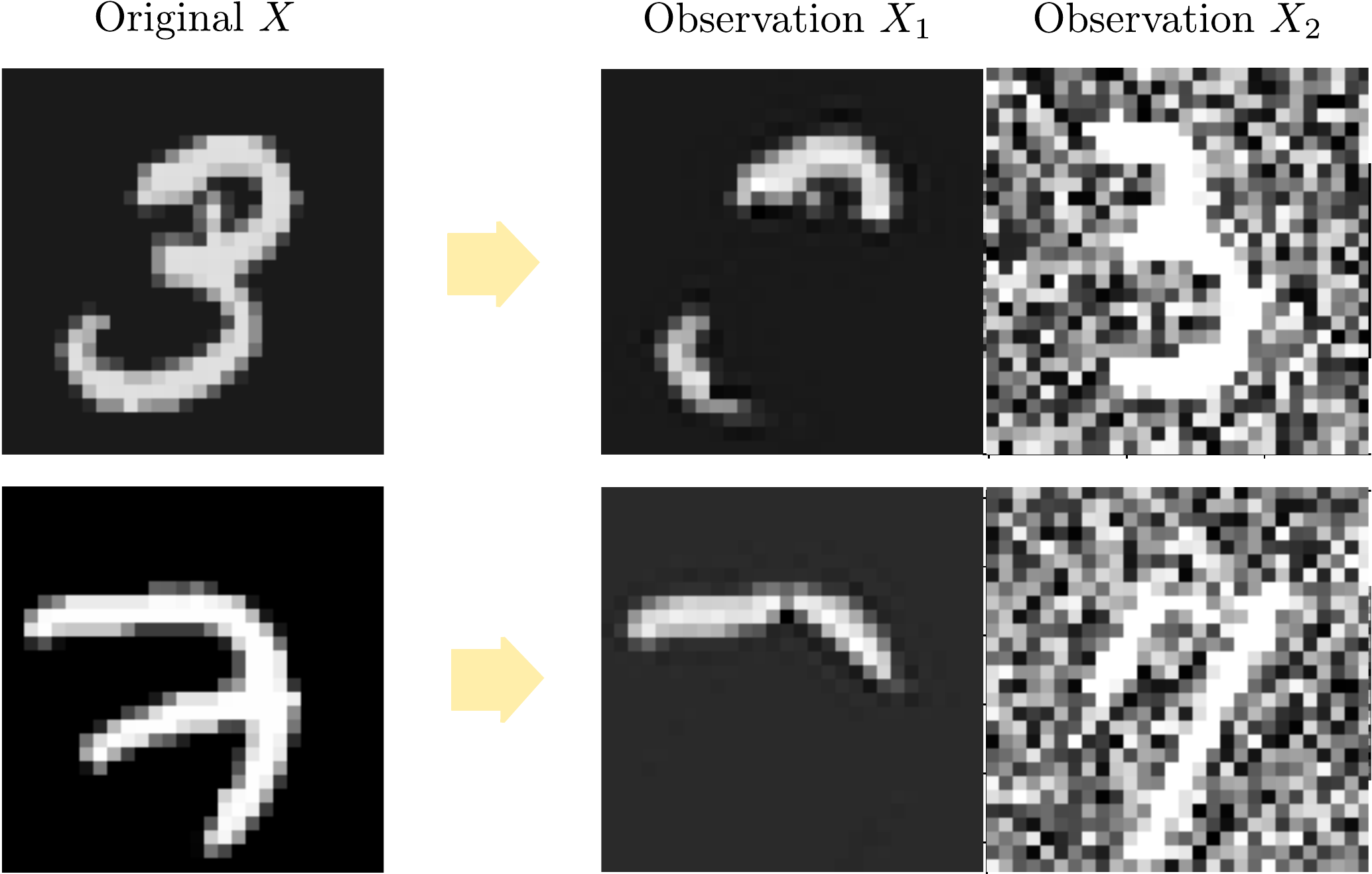}
\captionof{figure}{Two-view handwritten MNIST dataset.}
\label{fig:Multi_view_Mnist}
 \end{minipage}
~
{\footnotesize
\begin{minipage}[t]{0.475\linewidth} 
\vspace{-50mm}
 \caption{DNN architecture for Figure~\ref{fig:MNIST_VDIB} and Figure\ref{fig:MNISTDIB_acc_TrainTest}}
\centering
\begin{tabular}{@{}lc@{}}\toprule
&DNN Layers\\
 \midrule
Encoder $k$ & conv. ker. [5,5,32]-ReLu\\
 & maxpool [2,2,2]\\
& conv. ker. [5,5,64]-ReLu\\
 & maxpool [2,2,2]\\
 & dense [1024]-ReLu\\
 & dropout 0.4\\
& dense [256]-ReLu \\
Latent space $k$ & dense [256]-ReLu\\
Decoder $12$ & dense [256]-ReLu\\
Decoder $k$ & dense [256]-ReLu \\
\bottomrule
\label{table:Architecture_MNIST}
\end{tabular}
\end{minipage}}
\end{table}



We consider the application of the D-VIB algorithm to this model with the CNN architecture in Table~\ref{table:Architecture_MNIST}, in which Encoder $k$, $k=1,2$ is parameterized by a $n_{u_k} = 256$ dimensional multivariate Gaussian distribution $\mc{N}({\boldsymbol \mu}_k^{\mathrm{e}},{\boldsymbol 
\Sigma}_k^{\mathrm{e}})$ determined by the output of a DNN $f_{\theta_k}$ consisting of the concatenation of convolutional, dense and maxpool layers with ReLu activations and dropout. The output of the last layer is followed by a dense layer without activation that generate ${\boldsymbol \mu}_k^{\mathrm{e}}$ and ${\boldsymbol 
\Sigma}_k^{\mathrm{e}}$. The prior is chosen as $Q_{\varphi_k}(\dv u) = \mathcal{N}(\dv 0, \dv I)$. Each decoder takes the samples from $P_{\theta_k}(U_k|X_k)$ and processes its inputs with a dense layer DNN ($f_{\phi_{\mc K}}$ and $f_{\phi_{k}}$) each with 256 neurons and ReLu activation, which  outputs a vector $\hat{\dv y}_i$ of size $|\mc Y| = 10$ normalized with a softmax, corresponding to a distribution over the one-hot encoding of the digit labels $\{0,\ldots, 9\}$ from the $K$ observations, i.e., we have:
\begin{align}
Q_{\phi_k}( \hat{\dv y}_k|\dv u_{k}) &= \text{Softmax}(f_{\phi_k}(U_k)), \quad k=1,2, \text{ and }\\
Q_{\phi_{\mc K}}( \hat{\dv y}|\dv u_{\mc K}) &= \text{Softmax}(f_{\phi_{\mc K}}(U_1, U_2))), 
\end{align}
where $\text{Softmax}(\dv p)$ for $\dv p\in \mathds{R}^d$ is a vector with $i$-th entry $[\text{Softmax}(\dv p)]_i = \exp(p_i)/\sum_{j=1}^{d}\exp(p_j)$, $i=1,\ldots, d$.

For this parameterization, the log-loss terms in the empirical DIB cost~\eqref{eq:EmpiricalL}  reduce to the cross-entropy and the KL divergence terms, can be computed as in~\eqref{eq:DistGaussi}.
%

\begin{figure}[!t]
\begin{minipage}[t]{0.465\linewidth}
\centering 
\includegraphics[width=0.975\textwidth,trim=7 10 7 10,clip]{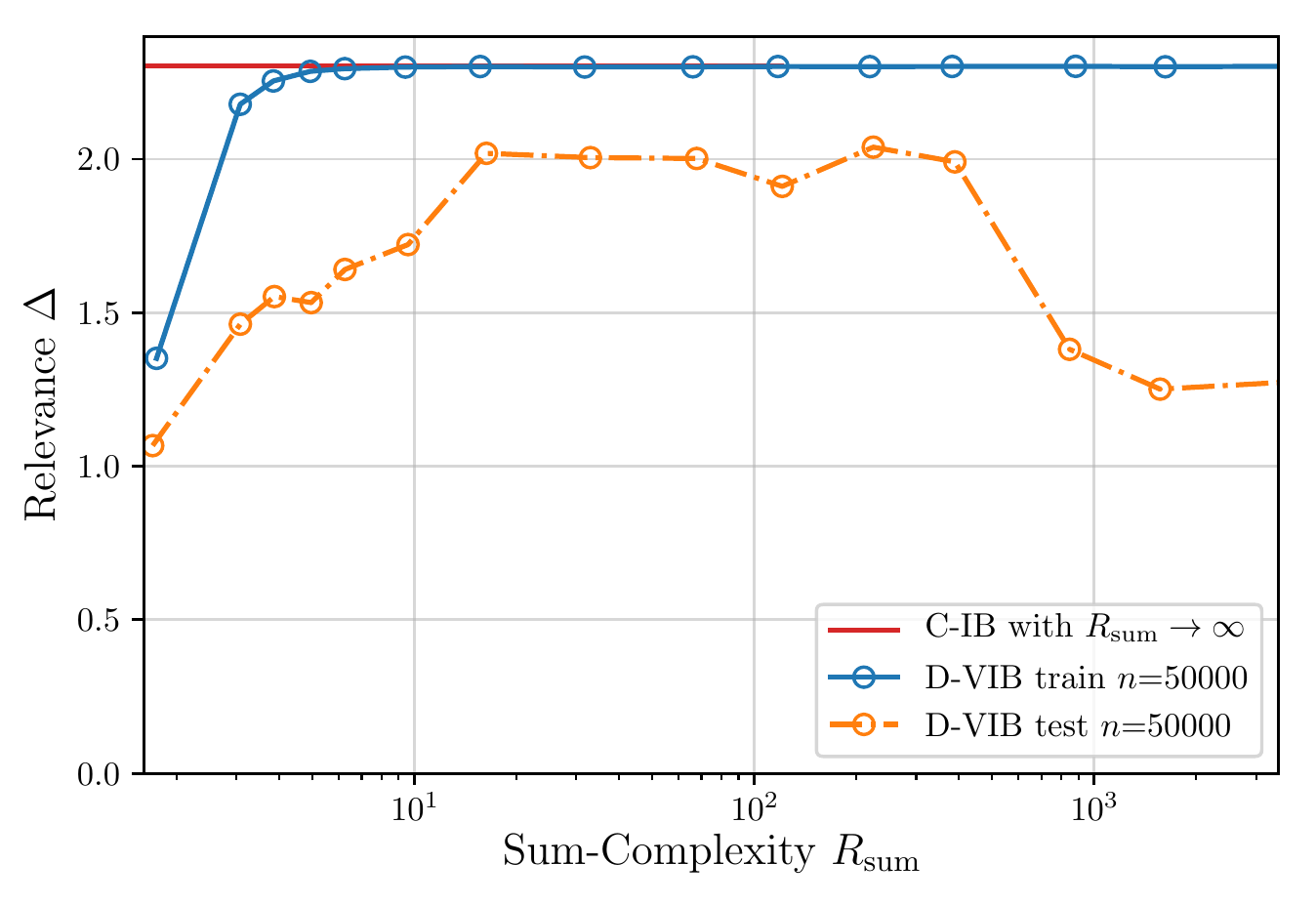}
\caption{Relevance vs. sum-complexity tradeoff for the two-view MNIST dataset with $K=2$ encoders, with the D-VIB algorithm for training dataset $n=50.000$ and $s\in[10^{-10},1]$.}  
\label{fig:MNIST_VDIB}
 \end{minipage}
 \hspace{0.1cm}
 \begin{minipage}[t]{0.465\linewidth}
\centering 
\includegraphics[width=0.975\textwidth,trim=7 7 7 10,clip]{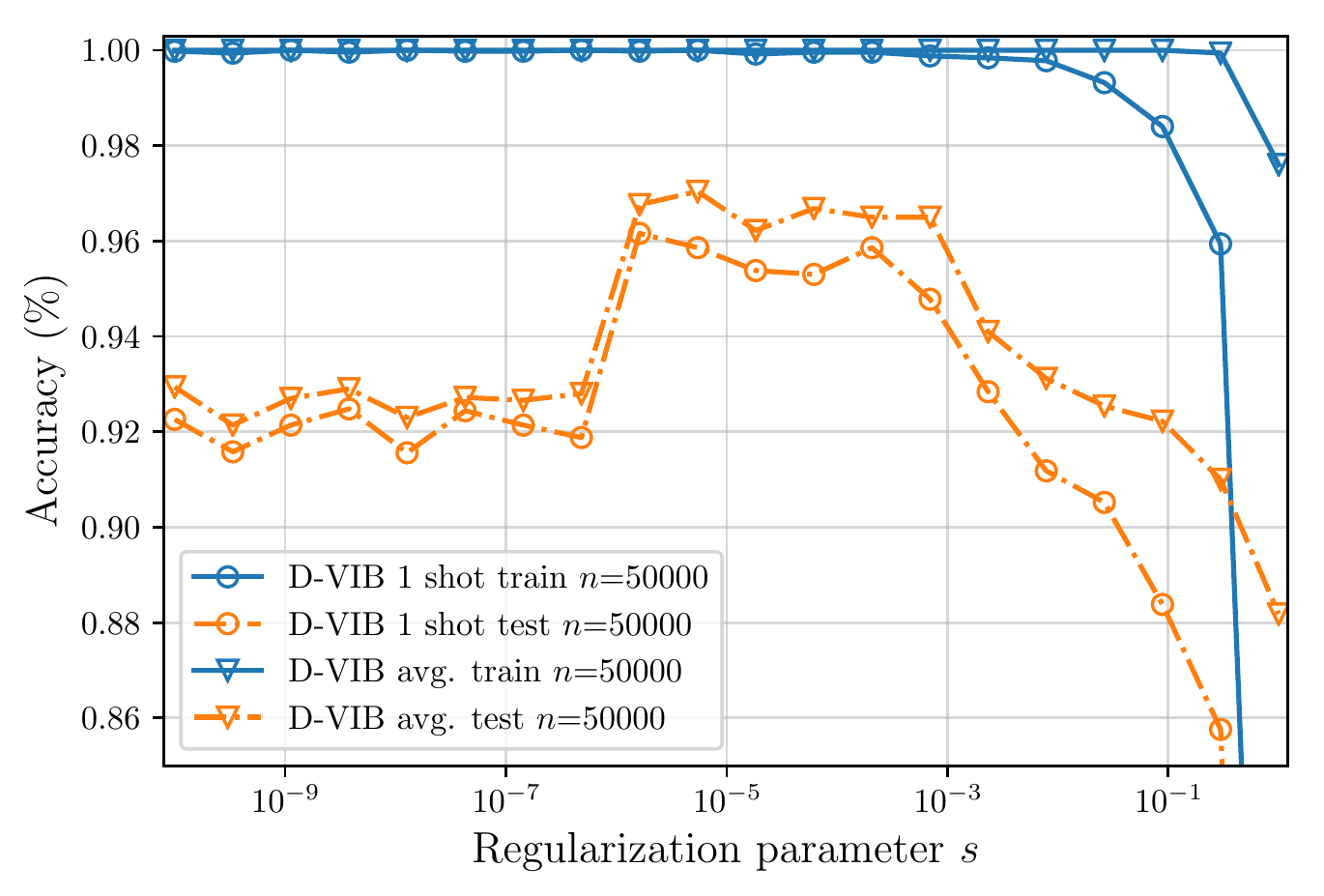}
\caption{Train and test accuracy for the two-view MNIST dataset with $K=2$ encoders, from the estimators resulting with the D-VIB algorithm and the D-VIB avg. for  $n=50.000$ and  $s\in[10^{-10},1]$.} 
\label{fig:MNISTDIB_acc_TrainTest}  
\end{minipage}
\end{figure}

\begin{table}[t!]
\begin{minipage}[t]{0.5\linewidth}
\includegraphics[width=0.975\textwidth,trim=7 7 7 10,clip]{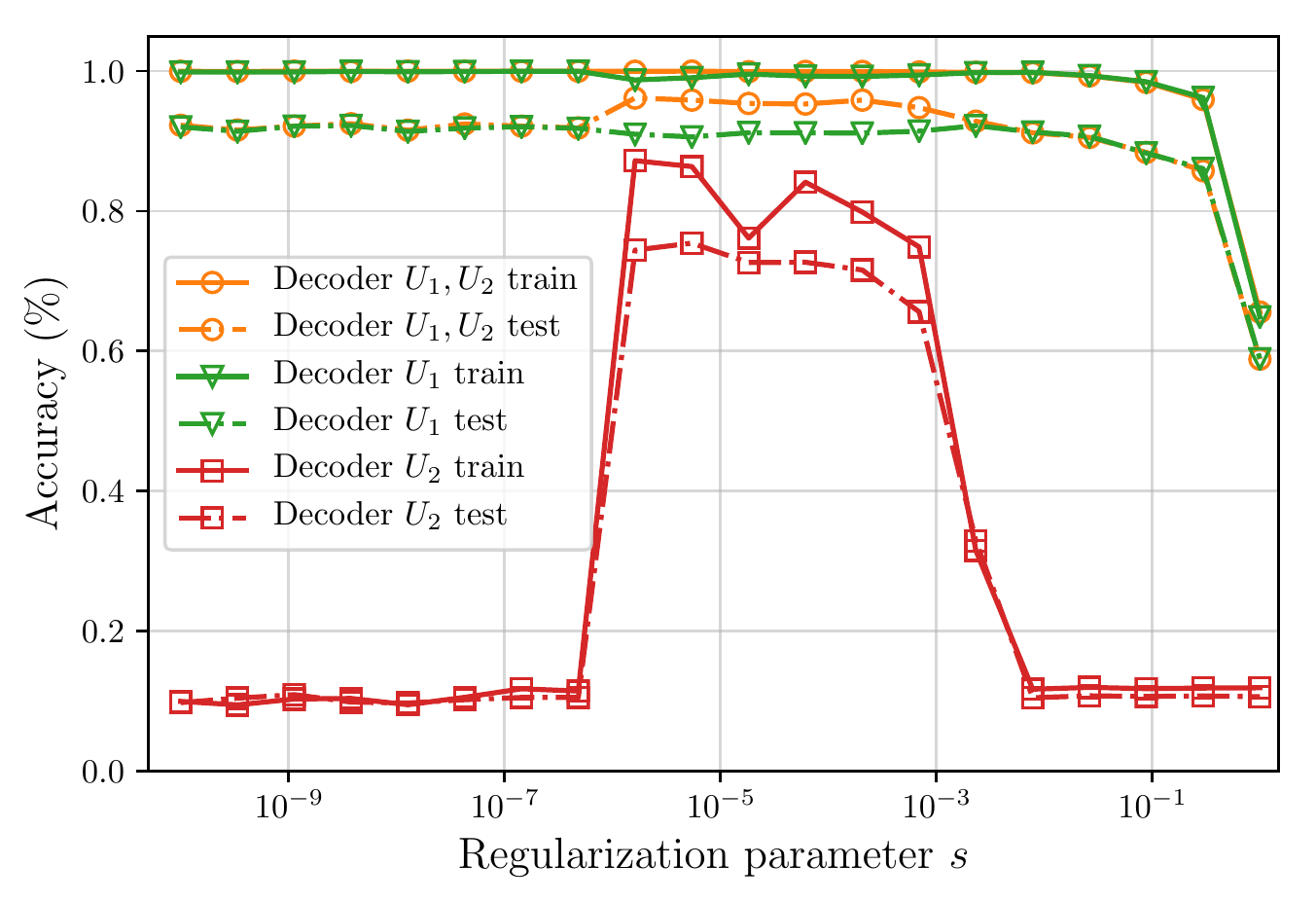}
\captionof{figure}{Train and test accuracy for the two-view MNIST dataset with $K=2$ encoders, from the estimators using both $(U_1,U_2)$, $Q_{Y|U_1,U_2}$, and only $U_1$ or $U_2$, i.e., $Q_{Y|U_1}$ and $Q_{Y|U_2}$,  from the application of the D-VIB algorithm for  $n=50.000$ and  $s\in[10^{-10},1]$.} \label{fig:MNISTDIB_acc_12_1_2_TrainTest}  
 \end{minipage}
~
{\footnotesize
\begin{minipage}[t]{0.475\linewidth} 
\vspace{-45mm}
\centering
{\footnotesize
 \caption{Accuracy for different algorithms with CNN architectures.}
\begin{tabular}{@{}lcc@{}}\toprule
&Accuracy (\%)\\
& 1 shot & avg.\\
 \midrule
D-VIB& \textbf{96.16}& \textbf{97.24}\\
D-VIB-noReg & 96.04& 96.72 \\
C-VIB& 96.01& 96.68\\
Deterministic &93.18&93.18\\
Independent& 92.1 / 79.68 & 93.1 / 82.01\\
\bottomrule
\end{tabular}
\label{table:Accuracy}}
\end{minipage}}
\end{table}



Figure~\ref{fig:MNIST_VDIB} shows the resulting relevance-complexity pairs from the application of the D-VIB Algorithm~\ref{algo:VDIB} on the two-view MNIST dataset for training dataset of size $n=50.000$ and 15 different $s$ regularization values in the range $[10^{-10}, 1]$ to train different estimators on the relevance-complexity plane. The C-IB limit is shown for $R_{\mathrm{sum}}\rightarrow \infty$ assuming that zero classification error is possible form the data, given by $\Delta_{\mathrm{cIB}}(R_{\mathrm{sum}}) = \log 10$. It can be observed that during the training phase, the higher the sum-complexity, the higher the achieved relevance, and that the resulting relevant-complexity pairs  perform very close to the theoretical limit.
On the contrary, during the test phase, while low sum-complexity results in low achievable relevance, and it increases for intermediate sum-complexity values, the achievable relevance decreases for large values of sum-complexity. Thus, the effect of the regularization due to the complexity constraint results in higher generalization. 

 After training, the trained CNNs for different values of $s$ can be used for classification. In particular we estimate the labels from the maximum of the estimated conditional distribution  $\hat{\dv y}_i$ for the given observation. We study the case in which a single Monte Carlo sample of $U_k$ is used for prediction, denoted by D-VIB 1 shot, and the case, which we denote by D-VIB avg., in which $U_k$ are sampled $M$ times to generate $M$ estimations $\hat{\dv y}_{i,m}$, which are averaged to estimate the conditional probability $\hat{\dv y}_{\mathrm{av}} =  \frac{1}{M}\sum_{m=1}^{M}\hat{\dv y}_{i,m}$, from which the label can be inferred. 



Figure~\ref{fig:MNISTDIB_acc_TrainTest} shows the resulting accuracy achievable with D-VIB 1 shot and D-VIB avg. using each estimator obtained in Figure~\ref{fig:MNIST_VDIB} with respect to the regularization parameter $s$. 
It can be observed that higher accuracy for both methods is obtained at the intermediate regularization parameter values, $s\simeq 10^{-6}$,  for which relevance is maximized in Figure~\ref{fig:MNIST_VDIB}. Indeed, for any estimator $Q_{Y|U_{\mc K}}$, the average logarithmic-loss provides an upper bound on the classification error, since by the application of Jensen's inequality, we have 
\begin{align}
P_{\mathrm{error}}(Q_{Y|U_{\mc K}}) &:= 1 - \mathrm{E}_{P_{X_{\mc K},Y}}[Q_{Y|U}]\\
& \leq 1 - \exp\left(- \mathrm{E}_{P_{X_{\mc K},Y}}[-\log Q_{Y|U_{\mc K}})]\right),
\end{align}
which justifies the logarithmic loss as a useful surrogate of the probability of error. Note that the largest gains from the averaging with D-VIB avg. occur for the values of $s$ for which the relevance is maximized for the test data.

{During the optimization, indirectly, the D-VIB algorithm allocates the complexity for each description $U_k$ of the observation $X_k$. Figure~\ref{fig:MNISTDIB_acc_12_1_2_TrainTest} shows the resulting accuracy achieved  with D-VIB 1 shot to estimate $Y$ with the main estimator $Q_{Y|U_1,U_2}$  obtained in Figure~\ref{fig:Latent_Variables}, as well as the accuracy, achieved by using the regularizing decoders $Q_{Y|U_k}$, $k=1,2$ to estimate $Y$, with respect to the regularization parameter $s$. In general, the description $U_1$ from view $X_1$ (which is less noisy), carries most of the information, while for $10^{-6}\lesssim s\lesssim 10^{-3}$, both descriptions $U_1$ and $U_2$ capture relevant information from the views. In this regime, the combination of both views result in an increase of the overall performance for $Q_{Y|U_1,U_2}$.  }

In order to asses the advantages of the D-VIB trained with Algorithm~\ref{algo:VDIB}, and the relevance of the logarithmic loss terms in the regularization in the variational DIB cost in~\eqref{eq:FunctionPQ}, we compare the best accuracy achievable by D-VIB in Figure~\ref{fig:MNISTDIB_acc_TrainTest}, i.e., selecting the estimator corresponding to $s$ for which the accuracy is maximized, with three more estimators for the setup in Figure~\ref{fig:Schm} using CNNs. First, we consider the centralized VIB algorithm, e.g., as in \cite{AFDM:ICLR:2017}, denoted by C-VIB, in which all observations are processed jointly. C-VIB can be trained with the DIB cost in~\eqref{eq:EmpiricalL} with $K=1$ and letting $X=(X_1,X_2)$. We also consider the training of an estimator without the regularizing decoders $Q_{Y|U_k}$ $k=1,2$ and by maximizing the empirical DIB cost in~\eqref{eq:EmpiricalL} with only the divergence terms in the regularizer using Algorithm~\ref{algo:VDIB}. The latter corresponds to the naive direct extension of the ELBO bound and the C-VIB to the distributed case, and we denote this scheme by D-VIB-noReg. 
In particular, for C-VIB, we assume that a single encoder maps the two views $\dv X_1,\dv X_2$ to a Gaussian latent of dimension $256$ using a CNN network of similar architecture as the encoders in Table \ref{table:Architecture_MNIST}, and uses a singe decoder with $256$ activations.
Finally, we consider a standard deterministic CNN with dropout regularization, using an architecture as that of the D-VIB-noReg, i.e., without regularizing decoders, and with a deterministic mapping for the latent space layer.
 For estimation we also consider the single shot and averaging of multiple estimations of the estimated conditional distribution as in D-VIB avg. Table~\ref{table:Accuracy} shows the advantages of D-VIB over other  approaches. The advantage of D-VIB over C-VIB is explained by the advantage of training the latent space embedding for each observation separately, which allows to adjust better the CNN encoding-decoding parameters to the statistics of each observation, justifying the use of D-VIB for multi-view learning even if the data is available in a centralized manner.

%

\begin{figure}[!t]
\centering
\includegraphics[width=0.6\textwidth]{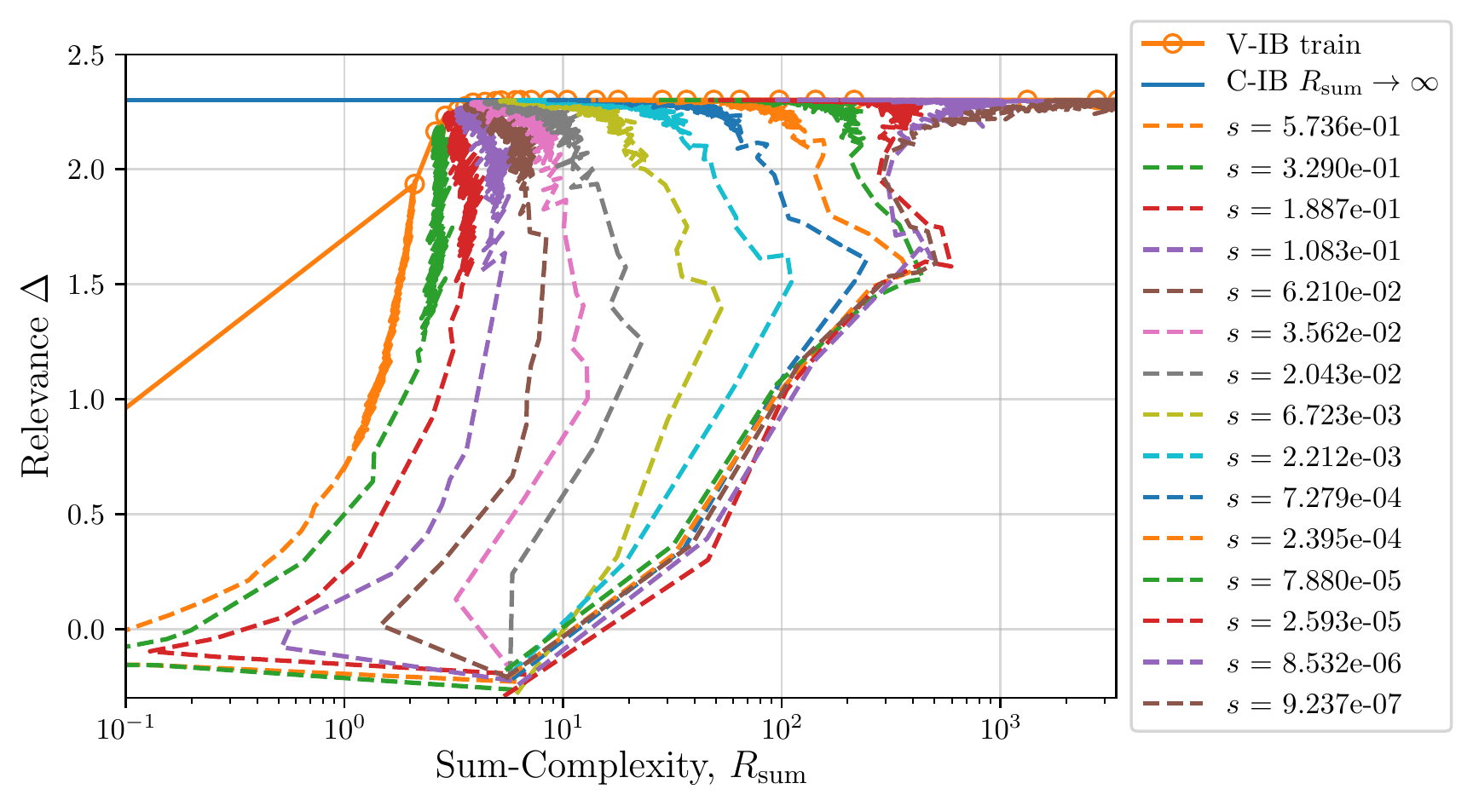}
\captionof{figure}{Evolution of the relevance-complexity pairs during training with D-VIB for different values of the regularization parameter $s$ over the relevance-complexity plane.}
\label{fig:MNIST_IB_TrainTest_traj}
\vspace{-6mm}        
\end{figure}

\subsection{Evolution of the Relevance-Complexity pairs}
In this section, we consider the evolution of the achievable relevance-complexity pairs over the relevance-complexity plane during the training phase when using the D-VIB as given in Algorithm~\ref{algo:VDIB}. We are interested in the trajectory followed by the pairs for different regularization parameter value $s$ as the DNN parameters are updated from a random initialization until their convergence.  In particular, for a given value $s$, let $(\boldsymbol\theta^{t}, \boldsymbol\phi^{t}, \boldsymbol \varphi^t )$ denote the  DNNs parameters determining encoders $P_{\theta_k^t}(U_k|X_k)$, decoders $Q_{\phi_{\mc K}^t}(Y|U_{\mc K})$ and $Q_{\phi_k^{t}}(Y|U_k)$, and priors $Q_{\varphi_k^{t}}$ at iteration $t$  of Algorithm~\ref{algo:VDIB}. The achievable relevance pair $(\Delta_s^{t},R_{s}^{t})$ on the $(\Delta, R_{\mathrm{sum}})$ plane at iteration $t$ with such estimator can be calculated by evaluating \eqref{eq:Dparam}-\eqref{eq:R1param} for the current DNN parameters. 

For simplicity, we consider the standard MNIST database with $70.000$ labeled images of handwritten digits between $\{0,\ldots, 9\}$, observed under a single view consisting of a single view of $28\times 28$ grayscale pixels per sample, i.e., $K=1$. We consider a parameterization as in Section~\ref{secc:MultiviewMnist} with an encoder with two layers of 1024 neurons each with ReLu activations, whose output is mapped to generate the mean and covariance of a multivariate Gaussian embedding of size 256. The decoder consists of a single layer of 256  activations which generate a conditional probability of size $10$. The DNN parameters are initialized with Glorot's initialization \cite{Glorot10understandingthe} and updated with Algorithm~\ref{algo:VDIB} until convergence for different values of the regularization parameter $s$. 

Figure~\ref{fig:MNIST_IB_TrainTest_traj} shows the evolution of the pairs $(\Delta_s^{t},R_{s}^{t})$ for different values of the regularization parameter $s$ during training starting from the achievable pairs with randomly initialized weights around $(\Delta^{t_0}_s, R^{t_0}_{s})\simeq (-0.2,0.5)$ until their converge to the corresponding achievable point on the V-IB training curve. Three different type of trajectories can be observed depending on the value of $s$: i)
 For  $s\in [5.736 e -01, 2.043e-02)$ (curves on left), the complexity decreases without significant improvement in the relevance. Then relevance improves at the cost of higher complexity and starts reducing complexity just before convergence on the point on the V-IB train curve; ii) for $s\in [2.043e-02, 2.594e-05)$ the algorithm directly starts improving relevance and starts reducing complexity before convergence; iii) For $s\leq  2.594e-05$ (curves on right) relevance improves by increasing complexity, then complexity is reduced while improving relevance, and finally they continue improving relevance by increasing complexity until convergence.

The evolution of the relevance-complexity pairs at different layers of a DNN has been studied in \cite{Shwartz-ZivT17} and \cite{SBDAKTC18}, where relevance is measured as the mutual information between $Y$ and the output at layer $l$, and complexity is given by the mutual information between the observed data and the output. Under tanh activations, only curves of type ii) are observed in~\cite{Shwartz-ZivT17} while a different type of trajectory has been observed under ReLu activations in \cite{SBDAKTC18}. Figure~\ref{fig:MNIST_IB_TrainTest_traj} shows that very different types of trajectories can result for the same DNN architecture, i.e., weights and activations, depending on the regularization parameter. Figure~\ref{fig:MNIST_IB_TrainTest_traj} shows a phase transition between the three types of trajectories on the plane. The study of the trajectories followed by the algorithm for different activations  is left as future work.

\section{Proofs}

\subsection{Proof of Theorem~\ref{th:Comp_Rel_region_DM}}\label{app:Comp_Rel_region_DM}

In this proof, we show the connection between the distributed learning problem under study and the $K$-encoder CEO problem studied in~\cite{Courtade2014LogLoss}. For the $K$-encoder CEO problem, let us consider $K$ encoding functions  $\phi_k:\mc X_k\rightarrow \mc M_k^{(n)}$ satisfying the complexity constraint~\eqref{eq:Complexity_definition} and a decoding function $\tilde{\psi}: \mathcal{M}^{(n)}_1\times \hdots \times \mathcal{M}^{(n)}_K \rightarrow \mathcal{\hat{Y}}^n$, that produces a probabilistic estimate of $Y$ from the outputs of the encoders, i.e., $\mathcal{\hat{Y}}^n$ is the set of distributions on $\mc Y$. The quality of the estimation is measured in terms of the average log-loss. 

\begin{definition}
A tuple $(D,R_1,\ldots, R_K)$ is said to be achievable in the $K$-encoder CEO problem for $P_{X_{\mc K},Y}$ for which the Markov chain~\eqref{eq:MKChain_pmf} holds, if 
there exists a length $n$, encoders $\phi_k$ for $k\in \mc K$ and a decoder $\tilde{\psi}$, 
such that
\begin{align}
D & \geq  \mathrm{E}\left[\frac{1}{n}\log\frac{1}{\hat{P}_{Y^n|J_{\mc K}}(Y^n|\phi_1(X_1^n),\hdots, \phi_K(X_K^n))}\right], \\
R_k &\geq \frac{1}{n}\log |\phi_k(X_k^n)| \quad \text{for all}\:\:\: k \in \mc K.
\end{align}
The rate-distortion region $\mathcal{RD}_{\mathrm{CEO}}$ is given by the closure of all achievable tuples $(D,R_1,\ldots, R_K)$. \qed
\end{definition}

The following lemma shows that the minimum average logarithmic loss is the conditional entropy of $Y$ given the descriptions. The result is essentially equivalent to \cite[Lemma 1]{Courtade2014LogLoss} and it is provided for completeness.
\begin{lemma}\label{lem:LL_Relevance}
Let us consider $P_{X_{\mc K}, Y}$ and the encoders $J_k = \phi_k(X_k^n)$, $k\in \mc K$ and the decoder $\hat{Y}^n = \tilde{\psi}(J_{\mc K})$. Then,
\begin{align}
\mathrm{E}[\ell_{\mathrm{log}}(Y^n,\hat{Y}^n)]\geq H(Y^n|J_{\mc K}),
\end{align} 
with equality if and only if $\tilde{\psi}(J_{\mc K}) = \{P_{Y^n|J_{\mc K}}(y^n|J_{\mc K})\}_{y^n\in \mc Y^n}$. 
\end{lemma}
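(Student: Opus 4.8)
The plan is to expand the expected log-loss directly, recognize it as a conditional cross-entropy between the true posterior of $Y^n$ given the descriptions and the decoder's soft estimate, and then invoke the non-negativity of the Kullback-Leibler divergence (Gibbs' inequality). Writing the decoder's output as $\hat{P}_{Y^n|J_{\mc K}} := \tilde{\psi}(J_{\mc K})$ and recalling the definition of the log-loss in~\eqref{eq:logLoss}, I would first write
\[
\mathrm{E}[\ell_{\mathrm{log}}(Y^n,\hat{Y}^n)] = \frac{1}{n}\,\mathrm{E}\!\left[\log\frac{1}{\hat{P}_{Y^n|J_{\mc K}}(Y^n|J_{\mc K})}\right],
\]
where the expectation is over the joint law of $(Y^n, J_{\mc K})$ induced by $P_{X_{\mc K},Y}$ together with the (possibly stochastic) encoders $\{\phi_k\}_{k\in \mc K}$.

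Next I would condition on $J_{\mc K}=j_{\mc K}$ and add and subtract the true posterior $P_{Y^n|J_{\mc K}}$ inside the logarithm. For each fixed $j_{\mc K}$, averaging against the genuine posterior gives the identity
\[
\sum_{y^n} P_{Y^n|J_{\mc K}}(y^n|j_{\mc K}) \log\frac{1}{\hat{P}_{Y^n|J_{\mc K}}(y^n|j_{\mc K})} = H(Y^n|J_{\mc K}=j_{\mc K}) + D_{\mathrm{KL}}\!\left(P_{Y^n|J_{\mc K}=j_{\mc K}}\,\|\,\hat{P}_{Y^n|J_{\mc K}=j_{\mc K}}\right).
\]
Taking the expectation over $J_{\mc K}$ (with respect to the encoder-induced marginal $P_{J_{\mc K}}$) then yields
\[
\mathrm{E}[-\log \hat{P}_{Y^n|J_{\mc K}}(Y^n|J_{\mc K})] = H(Y^n|J_{\mc K}) + \mathrm{E}_{J_{\mc K}}\!\left[D_{\mathrm{KL}}(P_{Y^n|J_{\mc K}}\|\hat{P}_{Y^n|J_{\mc K}})\right].
\]
Since the divergence term is nonnegative, dividing by $n$ gives the claimed bound $\mathrm{E}[\ell_{\mathrm{log}}(Y^n,\hat{Y}^n)]\geq H(Y^n|J_{\mc K})$.

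The equality condition is then immediate from the fact that $D_{\mathrm{KL}}(P\|Q)=0$ if and only if $P=Q$: equality holds precisely when $\hat{P}_{Y^n|J_{\mc K}=j_{\mc K}} = P_{Y^n|J_{\mc K}=j_{\mc K}}$ for $P_{J_{\mc K}}$-almost every $j_{\mc K}$, i.e., when the decoder returns the true conditional law $\tilde{\psi}(J_{\mc K})=\{P_{Y^n|J_{\mc K}}(y^n|J_{\mc K})\}_{y^n\in\mc Y^n}$. There is no serious obstacle here; the only point requiring care is the bookkeeping of the conditioning—ensuring that the inner sum is taken against the genuine posterior $P_{Y^n|J_{\mc K}}$ (which appears because we are computing an expectation under the true data law) rather than the decoder's estimate $\hat{P}_{Y^n|J_{\mc K}}$—after which the statement reduces to a direct application of Gibbs' inequality.
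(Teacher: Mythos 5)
Your proof is correct and follows essentially the same route as the paper's: condition on the descriptions $J_{\mc K}$, decompose the conditional cross-entropy as $H(Y^n|J_{\mc K}=j_{\mc K})$ plus a Kullback-Leibler divergence, invoke non-negativity of the divergence, and average over $J_{\mc K}$, with equality precisely when the decoder outputs the true posterior. The only (shared) wrinkle is the $1/n$ normalization in the definition of $\ell_{\mathrm{log}}$ in~\eqref{eq:logLoss}, which the paper's own proof also silently drops, so your bookkeeping matches the paper's.
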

\begin{proof}
Let $Z := (J_1,\ldots, J_K)$ be the argument of $\tilde{\psi}$ and $\hat{P}(y^n|z)$ be a distribution on $\mc{Y}^n$. We have for $Z=z$:
\begin{align}
\mathrm{E}[\ell_{\mathrm{log}}(Y^n,\hat{Y}^n)|Z=z]
&= \sum_{y^n\in\mc{Y}^n}P(y^n|z)\log\left(\frac{1}{\hat{P}(y^n|z)}\right)\\
&= \sum_{y^n\in\mc{Y}^n}P(y^n|z)\log\left(\frac{P(y^n|z)}{\hat{P}(y^n|z)}\right) + H(Y^n|Z=z)\\
&= D_{\mathrm{KL}}(P(y^n|z)\|\hat{P}(y^n|z))+ H(Y^n|Z=z)\\
&\geq H(Y^n|Z=z),\label{eq:LowerBound}
\end{align}
where \eqref{eq:LowerBound} is due to the non-negativity of the KL divergence and the equality holds if and only if for $\hat{P}(y^n|z) = P(y^n|z)$ where $ P(y^n|z) = \mathrm{Pr}\{Y^n=y^n|Z=z\}$ for all $z$ and $y^n\in \mc Y^n$. Averaging over  $Z$ completes the proof.
\end{proof}

Essentially, Lemma~\ref{lem:LL_Relevance} states that  minimizing the average log-loss is equivalent to maximizing relevance as given in~\eqref{eq:DeltaConstr}. Formally, the connection between the distributed learning problem under study and the $K$-encoder CEO problem studied in~\cite{Courtade2014LogLoss} can be formulated as stated next.
\begin{proposition}\label{prop:Eqreg}
A tuple $(\Delta,R_1,\ldots, R_K)\in \mc {RI}_{\mathrm{DIB}}$ if and only if $(H(Y)-\Delta,R_1,\ldots, R_K)\in \mc {RD}_{\mathrm{CEO}}$.
\end{proposition}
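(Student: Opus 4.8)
The plan is to prove both inclusions by using the \emph{same} encoders $\{\phi_k\}_{k=1}^K$ in the two problems and exploiting a single decomposition of mutual information together with Lemma~\ref{lem:LL_Relevance}. Write $J_{\mc K}=(\phi_1(X_1^n),\ldots,\phi_K(X_K^n))$. The cornerstone identity, which holds because $Y^n$ is i.i.d.\ so that $H(Y^n)=nH(Y)$, is
\[
\frac1n I(Y^n;J_{\mc K}) = H(Y) - \frac1n H(Y^n|J_{\mc K}).
\]
By Lemma~\ref{lem:LL_Relevance}, the right-most term $\frac1n H(Y^n|J_{\mc K})$ is exactly the minimum average log-loss achievable with the given encoders, attained by the decoder that outputs the posterior $\{P_{Y^n|J_{\mc K}}(\cdot|J_{\mc K})\}$. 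This already foreshadows the sought correspondence $D=H(Y)-\Delta$.

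First I would prove the direction $(\Delta,R_1,\ldots,R_K)\in\mc{RI}_{\mathrm{DIB}} \Rightarrow (H(Y)-\Delta,R_1,\ldots,R_K)\in\mc{RD}_{\mathrm{CEO}}$. Given DIB achievability, there are encoders and a decoder $\psi$ with $\Delta\le \frac1n I(Y^n;\hat Y^n)$, $\hat Y^n=\psi(J_{\mc K})$, and the rate constraints~\eqref{eq:RConstr}. Because $Y^n - J_{\mc K} - \hat Y^n$ is a Markov chain, the data processing inequality gives $\frac1n I(Y^n;\hat Y^n)\le \frac1n I(Y^n;J_{\mc K}) = H(Y)-\frac1n H(Y^n|J_{\mc K})$, hence $\frac1n H(Y^n|J_{\mc K})\le H(Y)-\Delta$. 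Re-using the same encoders in the CEO problem and selecting the posterior decoder $\tilde\psi(J_{\mc K})=\{P_{Y^n|J_{\mc K}}\}$, Lemma~\ref{lem:LL_Relevance} shows that the achieved average log-loss equals $\frac1n H(Y^n|J_{\mc K})\le H(Y)-\Delta$; thus the distortion level $D=H(Y)-\Delta$ meets the CEO distortion constraint, and the rate constraints are inherited unchanged, placing $(H(Y)-\Delta,R_1,\ldots,R_K)$ in $\mc{RD}_{\mathrm{CEO}}$.

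For the converse I would start from a CEO-achievable tuple $(H(Y)-\Delta,R_1,\ldots,R_K)$, so there exist encoders and a decoder $\tilde\psi$ with $\frac1n\mathrm{E}[\ell_{\mathrm{log}}(Y^n,\hat Y^n)]\le H(Y)-\Delta$. Lemma~\ref{lem:LL_Relevance} yields $\frac1n H(Y^n|J_{\mc K})\le \frac1n\mathrm{E}[\ell_{\mathrm{log}}]\le H(Y)-\Delta$. Re-using the encoders in the DIB problem and choosing $\psi$ to output the posterior $\{P_{Y^n|J_{\mc K}}\}$, the key point is that this soft estimate is a \emph{sufficient statistic} of $J_{\mc K}$ for $Y^n$, whence $\frac1n I(Y^n;\hat Y^n)=\frac1n I(Y^n;J_{\mc K})=H(Y)-\frac1n H(Y^n|J_{\mc K})\ge \Delta$. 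Therefore the relevance constraint~\eqref{eq:DeltaConstr} holds at level $\Delta$ with the inherited rate constraints, giving $(\Delta,R_1,\ldots,R_K)\in\mc{RI}_{\mathrm{DIB}}$.

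The main obstacle I anticipate is the careful justification of the sufficiency step in the converse, namely that the posterior-valued reconstruction $\hat Y^n=\{P_{Y^n|J_{\mc K}}(\cdot|J_{\mc K})\}$ preserves \emph{all} of $I(Y^n;J_{\mc K})$ (equality, not merely the data-processing inequality). This requires verifying that conditioning on the posterior distribution induces the same conditional law of $Y^n$ as conditioning on $J_{\mc K}$ itself, even when distinct descriptions produce identical posteriors. The remaining arithmetic---matching $D=H(Y)-\Delta$ and transferring the identical rate constraints---is routine once this equivalence and Lemma~\ref{lem:LL_Relevance} are in hand, and taking closures on both sides preserves the correspondence.
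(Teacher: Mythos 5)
Your proof is correct and follows essentially the same route as the paper's: both directions reuse the same encoders, invoke Lemma~\ref{lem:LL_Relevance}, and rely on the posterior decoder $\{P_{Y^n|J_{\mc K}}\}$ together with the identity $\frac{1}{n}I(Y^n;J_{\mc K})=H(Y)-\frac{1}{n}H(Y^n|J_{\mc K})$. The only difference is that you make explicit two steps the paper leaves implicit --- the data-processing inequality in the forward direction and the sufficiency of the posterior statistic in the converse --- both of which are valid and correctly identified as the points needing care.
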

\begin{proof}
Let the tuple $(\Delta,R_1,\ldots, R_K)\in \mc {RI}_{\mathrm{DIB}}$ be achievable for some encoders $\phi_k$, i.e., \eqref{eq:DeltaConstr} and \eqref{eq:RConstr} hold. It follows by Lemma~\ref{lem:LL_Relevance} that by letting the decoding function $\tilde{\psi}(J_{\mc K}) = \{P_{Y^n|J_{\mc K}}(y^n|J_{\mc K})\}$, we have $\mathrm{E}[\ell_{\mathrm{log}}(Y^n,\hat{Y}^n)|J_{\mc K}]
= H(Y^n|J_{\mc K})$, and hence $(H(Y)\!-\!\Delta,R_1,\ldots, R_K)\in \mc {RD}_{\mathrm{CEO}}$. 

Conversely, assume the tuple $(D,R_1,\ldots, R_K)\in \mc {RD}_{\mathrm{CEO}}$ is achievable. It follows by Lemma~\ref{lem:LL_Relevance} that  $H(Y)-D \leq H(Y^n)- H(Y^n|J_{\mc K})=  I(Y^n;J_{\mc K})$, which implies $(\Delta,R_1,\ldots, R_K)\in \mc {RI}_{\mathrm{DIB}}$ with $\Delta = H(Y)-D$.
\end{proof}

The characterization of rate-distortion region $\mc {R}_{\mathrm{CEO}}$ has been established recently in \cite[Theorem 10]{Courtade2014LogLoss}. The proof of the theorem is completed by noting that Proposition \ref{prop:Eqreg} implies that the result in~\cite[Theorem 10]{Courtade2014LogLoss} can be applied to characterize the region $\mathcal{RI}_{\mathrm{DIB}}$ as given in Theorem~\ref{th:Comp_Rel_region_DM}. \qed

\subsection{Proof of Proposition~\ref{prop:Sum-RateRegion}}\label{app:Sum-RateRegion}
For simplicity of exposition, the proof is given for the case $K=2$ encoders. The proof for $K>2$ follows similarly. By the definition of $\mc{RI}_{\mathrm{DIB}}^{\mathrm{sum}}$, the  tuple $(\Delta, R_{\mathrm{sum}})\in \mathds{R}_+^2$ is achievable for some random variables $Y,X_1,X_2,U_1, U_2$ with joint pmf satisfying \eqref{eq:distributionFactortization}, if it holds that
\begin{align}
\Delta&\leq I(Y;U_1,U_2)\label{eq:inequaltiy1}\\
\Delta&\leq R_1 - I(X_1;U_1|Y) + I(Y;U_2)\\
\Delta&\leq R_2 - I(X_2;U_2|Y) + I(Y;U_1)\\
\Delta&\leq R_1+R_2 -I(X_1;U_1|Y)-I(X_2;U_2|Y)\\
R_1+R_2&\leq R_{\mathrm{sum}}.\label{eq:inequaltiy_last}
\end{align}
The application of the Fourier-Motzkin elimination to project out $R_1$ and $R_2$ reduces the system on inequalities \eqref{eq:inequaltiy1}-\eqref{eq:inequaltiy_last} to  the following system of inequalities
\begin{align}
\Delta&\leq I(Y;U_1,U_2)\label{eq:inequaltiy_last_reduced_1}\\
\Delta&\leq R_{\mathrm{sum}} -I(X_1;U_1|Y)-I(X_2;U_2|Y)\label{eq:inequaltiy_last_reduced_2}\\
2\Delta&\leq R_{\mathrm{sum}} - I(X_1;U_1|Y) - I(X_2;U_2|Y)\\
&\quad + I(Y;U_1) + I(Y;U_2)\label{eq:inequaltiy_last_reduced_3}
\end{align}
It follows due to the Markov chain $U_1\mkv X_1\mkv Y \mkv X_2\mkv U_2$ that we have $I(Y;U_1,U_2)\leq I(Y;U_1)+ I(Y;U_2)$. Therefore, inequality \eqref{eq:inequaltiy_last_reduced_3} is redundant as it is implied by \eqref{eq:inequaltiy_last_reduced_1} and \eqref{eq:inequaltiy_last_reduced_2}. 
This completes the proof. \qed

\subsection{Proof of Proposition~\ref{prop:param}}\label{app:param}
Suppose that $\dv P^*$ yields the maximum in \eqref{eq:Dparam}. Then, 
\begin{align}
(1+s)\Delta_{s}
& = (1+sK)H(Y)+sR_{s}+\mc L_{s}(\dv P^*)\\
&=(1+sK) H(Y) + sR_{ s}\label{eq:Ldefinition}
\\
&\;
+\left(-H(Y|U_{\mc K}^*) - s \sum_{k=1}^K [H(Y|U^*_k) + I(X_k;U_k^*)] \right)
\\
 &=(1+sK) H(Y) + sR_{ s}\label{eq:MKchain}
\\
&\;
+(-H(Y|U_{\mc K}^*) - s(R_s  - I(Y;U^*_{\mc K})+KH(Y)) )
\\
&=(1+s) I(Y;U_{\mc K}^*)\\
 &\leq (1+s)\Delta(R_{s}, P_{X_{\mc K},Y}), \label{eq:part1}
\end{align}
where \eqref{eq:Ldefinition} is due to the definition of $\mc L_s(\dv P)$ in \eqref{eq:CostF}; \eqref{eq:MKchain} follows since we have $
\sum_{k=1}^K [I(X_k;U_k^*) + H(Y|U_k^*)] = R_s  - I(Y;U^*_{\mc K})+KH(Y)$ from the definition of $R_s$ in~\eqref{eq:R1param}; and \eqref{eq:part1} follows from the definition in \eqref{eq:RelevanceSumComplexityFunction}.

Conversely, if $\dv P^*$ is the solution to the maximization in the function $\Delta(R_{\mathrm{sum}}, P_{X_{\mc K},Y})$ in~\eqref{eq:RelevanceSumComplexityFunction} such that $\Delta(R_{\mathrm{sum}}, P_{X_{\mc K},Y}) = \Delta_s$, then $\Delta_s\leq I(Y;U_{\mc{K}}^*)$ and $\Delta_s\leq R -\sum_{k=1}^K I(X_k;U^*_k|Y)$ and we have, for any $s\geq 0$, that
\begingroup
\allowdisplaybreaks
\begin{align}
\Delta(R_{\mathrm{sum}},P_{X_{\mc K}, Y}) 
&=\Delta_s\nonumber\\
&\leq   \Delta_s -(\Delta_s- I(Y;U_{\mc{K}}^*))
-s\left(\Delta_s- R_{\mathrm{sum}} +\sum_{k=1}^K I(X_k;U^*_k|Y)\right)\nonumber\\
&=  I(Y;U_{\mc{K}}^*)-s\Delta_s +s R_{\mathrm{sum}}  -s\sum_{k=1}^K I(X_k;U^*_k|Y)\nonumber\\
&= H(Y) - s\Delta_s + sR_{\mathrm{sum}}-H(Y|U^*_{\mc K}) \label{eq:MK}
 - s\sum_{k=1}^K[ I(X_k;U^*_k) +H(Y|U_k^*)]+ sKH(Y) 
 \\
&\leq  H(Y) - s\Delta_s + sR_{\mathrm{sum}}  + \mc L_{s}^* + sKH(Y) \label{eq:OptL}\\
&= H(Y) - s\Delta_s  + sR_{\mathrm{sum}} + sKH(Y)
- ((1+sK)H(Y) + sR_s -(1+s)\Delta_s)\label{eq:LagrangianEq}\\
&= \Delta_{s} + s(R_{\mathrm{sum}} -R_{s}) \label{eq:LagrangianEq_2},
\end{align}
\endgroup
where in \eqref{eq:MK} we have $\sum_{k=1}^KI(X_k;U_k|Y)  = -KH(Y)+\sum_{k=1}^KI(X_k;U_k)+H(Y|U_k)$ 
 due to the Markov chain $U_k - X_k - Y - (X_{\mc K\setminus k }, U_{\mc K\setminus k })$; \eqref{eq:OptL} follows since $\mc L^*_{s}$ is the maximum over all possible distributions $\dv P$ (not necessarily $\dv P^*$ maximizing $\Delta(R_{\mathrm{sum}}, P_{X_{\mc K},Y})$); and \eqref{eq:LagrangianEq} is due to~\eqref{eq:Dparam}. 

Finally, \eqref{eq:LagrangianEq_2} is valid for any $R_{\mathrm{sum}}\geq 0$ and $s\geq 0$. 
Given $s$, and hence $(\Delta_s, R_{s})$, letting $R = R_{s}$ yields
$\Delta(R_s, P_{X_{\mc K},Y}) \leq \Delta_{ s}$. 
Together with \eqref{eq:part1}, this completes the proof.
\qed

\subsection{Proof of Lemma~\ref{lemma:QUpdate}}\label{app:QUpdate}
The proof follows by deriving the following bounds. For any  pmf $Q_{Y|Z}(y|z)$,  $y\in \mc Y$ and $z \in \mc Z$, e.g., $\mc Z = \mc U_{\mc K}$ or $\mc Z = \mc U_k$, proceeding similarly to~\eqref{eq:LowerBound} and averaging over $Z$, we have
\begin{align}
H(Y|Z) &= \mathds{E}[-\log Q_{Y|Z}(Y|Z)]-D_{\mathrm{KL}}(P_{Y|Z}\|Q_{Y|Z}).\nonumber
\end{align}
Similarly,  we have
\begin{align}
I(X_k;U_k) &= H(U_k)-H(U_k|X_k)\\
&=D_{\mathrm{KL}}(P_{U_k|X_k}\|Q_{U_k}) -D_{\mathrm{KL}}(P_{U_k}\|Q_{U_k})
\end{align}

Applying these identities to~\eqref{eq:CostF}, we have
\begin{align}
\mc L_s(\dv P) &= \mc L^{\mathrm{VB}}_s(\dv P, \dv Q) + D_{\mathrm{KL}}(P_{Y|U_{\mc K}}||Q_{Y|U_{\mc K}})+s\sum_{k=1}^K (D_{\mathrm{KL}}(P_{Y|U_k}||Q_{Y|U_k})+ D_{\mathrm{KL}}(P_{U_{k}}||Q_{U_k})  )\nonumber\\
&\geq \mc L^{\mathrm{VB}}_s(\dv P, \dv Q),\label{eq:UpperBound}
\end{align}
where \eqref{eq:UpperBound} follows since the KL-divergence is always positive and equality is met iff $\dv Q^*$ is given as  \eqref{eq:Qstark} and \eqref{eq:Qstarall}.
\qed

\subsection{Proof of Theorem~\ref{th:GaussSumCap}}\label{app:GaussSumCap}

The proof of Theorem~\ref{th:GaussSumCap} relies on deriving an outer bound on the relevance-complexity region described by~\eqref{eq:ComplexityRelevanceFunction}, and showing that it is achievable with Gaussian pmfs and without time-sharing. In doing so, we use the technique of~\cite[Theorem 8]{Ekrem:TIT:14} which relies on the de Bruijn identity and the properties of Fisher information and MMSE. 

\vspace{-0.5em}
\begin{lemma}{\cite{Dembo:IT:91,Ekrem:TIT:14}}\label{lem:FI_Ineq}
Let $(\mathbf{X,Y})$  be a pair of random vectors with pmf $p(\mathbf{x},\mathbf{y})$. We have
\begin{align}
\log|(\pi e) \mathbf{J}^{-1}(\mathbf{X}|\mathbf{Y})|\leq h(\mathbf{X}|\mathbf{Y})\leq\log|(\pi e) \mathrm{mmse}(\mathbf{X}|\mathbf{Y})|,\nonumber
\end{align}
where the conditional Fischer information matrix is defined as$
\mathbf{J}(\mathbf{X}|\mathbf{Y}) := \mathrm{E}[\nabla \log p(\mathbf{X}|\mathbf{Y})\nabla\log p(\mathbf{X}|\mathbf{Y})^\dagger]$,
and the minimum mean square error (MMSE) matrix is 
$\mathrm{mmse}(\mathbf{X}|\mathbf{Y}) := \mathrm{E}[(\dv X-\mathrm{E}[\dv X|\dv Y])(\dv X-\mathrm{E}[\dv X|\dv Y])^\dagger]$. 
\end{lemma}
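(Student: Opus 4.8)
The plan is to prove the two sides separately and to reduce each conditional bound to its unconditional counterpart by fixing $\mathbf{Y}=\mathbf{y}$ and averaging. Throughout I treat the vectors as circularly-symmetric complex, so that the maximum-entropy Gaussian with covariance $\mathbf{\Sigma}$ has differential entropy $\log|(\pi e)\mathbf{\Sigma}|$; the factor $\pi e$ (rather than $2\pi e$) is the only effect of the complex setting and it propagates unchanged through the computations below.

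For the right-hand (MMSE) inequality I would set $\mathbf{E}:=\mathbf{X}-\mathrm{E}[\mathbf{X}|\mathbf{Y}]$. Since $\mathrm{E}[\mathbf{X}|\mathbf{Y}]$ is a deterministic function of $\mathbf{Y}$, translating by it leaves the conditional differential entropy unchanged, so $h(\mathbf{X}|\mathbf{Y})=h(\mathbf{E}|\mathbf{Y})$; conditioning reduces entropy, giving $h(\mathbf{E}|\mathbf{Y})\leq h(\mathbf{E})$; and the Gaussian maximizes differential entropy among vectors of a given covariance, so $h(\mathbf{E})\leq\log|(\pi e)\,\mathrm{Cov}(\mathbf{E})|$. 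Since $\mathrm{Cov}(\mathbf{E})=\mathrm{mmse}(\mathbf{X}|\mathbf{Y})$ by definition, chaining these three steps yields $h(\mathbf{X}|\mathbf{Y})\leq\log|(\pi e)\,\mathrm{mmse}(\mathbf{X}|\mathbf{Y})|$.

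The left-hand (Fisher-information) inequality is the substantive part, and I would first establish its unconditional form $h(\mathbf{X})\geq\log|(\pi e)\mathbf{J}^{-1}(\mathbf{X})|$ by the heat-flow argument of \cite{Dembo:IT:91}. Introduce the smoothed variable $\mathbf{X}_t:=\mathbf{X}+\sqrt{t}\,\mathbf{Z}$ with $\mathbf{Z}\sim\mc{CN}(\dv 0,\dv I)$ independent of $\mathbf{X}$, and a Gaussian reference $\mathbf{G}_t\sim\mc{CN}(\dv 0,\mathbf{J}^{-1}(\mathbf{X})+t\dv I)$. The matrix Fisher-information (Stam) inequality for independent summands gives $\mathbf{J}^{-1}(\mathbf{X}_t)\succeq\mathbf{J}^{-1}(\mathbf{X})+t\dv I=\mathbf{J}^{-1}(\mathbf{G}_t)$, hence $\mathbf{J}(\mathbf{X}_t)\preceq\mathbf{J}(\mathbf{G}_t)$ and $\mathrm{tr}\,\mathbf{J}(\mathbf{X}_t)\leq\mathrm{tr}\,\mathbf{J}(\mathbf{G}_t)$. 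By the de Bruijn identity $\tfrac{d}{dt}h(\mathbf{X}_t)=c\,\mathrm{tr}\,\mathbf{J}(\mathbf{X}_t)$ for the relevant constant $c>0$, the difference $h(\mathbf{X}_t)-h(\mathbf{G}_t)$ is non-increasing in $t$; since the added noise dominates as $t\to\infty$ this difference vanishes in the limit, so its value at $t=0$ is non-negative, i.e. $h(\mathbf{X})\geq h(\mathbf{G}_0)=\log|(\pi e)\mathbf{J}^{-1}(\mathbf{X})|$.

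To obtain the conditional bound I would apply this to the law of $\mathbf{X}$ given $\mathbf{Y}=\mathbf{y}$ and average over $\mathbf{y}$, getting $h(\mathbf{X}|\mathbf{Y})\geq\mathrm{E}_{\mathbf{Y}}[\log|(\pi e)\mathbf{J}^{-1}(\mathbf{X}|\mathbf{Y}=\mathbf{y})|]$. The Fisher matrix in the statement is the average $\mathbf{J}(\mathbf{X}|\mathbf{Y})=\mathrm{E}_{\mathbf{Y}}[\mathbf{J}(\mathbf{X}|\mathbf{Y}=\mathbf{y})]$, and $-\log|\cdot|$ is convex on the positive-definite cone, so Jensen's inequality gives $\mathrm{E}_{\mathbf{Y}}[-\log|\mathbf{J}(\mathbf{X}|\mathbf{Y}=\mathbf{y})|]\geq-\log|\mathbf{J}(\mathbf{X}|\mathbf{Y})|$, which upgrades the averaged estimate to $h(\mathbf{X}|\mathbf{Y})\geq\log|(\pi e)\mathbf{J}^{-1}(\mathbf{X}|\mathbf{Y})|$. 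The main obstacle is the unconditional Fisher-information lower bound: it rests on the matrix forms of de Bruijn's identity and Stam's inequality together with a careful $t\to\infty$ limiting argument showing that $h(\mathbf{X}_t)-h(\mathbf{G}_t)\to 0$ (e.g. via $I(\mathbf{X};\mathbf{X}_t)\to 0$ under a second-moment assumption); the complex bookkeeping in the definition of $\mathbf{J}$ and the $\pi e$ normalization is routine but must be kept consistent.
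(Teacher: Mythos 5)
Your proof is correct, and there is nothing in the paper to compare it against: the paper does not prove this lemma, importing it verbatim by citation (the unconditional entropy--Fisher-information bound from \cite{Dembo:IT:91} and the conditional matrix form from \cite{Ekrem:TIT:14}), and your reconstruction is essentially the standard argument in those references --- the MMSE side via translation invariance of conditional differential entropy, conditioning reduces entropy, and the Gaussian maximum-entropy bound with $\mathrm{Cov}(\mathbf{X}-\mathrm{E}[\mathbf{X}|\mathbf{Y}])=\mathrm{mmse}(\mathbf{X}|\mathbf{Y})$; the Fisher side via the de Bruijn/Stam heat-flow comparison yielding $h(\mathbf{X})\geq \log|(\pi e)\mathbf{J}^{-1}(\mathbf{X})|$, then applying it conditionally on $\mathbf{Y}=\mathbf{y}$ and upgrading the average with $\mathbf{J}(\mathbf{X}|\mathbf{Y})=\mathrm{E}_{\mathbf{Y}}[\mathbf{J}(\mathbf{X}|\mathbf{Y}=\mathbf{y})]$, concavity of $\log\det$, and Jensen. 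Your flagged caveats (finite second moments for the $t\to\infty$ limit, smoothness for de Bruijn, and the complex-circular $\pi e$ normalization) are exactly the regularity conditions assumed in the cited sources, so the sketch is complete at the level of rigor the paper itself relies on.
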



First, we outer bound the relevance-complexity region in Theorem~\ref{th:Comp_Rel_region_DM} for $(\dv Y,\dv X_{\mc K})$ as in \eqref{mimo-gaussian-model}. 
For $ t\in \mc{T}$ and fixed $\prod_{k=1}^{K}p(\mathbf{u}_k|\mathbf{x}_k,t)$, choose $\mathbf{\Omega}_{k,t}$, $k = 1,\ldots, K$ satisfying $\mathbf{0}\preceq\mathbf{\Omega}_{k,t}\preceq\mathbf{\Sigma}_{k}^{-1}$ such that 
\begin{align}
\mathrm{mmse}(\mathbf{Y}_k|\mathbf{X}, \mathbf{U}_{k,t},t) = \mathbf{\Sigma}_{k}-\mathbf{\Sigma}_{k}\mathbf{\Omega}_{k,t}\mathbf{\Sigma}_{k}.\label{eq:covB}
\end{align}
Such $\mathbf{\Omega}_{k,t}$ always exists since $ \mathbf{0}\preceq\mathrm{mmse}(\mathbf{X}_k|\mathbf{Y},\mathbf{U}_{k,t},t)\preceq \mathbf{\Sigma}_{k}^{-1}$, for all $t\in \mc T$, and $k\in \mc K$. We have from \eqref{eq:ComplexityRelevanceFunction},
\begin{align}
I(\mathbf{X}_k;\mathbf{U}_k|\mathbf{Y},t)
& \geq \log|\boldsymbol\Sigma_{k}| -\log|\mathrm{mmse}(\mathbf{X}_k|\mathbf{Y},\mathbf{U}_{k,t},t) |
\nonumber\\
&
= - \log|\dv I-\mathbf{\Sigma}_{k}^{1/2}\mathbf{\Omega}_{k,t}\mathbf{\Sigma}_{k}^{1/2}|,\label{eq:firstIneq}
\end{align}
where the inequality is due to Lemma~\ref{lem:FI_Ineq}, and \eqref{eq:firstIneq} is due to \eqref{eq:covB}.

On the other hand, we have
\begin{align}
I(\mathbf{Y};\mathbf{U}_{S^c,t}|t)
&\leq \log|\mathbf{\Sigma}_{\mb y} |-\log|\mathbf{J}^{-1}(\mathbf{Y}|\mathbf{U}_{S^c,t},t)|\label{eq:FI_Ineq}\\
& = \log 
\left| \sum_{k\in\mathcal{S}^{c}}\mathbf{\Sigma}_{\mb y}^{1/2}\mathbf{H}_{k}^{\dagger}
\mathbf{\Omega}_{k,t}
\mathbf{H}_{k}\mathbf{\Sigma}_{\mb y}^{1/2}+\mathbf{I}\right|\label{eq:secondtIneq},
\end{align}
where \eqref{eq:FI_Ineq} is due to Lemma~\ref{lem:FI_Ineq}; and \eqref{eq:secondtIneq} is due to to the following equality connecting the MMSE matrix \eqref{eq:covB} and the Fisher information as in
\cite{Ekrem:IT:2014:OuterBoundCEO, DBLP:journals/corr/ZhouX0C16,  EZCS:IT:2017}, proven below:
\begin{align}
\mathbf{J}(\mathbf{Y}|\mathbf{U}_{S^c,t},t) = \sum_{k\in\mathcal{S}^{c}}\mathbf{H}_{k}^{\dagger}
\mathbf{\Omega}_{k,t}
\mathbf{H}_{k}+\mathbf{\Sigma}_{\mb y}^{-1}\label{eq:Fischerequality}.
\end{align}

In order to show \eqref{eq:Fischerequality}, we use de Brujin identity to relate the Fisher information with the MMSE as given in the following lemma from \cite{Ekrem:TIT:14}. 
\begin{lemma}\label{lemm:Brujin}
Let $(\mathbf{V}_1,\mathbf{V}_2)$ be a random vector with finite second moments and $\mathbf{N}\!\sim\!\mc{CN}(\dv 0, \boldsymbol\Sigma_N)$ independent of $(\mathbf{V}_1,\mathbf{V}_2)$. Then
\begin{align}
\mathrm{mmse}(\mathbf{V}_2|\mathbf{V}_1,\mathbf{V}_2+\mathbf{N}) = \boldsymbol\Sigma_N -\boldsymbol\Sigma_N\mathbf{J}(\mathbf{V}_2+\mathbf{N}|\mathbf{V}_1)\boldsymbol\Sigma_N\nonumber.
\end{align}
\end{lemma}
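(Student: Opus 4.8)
The plan is to establish Lemma~\ref{lemm:Brujin} as a (conditional) matrix form of the I-MMSE/de~Bruijn identity, by reducing it to the score-function representation of a Gaussian channel. Writing $\mathbf{Y} := \mathbf{V}_2 + \mathbf{N}$, I would first fix $\mathbf{V}_1 = \mathbf{v}_1$ and prove the unconditional statement for the channel $\mathbf{Y} = \mathbf{V}_2 + \mathbf{N}$, and only at the end average the resulting identity over $\mathbf{V}_1$ — which is exactly what the conditional Fisher information $\mathbf{J}(\mathbf{Y}|\mathbf{V}_1)$ and the conditional MMSE already encode.

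First I would compute the score function of the channel output. Since $\mathbf{N}\sim\mc{CN}(\dv 0,\boldsymbol\Sigma_N)$ is independent of $\mathbf{V}_2$, the output density is $p(\mathbf{y}) = \int p(\mathbf{v}_2)\, p_{\mathbf{N}}(\mathbf{y}-\mathbf{v}_2)\, d\mathbf{v}_2$, and the Gaussian identity $\nabla_{\mathbf{y}} p_{\mathbf{N}}(\mathbf{y}-\mathbf{v}_2) = -\boldsymbol\Sigma_N^{-1}(\mathbf{y}-\mathbf{v}_2)\, p_{\mathbf{N}}(\mathbf{y}-\mathbf{v}_2)$ lets me differentiate under the integral sign to obtain
\begin{align}
\nabla_{\mathbf{y}} \log p(\mathbf{y}) = \boldsymbol\Sigma_N^{-1}\big(\mathrm{E}[\mathbf{V}_2|\mathbf{Y}=\mathbf{y}] - \mathbf{y}\big) = -\boldsymbol\Sigma_N^{-1}\,\mathrm{E}[\mathbf{N}|\mathbf{Y}=\mathbf{y}],\nonumber
\end{align}
where the last equality uses $\mathbf{N} = \mathbf{Y} - \mathbf{V}_2$, so that $\mathrm{E}[\mathbf{N}|\mathbf{Y}] = \mathbf{Y} - \mathrm{E}[\mathbf{V}_2|\mathbf{Y}]$.

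Substituting into $\mathbf{J}(\mathbf{Y}) = \mathrm{E}[\nabla\log p(\mathbf{Y})\nabla\log p(\mathbf{Y})^\dagger]$ gives $\mathbf{J}(\mathbf{Y}) = \boldsymbol\Sigma_N^{-1}\,\mathrm{E}\big[\mathrm{E}[\mathbf{N}|\mathbf{Y}]\,\mathrm{E}[\mathbf{N}|\mathbf{Y}]^\dagger\big]\,\boldsymbol\Sigma_N^{-1}$. I would then invoke the law of total covariance with the orthogonality principle, $\mathrm{E}\big[\mathrm{E}[\mathbf{N}|\mathbf{Y}]\mathrm{E}[\mathbf{N}|\mathbf{Y}]^\dagger\big] = \boldsymbol\Sigma_N - \mathrm{mmse}(\mathbf{N}|\mathbf{Y})$, and observe that $\mathrm{mmse}(\mathbf{N}|\mathbf{Y}) = \mathrm{mmse}(\mathbf{V}_2|\mathbf{Y})$ because the two estimation errors differ only by a sign. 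This yields $\mathbf{J}(\mathbf{Y}) = \boldsymbol\Sigma_N^{-1} - \boldsymbol\Sigma_N^{-1}\mathrm{mmse}(\mathbf{V}_2|\mathbf{Y})\boldsymbol\Sigma_N^{-1}$, and left/right multiplying by $\boldsymbol\Sigma_N$ and rearranging gives $\mathrm{mmse}(\mathbf{V}_2|\mathbf{Y}) = \boldsymbol\Sigma_N - \boldsymbol\Sigma_N\mathbf{J}(\mathbf{Y})\boldsymbol\Sigma_N$. Carrying the fixed $\mathbf{V}_1=\mathbf{v}_1$ through every conditional expectation and then averaging over $\mathbf{V}_1$ produces the conditional identity as stated.

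The main obstacle I anticipate is technical rather than conceptual: getting the complex (circularly-symmetric) Gaussian calculus right. The gradient and Fisher-information conventions in the complex domain carry Wirtinger-derivative factors, and I must keep the normalization in $\nabla_{\mathbf{y}}\log p$ consistent with the definition of $\mathbf{J}(\cdot)$ so that no spurious factor survives in $\boldsymbol\Sigma_N\mathbf{J}(\mathbf{Y})\boldsymbol\Sigma_N$. I would also need the regularity conditions — finite second moments and the smoothness of $p(\mathbf{y})$ inherited from convolution with the Gaussian kernel — that justify differentiating under the integral sign; these hold here since convolution with a Gaussian makes $p(\mathbf{y})$ infinitely differentiable with well-behaved tails.
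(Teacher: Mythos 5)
Your proof is correct, but note that the paper does not actually prove this lemma: it is quoted directly from the cited reference \cite{Ekrem:TIT:14}, so there is no in-paper argument to compare yours against. Your route --- differentiate the Gaussian convolution to get the score identity $\nabla_{\mathbf y}\log p(\mathbf y\,|\,\mathbf v_1)=-\boldsymbol\Sigma_N^{-1}\mathrm{E}[\mathbf N\,|\,\mathbf Y=\mathbf y,\mathbf V_1=\mathbf v_1]$, then apply the law of total covariance together with the fact that the errors in estimating $\mathbf N$ and $\mathbf V_2$ from the observation coincide up to sign --- is the standard proof of this conditional de Bruijn/I-MMSE identity, and is essentially the argument behind the result in that reference. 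The two points that need care both go through. First, fixing $\mathbf V_1=\mathbf v_1$ is legitimate precisely because $\mathbf N$ is independent of the \emph{pair} $(\mathbf V_1,\mathbf V_2)$: conditioned on $\mathbf V_1$, the channel $\mathbf Y=\mathbf V_2+\mathbf N$ still has noise $\mc{CN}(\dv 0,\boldsymbol\Sigma_N)$ independent of its input; and since the paper defines both $\mathrm{mmse}(\cdot|\cdot)$ and $\mathbf J(\cdot|\cdot)$ as expectations over the joint law, averaging your per-$\mathbf v_1$ identity over $\mathbf V_1$ (with $\boldsymbol\Sigma_N$ constant) yields exactly the stated matrix identity. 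Second, the complex-Gaussian normalization you flag is benign: with the circularly symmetric density proportional to $\exp(-\mathbf n^\dagger\boldsymbol\Sigma_N^{-1}\mathbf n)$ and the conjugate Wirtinger gradient, no factor of $2$ or $1/2$ survives, which a scalar sanity check confirms: for $V_2\sim\mc{CN}(0,\sigma_v^2)$ and $N\sim\mc{CN}(0,\sigma_n^2)$ one gets $\mathbf J(Y)=1/(\sigma_v^2+\sigma_n^2)$, and $\sigma_n^2-\sigma_n^4/(\sigma_v^2+\sigma_n^2)=\sigma_v^2\sigma_n^2/(\sigma_v^2+\sigma_n^2)$, which is indeed $\mathrm{mmse}(V_2|Y)$.
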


From the MMSE  of Gaussian random vectors~\cite{elGamal:book},
\begin{align}
\mathbf{Y} = \mathrm{E}[\mathbf{Y}|\mathbf{X}_{\mathcal{S}^c}]+\mathbf{Z}_{\mathcal{S}^c} = \sum_{k\in \mathcal{S}^c}\mathbf{G}_{k}\mathbf{X}_{k} +\mathbf{Z}_{\mathcal{S}^c},
\end{align}
where $\mathbf{G}_{k} = \dv \Sigma_{\dv y | \dv x_{\mc{S}^c}}  \mathbf{H}^{\dagger}_{k}\mathbf{\Sigma}_{k}^{-1}$ and $\mathbf{Z}_{\mathcal{S}^c}\sim\mathcal{CN}(\mathbf{0},\dv \Sigma_{\dv y | \dv x_{\mc{S}^c}} )$,  and
\begin{align}
\dv \Sigma_{\dv  y | \dv x_{\mc{S}^c}}^{-1} =   \mathbf{\Sigma}_{\mb y}^{-1} +\sum_{k\in \mathcal{S}^c}\mathbf{H}_{k}^{\dagger}\mathbf{\Sigma}_{k}^{-1}\mathbf{H}_{k}.\label{eq:CovZ_xy}
\end{align}
Note that $\mathbf{Z}_{\mathcal{S}^c}$ is independent of $\mathbf{Y}_{\mathcal{S}^c}$ due to the orthogonality principle of the MMSE and its Gaussian distribution. Hence, it is also independent of $\mathbf{U}_{\mathcal{S}^c,q}$.
We have
\begin{align}
\text{mmse}
\left(
 \sum_{k\in \mathcal{S}^c}\mathbf{G}_{k}
\mathbf{X}_k
  \Big|\mathbf{Y}, \mathbf{U}_{\mathcal{S}^c,t},t 
  \right)  
&= \sum_{k\in \mathcal{S}^c}\mathbf{G}_{k}
\text{mmse}\left(\mathbf{X}_k
  |\mathbf{Y}, \mathbf{U}_{\mathcal{S}^c,t},t \right)\mathbf{G}_{k}^{\dagger}\label{eq:CrossTerms}\\
&= \dv \Sigma_{\dv y | \dv x_{\mc{S}^c}} \sum_{k\in \mathcal{S}^c}\mathbf{H}_{k}^{\dagger}
\left(\mathbf{\Sigma}_{k}^{-1}-\mathbf{\Omega}_{k} \right)\mathbf{H}_{k}\dv \Sigma_{\dv y | \dv x_{\mc{S}^c}} \label{eq:CovSubs},
\end{align}
where \eqref{eq:CrossTerms} follows since the cross terms are zero due to the Markov chain 
$(\mathbf{U}_{k,t},\mathbf{X}_k)\mkv \mathbf{Y} \mkv (\mathbf{U}_{\mathcal{K}/k,t},\mathbf{X}_{\mathcal{K}/k})$, 
see \cite[Appendix V]{Ekrem:IT:2014:OuterBoundCEO};
and \eqref{eq:CovSubs} follows due to \eqref{eq:covB} and $\mathbf{G}_{k}$.
Finally, 
\begin{align}
\mathbf{J}(\mathbf{Y}|\mathbf{U}_{S^c,t},t)&=\dv \Sigma_{\dv y | \dv x_{\mc{S}^c}}^{-1} 
- \dv \Sigma_{\dv y | \dv x_{\mc{S}^c}}^{-1} \text{mmse}
\left(
 \sum_{k\in \mathcal{S}^c}\mathbf{G}_{k}\mathbf{X}_{k}
  \Big|\mathbf{Y}, \mathbf{U}_{\mathcal{S}^c,t},t 
  \right)  \dv \Sigma_{\dv y | \dv x_{\mc{S}^c}}^{-1} \label{eq:LemmaBrujin}\\
&=\dv \Sigma_{\dv y | \dv x_{\mc{S}^c}}^{-1} -  
 \sum_{k\in \mathcal{S}^c}\mathbf{H}_{k}^{\dagger}
\left(\mathbf{\Sigma}_{k}^{-1}-\mathbf{\Omega}_{k,t} \right)\mathbf{H}_{k}\label{eq:MMSEsubs}\\
 &=\mathbf{\Sigma}_{\mb y}^{-1} + 
 \sum_{k\in \mathcal{S}^c}\mathbf{H}_{k}^{\dagger}
\mathbf{\Omega}_{k,t}\mathbf{H}_{k}\label{eq:MMSEsubs_2},
 \end{align}
where \eqref{eq:LemmaBrujin} is due to Lemma~\ref{lemm:Brujin};
\eqref{eq:MMSEsubs} is due to \eqref{eq:CovSubs}; and \eqref{eq:MMSEsubs_2} follows due to \eqref{eq:CovZ_xy}.

Then, averaging over the time sharing $T$ and letting $\bar{\mathbf{\Omega}}_k:= \sum_{t\in \mathcal{T}}p(t)\mathbf{\Omega}_{k,t}$. Then, we have from \eqref{eq:firstIneq}
\begin{align}
I(\mathbf{X}_k;\mathbf{U}_k|\mathbf{Y},T) 
&\geq -  \sum_{t\in \mathcal{T}}p(t) \log|\dv I-\mathbf{\Sigma}_{k}^{1/2}\mathbf{\Omega}_{k,t}\mathbf{\Sigma}_{k}^{1/2}|\nonumber \\
&\geq-\log|\dv I-\mathbf{\Sigma}_{k}^{1/2}\bar{\mathbf{\Omega}}_k\mathbf{\Sigma}_{k}^{1/2}|,\label{eq:logDetProp2}
\end{align}
where  \eqref{eq:logDetProp2} follows from the concavity of the log-det function and Jensen's inequality.

Similarly, from \eqref{eq:secondtIneq} and Jensen's Inequality we have
\begin{align}
I(\mathbf{Y};\mathbf{U}_{S^c}|T)
&\leq
\log\left| \sum_{k\in\mathcal{S}^{c}}\mathbf{\Sigma}_{\mb y}^{1/2}\mathbf{H}_{k}^{\dagger}
\bar{\mathbf{\Omega}}_{k}
\mathbf{H}_{k}\mathbf{\Sigma}_{\mb y}^{1/2}+\mathbf{I}\right|
\label{eq:secondtIneq_4}. 
\end{align}

The outer bound on $\mc{RI}_{\mathrm{DIB}}$ is obtained by applying~\eqref{eq:logDetProp2} and~\eqref{eq:secondtIneq_4} in~\eqref{eq:ComplexityRelevanceFunction}, noting that $\dv \Omega_k = \sum_{t\in \mathcal{T}}p(t) \dv \Omega_{k,t} \preceq\mathbf{\Sigma}_{k}^{-1}$ since $\mathbf{0} \preceq \dv \Omega_{k,t} \preceq\mathbf{\Sigma}_{k}^{-1}$, and taking the union over $\dv \Omega_k$ satisfying $\mathbf{0} \preceq \dv \Omega_k \preceq\mathbf{\Sigma}_{k}^{-1}$.

Finally, the proof is completed by noting that the outer bound is achieved with $T= \emptyset$ and multivariate Gaussian  distributions $p^{*}(\dv u_k|\dv x_k,t) = \mc{CN}(\dv x_k,  \mathbf{\Sigma}_{k}^{1/2}(\dv \Omega_k-\dv I)\mathbf{\Sigma}_{k}^{1/2}  ) $.
\qed

\subsection{Derivation of Algorithm~\ref{algo:BA_Gauss}}\label{app:BADIVGauss}
In this section, we derive the update rules in Algorithm~\ref{algo:BA_Gauss} and show that the Gaussian distribution is invariant to the update rules in Algorithm~\ref{algo:BA_DMC}, in line with Theorem~\ref{th:GaussSumCap}.

First, we recall that if $(\dv X_1,\dv X_2)$ are jointly Gaussian, then
\begin{align}
P_{\dv X_2|\dv X_1 = \dv x_1} = \mc{CN}(\boldsymbol\mu_{\dv x_2|\dv x_1},\dv\Sigma_{\dv x_2|\dv x_1}),
\end{align}
where
$\boldsymbol\mu_{\dv x_2|\dv x_1}:= \dv K_{\dv x_2|\dv x_1}\dv x_1$, with $\dv K_{\dv x_2|\dv x_1}:=\dv\Sigma_{\dv x_2,\dv x_1}\dv \Sigma_{\dv x_1}^{-1}$ .

Then, for $\dv Q^{(t+1)}$ computed as in \eqref{eq:Qstark} and \eqref{eq:Qstarall} from $\dv P^{(t)}$, which is a set of Gaussian distributions,  we have
\begin{align}
 Q^{(t+1)}_{\dv Y|\dv{u}_k} &= \mc {CN}(\boldsymbol \mu_{\dv y|\dv u_{k}^t} , \dv \Sigma_{\dv y|\dv u_{k}^t}),\nonumber\\
Q^{(t+1)}_{\dv Y|\dv{u}_{\mc K}} &= \mc {CN}(\boldsymbol \mu_{\dv y|\dv u_{\mc K}^t}, \dv \Sigma_{\dv y|\dv u_{\mc K}^t})\nonumber.
\end{align}

Next, we look at the update  $\dv P^{(t+1)}$ as in \eqref{eq:P_update} from given $\dv Q^{(t+1)}$. First, we have that $p(\dv u_{k}^t)$  is the marginal of $\dv U_{k}^t$, given by $\dv U_{k}^t\sim \mc{CN}(\dv 0,\dv \Sigma_{\dv u_{k}^t} )$  where $\dv \Sigma_{\dv u_{k}^t} = \dv A_{k}^t\dv \Sigma_{\dv x_k} \dv A_{k}^{t,H} + \dv\Sigma_{\dv z_{k}^t}$. 

Then, to compute $\psi_s(\dv u_k^t,\dv x_k)$, first, we note that 
\begin{align}
&E_{ U_{\mc K\setminus k}|x_k }[D_{\mathrm{KL}}(P_{ Y| U_{\mc K\setminus k},x_k}||Q_{Y| U_{\mc K\setminus k},u_k})]\label{eq:DistEq}
=D_{\mathrm{KL}}(P_{ Y, U_{\mc K\setminus k}|x_k}||Q_{ Y,U_{\mc K\setminus k}|u_k})\!-\!D_{\mathrm{KL}}(P_{ U_{\mc K\setminus k}|x_k}||Q_{ U_{\mc K\setminus k}|u_k}),
\end{align}
and that for two generic multivariate Gaussian distributions $P_1\sim\mc{CN}(\boldsymbol \mu_1,\dv \Sigma_1)$ and  $P_2\sim\mc{CN}(\boldsymbol \mu_2,\dv \Sigma_2)$ in $\mathds{C}^N$, the KL divergence is computed as 
\begin{align}
D_{\mathrm{KL}}(P_1\|P_2) =& \frac{1}{2}\left((\boldsymbol\mu_1-\boldsymbol\mu_2)^T\dv\Sigma_{2}^{-1}(\boldsymbol\mu_1-\boldsymbol\mu_2)
\right.
\left.
+\log |\dv\Sigma_2\dv\Sigma_1^{-1}| - d +\mathrm{tr}\{\dv \Sigma_2^{-1}\dv\Sigma_1\}\right).\label{eq:DistGaussi}
\end{align}
Applying \eqref{eq:DistEq} and \eqref{eq:DistGaussi} in \eqref{eq:P_update_psi} and noting that all involved distributions are Gaussian, it follows that $\psi_s(\dv u_k^t,\dv x_k)$ is a quadratic form. Then, since $p(\dv u_k^t)$ is Gaussian, the product $\log (p(\dv u_k^t)\exp(-\psi_s(\dv u_k^t,\dv x_k)))$ is also a quadratic form, and identifying constant, first and second order terms, we have
\begin{align}
\log p^{(t+1)}(\dv u_k|\dv x_k) 
&= Z(\dv x_k)+ (\dv u_k-\boldsymbol\mu_{\dv u_{k}^{t+1}| \dv x_k})^{H}\dv \Sigma_{\dv z_k^{t+1}}^{-1}
(\dv u_k-\boldsymbol\mu_{\dv u_{k}^{t+1}|\dv x_k}),
\end{align}
where $ Z(\dv x_k)$ is a normalization term independent of $\dv u_k$, and 
\begin{align}
\dv \Sigma_{\dv z_k^{t+1}}^{-1} &= \dv \Sigma_{\dv u_k^t}^{-1} + \dv K_{\dv y|\dv u_k^t}^H \dv \Sigma_{\dv y| \dv u_k^t}^{-1}\dv K_{\dv y|\dv u_k^t}
+\frac{1}{s}\dv K_{\dv y\dv u_{\mc K\setminus k}^t|\dv u_k^t}^H \dv \Sigma_{\dv y\dv u_{\mc K\setminus k}^t| \dv u_k^t}^{-1}\dv K_{\dv y\dv u_{\mc K\setminus k}^t|\dv u_k^t}
- \frac{1}{s} \dv K_{\dv u_{\mc K\setminus k}^t|\dv u_k^t}^H \dv \Sigma_{\dv u_{\mc K\setminus k}^t| \dv u_k^t}^{-1}\dv K_{\dv u_{\mc K\setminus k}^t|\dv u_k^t}\label{eq:SecondOrder}, \\
\boldsymbol\mu_{\dv u_{k}^{t+1}|\dv x_k}&=\dv\Sigma_{\dv z_{k}^{t+1}}\left(  \dv K_{\dv y|\dv u_k^t}^H\dv \Sigma_{\dv y|\dv u_k^t}^{-1}\boldsymbol\mu_{\dv y|\dv x_k}\right.
+\frac{1}{s}\dv K_{\dv y,\dv u_{\mc K\setminus k}^t|\dv u_k^t} \dv \Sigma_{\dv y,\dv u_{\mc K\setminus k}^t|\dv u_k^t}^{-1}\boldsymbol\mu_{\dv y,\dv u_{\mc K\setminus k}^t|\dv x_k}
\left.-\frac{1}{s} \dv K_{\dv u_{\mc K\setminus k}^t|\dv u_k^t}\dv \Sigma_{\dv u_{\mc K\setminus k}^t|\dv u_k^t}^{-1}\boldsymbol\mu_{\dv u_{\mc K\setminus k}^t|\dv x_k}\label{eq:FirstOrder}\right). 
\end{align}
This shows that $p^{(t+1)}(\dv u_k|\dv x_k)$ is a multivariate Gaussian distribution and that $\dv U_{k}^{t+1}|\{\dv X_k=\dv x_k\}$ is also a multivariate Gaussian  distributed as $ \mc {CN}(\boldsymbol\mu_{\dv u_{k}^{t+1}|\dv x_k},\dv\Sigma_{\dv z_{k}^{t+1}})$.

Next, we simplify \eqref{eq:SecondOrder} and \eqref{eq:FirstOrder} to obtain the update rules \eqref{eq:SigmaUpdate} and \eqref{eq:AUpdate}. From the matrix inversion lemma, similarly to \cite{journals/jmlr/ChechikGTW05}, for $(\dv X_1,\dv X_2)$ jointly Gaussian  we have 
\begin{align}\label{eq:InvLemma1}
\dv \Sigma_{\dv x_2|\dv x_1}^{-1} = \dv\Sigma_{\dv x_2}^{-1} + \dv K_{\dv x_1|\dv x_2}^{H}\dv\Sigma_{\dv x_1|\dv x_2}^{-1}\dv K_{\dv x_1|\dv x_2}.
\end{align}
Applying \eqref{eq:InvLemma1}, in \eqref{eq:SecondOrder} we have
\begin{align}
\dv \Sigma_{\dv z_k^{t+1}}^{-1} 
&=\dv \Sigma_{\dv u_k^t|\dv y}^{-1} +\frac{1}{s}\dv \Sigma_{\dv u_k^t|\dv y \dv u_{\mc K\setminus k}^t}^{-1} - \frac{1}{s} \dv \Sigma_{\dv u_k^t|\dv u_{\mc K\setminus k}^t}^{-1}   
\label{eq:SecondOrder_subLemma},\\ 
&=\left(1+\frac{1}{s}\right)\dv \Sigma_{\dv u_k^t|\dv y}^{-1}  - \frac{1}{s} \dv \Sigma_{\dv u_k^t|\dv u_{\mc K\setminus k}^t}^{-1},\label{eq:SecondOrder_subLemma_MK}
\end{align}
where  \eqref{eq:SecondOrder_subLemma_MK} is due to the Markov chain $\dv U_k\mkv \dv Y\mkv \dv U_{\mc K\setminus k }$.

Then, also from the matrix inversion lemma,  we have for jointly Gaussian $(\dv X_1,\dv X_2)$, 
\begin{align}\label{eq:InvLemma2}
\dv \Sigma_{\dv x_2|\dv x_1}^{-1} \dv \Sigma_{\dv x_2,\dv x_1} \dv\Sigma_{\dv x_1}^{-1} =\dv \Sigma_{\dv x_2}^{-1} \dv \Sigma_{\dv x_2,\dv x_1} \dv\Sigma_{\dv x_1|\dv x_2}^{-1}.
\end{align}
Applying \eqref{eq:InvLemma2} to \eqref{eq:FirstOrder}, for the first term in \eqref{eq:FirstOrder}, we have
\begin{align}
 \dv K_{\dv y|\dv u_k^t}^H\dv \Sigma_{\dv y|\dv u_k^t}^{-1}\boldsymbol\mu_{\dv y|\dv x_k}&=
\dv \Sigma_{\dv u_k^t|\dv y}^{-1}\dv\Sigma_{\dv y,\dv u_k^t}\dv \Sigma_{\dv y}^{-1}\boldsymbol\mu_{\dv y|\dv x_k}\\
&=
\dv \Sigma_{\dv u_k^t|\dv y}^{-1}\dv  A_{k}^{t}\dv \Sigma_{\dv x_k,\dv y}\dv \Sigma_{\dv y}^{-1}\dv\Sigma_{\dv y,\dv x_k}\dv \Sigma_{\dv x_k}^{-1}\dv x_k\nonumber\\
&=
\dv \Sigma_{\dv u_k^t|\dv y}^{-1}\dv  A_{k}^{t}(\dv I - \dv \Sigma_{\dv x_k|\dv y}\dv \Sigma_{\dv x_k}^{-1}) \dv x_k, \label{eq:eq:FirstOrder_eq_Invlemma1}
\end{align}
where $\dv\Sigma_{\dv y,\dv u_k^t}=\dv  A_{k}^{t}\dv \Sigma_{\dv x_k,\dv y}$; and \eqref{eq:eq:FirstOrder_eq_Invlemma1} is due to the definition of $\dv\Sigma_{\dv x_k |\dv y}$.
Similarly, for the second term in \eqref{eq:FirstOrder}, we have
\begin{align}
\dv K_{\dv y\dv u_{\mc K\setminus k}^t|\dv u_k^t} \dv \Sigma_{\dv y\dv u_{\mc K\setminus k}^t|\dv u_k^t}^{-1}\boldsymbol\mu_{\dv y,\dv u_{\mc K\setminus k}^t|\dv x_k}
&=\dv \Sigma_{\dv u_k^t}^{-1}\dv\Sigma_{\dv y\dv u_{\mc K\setminus k}^t,\dv u_k}\dv \Sigma_{\dv y\dv u_{\mc K\setminus k}^t|\dv u_k^t}^{-1}\boldsymbol\mu_{\dv y\dv u_{\mc K\setminus k}^t|\dv x_k}\\
&=
\dv \Sigma_{\dv u_k^t|\dv y\dv u_{\mc K\setminus k}^t}^{-1}\dv  A_{k}^{t}(\dv I - \dv \Sigma_{\dv x_k|\dv y\dv u_{\mc K\setminus k}^t}\dv \Sigma_{\dv x_k}^{-1}) \dv x_k,\\
&=
\dv \Sigma_{\dv u_k^t|\dv y}^{-1}\dv  A_{k}^{t}(\dv I - \dv \Sigma_{\dv x_k|\dv y}\dv \Sigma_{\dv x_k}^{-1}) \dv x_k, \label{eq:eq:FirstOrder_eq_MK_chain}
\end{align}
where we use $\dv\Sigma_{\dv u_k^t,\dv y\dv u_{\mc K\setminus k}^t}=\dv  A_{k}^t\dv \Sigma_{\dv x_k,\dv y\dv u_{\mc K\setminus k}^t}$; and \eqref{eq:eq:FirstOrder_eq_MK_chain} is due to the Markov chain $\dv U_k\mkv \dv Y \mkv \dv U_{\mc K\setminus k }$.

For the third term in \eqref{eq:FirstOrder},
\begin{align}
\dv K_{\dv u_{\mc K\setminus k}^t|\dv u_k^t}& \dv \Sigma_{\dv u_{\mc K\setminus k}^t|\dv u_k^t}^{-1}\boldsymbol\mu_{\dv u_{\mc K\setminus k}^t|\dv x_k}
=
\dv \Sigma_{\dv u_k^t|\dv u_{\mc K\setminus k}^t}^{-1}\dv  A_{k}^{t}(\dv I - \dv \Sigma_{\dv x_k|\dv u_{\mc K\setminus k}^t}\dv \Sigma_{\dv x_k}^{-1}) \dv x_k.
\end{align}

Equation \eqref{eq:AUpdate} follows by noting that $\boldsymbol\mu_{\dv u_{k}^{t+1}|\dv x_k} = \dv A_{k}^{t+1}\dv x_k$, and that from \eqref{eq:FirstOrder} $\dv A_{k}^{t+1}$ can be identified as  in \eqref{eq:AUpdate}. 

Finally, note that due to \eqref{eq:testChan}, $\dv\Sigma_{\dv u_k^t|\dv y}$ and $\dv\Sigma_{\dv u_k^t|\dv u_{\mc K\setminus k}^t}$ are given as in \eqref{eq:Cov_ux} and \eqref{eq:Cov_uus}, where $\dv\Sigma_{\dv x_k|\dv y}=\dv\Sigma_{k}$ and $\dv\Sigma_{\dv x_k|\dv u_{\mc K\setminus k}^t}$ follows from its definition.
This completes the proof.\qed

\ifCLASSOPTIONcaptionsoff
  \newpage
\fi



%
%
%

\bibliographystyle{IEEEtran}
\bibliography{ref}

\end{document}